

\documentclass[10pt]{article}

\setlength{\parindent}{2em}
\topmargin=-0.45in      %
\evensidemargin=0in     %
\oddsidemargin=0in      %
\textwidth=6.5in        %
\textheight=9.0in       %
\headsep=0.25in         %
\usepackage[utf8]{inputenc} 
\usepackage[T1]{fontenc}    
\usepackage{hyperref}       
\usepackage{url}            
\usepackage{booktabs}       
\usepackage{amsfonts}       
\usepackage{nicefrac}       
\usepackage{microtype}      
\usepackage{comment}






\title{Discovering Potential Correlations via Hypercontractivity}



\author{Hyeji Kim\thanks{Coordinated Science Lab and Department of Electrical and Computer Engineering, University of Illinois at Urbana-Champaign, Urbana, IL 61801, USA; \texttt{hyejikim@illinois.edu} (H.K), \texttt{wgao9@illinois.edu} (W.G), \texttt{pramodv@illinois.edu} (P.V)}, \;\;
Weihao Gao$^*$, \;\;
Sreeram Kannan\thanks{Electrical Engineering Department, University of Washington; \texttt{ksreeram@uw.edu}},\;\;
 Sewoong Oh\thanks{Coordinated Science Lab and Department of Industrial and Enterprise Systems Engineering, University of Illinois at Urbana-Champaign, Urbana, IL 61801, USA; \texttt{swoh@illinois.edu}}, \;\;
 Pramod Viswanath$^*$\\ 
}
\date{}

\usepackage{caption}
\usepackage{subcaption}
\usepackage{graphicx}
\usepackage{hyperref}
\usepackage[cmex10]{amsmath}
\usepackage{graphicx}
\usepackage[numbers,sort&compress]{natbib}
\usepackage{verbatim, animate}
\usepackage{comment, amsmath, amsfonts, amsthm}
\usepackage[mathscr]{euscript}
\usepackage{epsfig}
\usepackage{amsfonts,amssymb,latexsym,amsmath, amsxtra,verbatim}
\usepackage[all]{xy}
\usepackage{psfrag}
\usepackage{xcolor}
\usepackage{lineno}
\usepackage{pst-pdf}
\usepackage{bbm}
\usepackage[utf8]{inputenc} 
\usepackage[T1]{fontenc}    
\usepackage{hyperref}       
\usepackage{url}            
\usepackage{booktabs}       
\usepackage{amsfonts}       
\usepackage{nicefrac}       
\usepackage{microtype}      
\usepackage{float}

\newtheorem{Theorem}{\textbf{Theorem}}

\newtheorem{lemma}{\textbf{lemma}}

\newtheorem{Proposition}{\textbf{Proposition}}

\newtheorem{Example}{\textbf{Example}}
\newtheorem{Remark}{\textbf{Remark}}

\newcommand{\Appendix}{Section}

\newcommand{\ba}{\bar{\alpha}}


\usepackage{xspace}
\usepackage{bbm}
%
%
%

%
%
%

\count255\catcode`@
\catcode`@=11
\chardef\mathlig@atcode\count255

\def\actively#1#2{\begingroup\uccode`\~=`#2\relax\uppercase{\endgroup#1~}}
\def\mathlig@gobble{\afterassignment\mathlig@next@cmd\let\mathlig@next= }
\def\mathlig@delim{\mathlig@delim}
\def\mathlig@defcs#1{\expandafter\def\csname#1\endcsname}
\def\mathlig@let@cs#1#2{\expandafter\let\expandafter#1\csname#2\endcsname}
\def\mathlig@appendcs#1#2{\expandafter\edef\csname#1\endcsname{\csname#1\endcsname#2}}

\def\mathlig#1#2{\mathlig@checklig#1\mathlig@end\mathlig@defcs{mathlig@back@#1}{#2}\ignorespaces}


\def\mathlig@checklig#1#2\mathlig@end{%
 \expandafter\ifx\csname mathlig@forw@#1\endcsname\relax
 \expandafter\mathchardef\csname mathlig@back@#1\endcsname=\mathcode`#1%
 \mathcode`#1"8000\actively\def#1{\csname mathlig@look@#1\endcsname}%
 \mathlig@dolig#1\mathlig@delim
\fi
\mathlig@checksuffix#1#2\mathlig@end
}

\def\mathlig@checksuffix#1#2\mathlig@end{%
\ifx\mathlig@delim#2\mathlig@delim\relax\else\mathlig@checksuffix@{#1}#2\mathlig@end\fi
}
\def\mathlig@checksuffix@#1#2#3\mathlig@end{%
\expandafter\ifx\csname mathlig@forw@#1#2\endcsname\relax\mathlig@dosuffix{#1}{#2}\fi
\mathlig@checksuffix{#1#2}#3\mathlig@end
}


\def\mathlig@dosuffix#1#2{%
\mathlig@appendcs{mathlig@toks@#1}{#2}%
\mathlig@dolig{#1}{#2}\mathlig@delim
}


\def\mathlig@dolig#1#2\mathlig@delim{%
 \mathlig@defcs{mathlig@look@#1#2}{%
 \mathlig@let@cs\mathlig@next{mathlig@forw@#1#2}\futurelet\mathlig@next@tok\mathlig@next}%
 \mathlig@defcs{mathlig@forw@#1#2}{%
  \mathlig@let@cs\mathlig@next{mathlig@back@#1#2}%
  \mathlig@let@cs\checker{mathlig@chck@#1#2}%
  \mathlig@let@cs\mathligtoks{mathlig@toks@#1#2}%
  \expandafter\ifx\expandafter\mathlig@delim\mathligtoks\mathlig@delim\relax\else
  \expandafter\checker\mathligtoks\mathlig@delim\fi
  \mathlig@next
 }%
 \mathlig@defcs{mathlig@toks@#1#2}{}%
 \mathlig@defcs{mathlig@chck@#1#2}##1##2\mathlig@delim{%
  \ifx\mathlig@next@tok##1%
   \mathlig@let@cs\mathlig@next@cmd{mathlig@look@#1#2##1}\let\mathlig@next\mathlig@gobble
  \fi 
  \ifx\mathlig@delim##2\mathlig@delim\relax\else
   \csname mathlig@chck@#1#2\endcsname##2\mathlig@delim
  \fi
 }%
%
 \ifx\mathlig@delim#2\mathlig@delim\else
  \mathlig@defcs{mathlig@back@#1#2}{\csname mathlig@back@#1\endcsname #2}%
 \fi
}%

\catcode`@\mathlig@atcode

\newcommand{\muspace}{\mspace{1mu}}

\DeclareRobustCommand{\scond}{\mathchoice{\muspace\vert\muspace}{\vert}{\vert}{\vert}}
\mathlig{|}{\scond}

\DeclareRobustCommand{\discint}{\mathchoice{\mspace{-1.5mu}:\mspace{-1.5mu}}{\mspace{-1.5mu}:\mspace{-1.5mu}}{:}{:}}
\mathlig{::}{\discint}

%
%
%
%
%
%
%
%
%
%
%
%
%


\newcommand{\Xcal}{\mathcal{X}}
\newcommand{\Ycal}{\mathcal{Y}}



\renewcommand{\Pr}{\mathscr{P}}

\def\a{\alpha}

\def\e{\epsilon}

\newcommand{\E}{\textsf{E}}
\let\P\relax
\newcommand{\P}{\textsf{P}}







\def\textiid{i.i.d.\@\xspace}
\newcommand\iid{\ifmmode\text{ i.i.d. } \else \textiid \fi}






\def\mathllap{\mathpalette\mathllapinternal}
\def\mathllapinternal#1#2{%
  \llap{$\mathsurround=0pt#1{#2}$}}

\def\clap#1{\hbox to 0pt{\hss#1\hss}}
\def\mathclap{\mathpalette\mathclapinternal}
\def\mathclapinternal#1#2{%
  \clap{$\mathsurround=0pt#1{#2}$}}




\let\oldstackrel\stackrel
\renewcommand{\stackrel}[2]{\oldstackrel{\mathclap{#1}}{#2}}




\renewcommand{\hbar}{h\mathllap{\overline{\vphantom{h}\hphantom{\rule{4.6pt}{0pt}}}\mspace{0.77mu}}}

\catcode`~=11 
\newcommand{\urltilde}{\kern -.06em\lower -.06em\hbox{~}\kern .02em}
\catcode`~=13 

\hyphenation{Gauss-ian}
\hyphenation{qua-dra-tic}
\hyphenation{Vis-wa-nath}
\hyphenation{non-trivial}
\hyphenation{multi-letter}
\hyphenation{Gauss-ian}
\hyphenation{super-posi-tion}
\hyphenation{de-cod-er}
\hyphenation{Nara-yan}
\hyphenation{multi-message}
\hyphenation{Dimi-tris}
\hyphenation{Pol-ty-rev}
\hyphenation{multi-cast}
\hyphenation{multi-user}
\hyphenation{multi-plex-ing}
\hyphenation{bi-directional}
\hyphenation{comput}

\newcommand{\indep}{\rotatebox[origin=c]{90}{$\models$}}

\newcommand{\defMI}{\sup_{U: U-X-Y, I(U;X) > 0} \frac{I(U;Y)}{I(U;X)}}

\newcommand\numberthis{\addtocounter{equation}{1}\tag{\theequation}}

\usepackage{float}

\begin{document}

\maketitle
\begin{abstract}
Discovering a correlation from one variable to another variable is of fundamental scientific and practical interest. While existing correlation measures are suitable for discovering {\em average} correlation, they fail to discover hidden or {\em potential} correlations.
 To bridge this gap, (i) we postulate a~set of natural axioms that we expect a measure of potential correlation to satisfy; (ii) we show  that the {\em rate} of information bottleneck, i.e., the {\em hypercontractivity}  coefficient, satisfies all the proposed axioms; (iii) we provide a novel estimator to estimate the hypercontractivity coefficient from samples; and (iv)  we provide numerical experiments  demonstrating that this proposed estimator discovers potential correlations among various indicators of WHO datasets, is robust in discovering gene interactions from gene expression time series data, and is statistically more powerful than the estimators for other correlation measures in binary hypothesis testing of canonical examples of potential correlations. 
 \end{abstract}


\section{Introduction}

Measuring the strength of an association between two random variables is a fundamental topic of broad scientific interest. Pearson's correlation coefficient \cite{Pearson1895} dates from over a century ago and has been generalized seven decades ago as maximal correlation (mCor) to handle nonlinear dependencies ~\cite{Hirschfeld1935,gebelein1941statistische,Renyi1959}. Novel correlation measures to identify different kinds of associations continue to be proposed in the literature; these include maximal information coefficient (MIC) \cite{ReshefEtAl2011} and distance correlation (dCor) \cite{Szekely--Rizzo--Bakirov2007}.
Despite the differences, a common theme of measurement of the empirical {\em average} dependence unites the different dependence measures. Alternatively, these are {\em factual} measures of dependence and their relevance is restricted when we seek a {\em potential} dependence of one random variable on another.
For instance, consider a hypothetical city with  very few smokers.
A standard measure of correlation on the historical data in this town on smoking and lung cancer will fail to discover the fact that smoking causes cancer,
since the average correlation is very small.
On the other hand, clearly, there is a potential 
 correlation between smoking and lung cancer; indeed applications of this nature abound in several scenarios in modern data science, including a recent one on genetic pathway discovery~\cite{Krishnaswamy14}.

Discovery of a potential correlation naturally leads one to ask for a measure of potential correlation that is statistically well-founded and addresses practical needs. Such is the focus of this work, where our proposed measure of potential correlation is  based on a novel interpretation of the {\em Information Bottleneck} (IB) principle \cite{Tishby99theinformation}. The IB principle has been used to address one of the fundamental tasks in supervised learning:
given samples $\{X_i,Y_i\}_{i=1}^n$,
how do we find a {\em compact} summary of a variable $X$ that is most {\em informative} in explaining  another variable $Y$. The output of the IB principle is a compact summary of $X$ that is most relevant to $Y$ and has a wide range of applications~\cite{dhillon2003adivisive,Bekkerman2003}.

We use this IB principle to create a measure of correlation based on the following intuition: if $X$ is (potentially) correlated with $Y$, then a relatively compact summary of $X$ can still be very informative about $Y$. In other words,
 the maximal ratio of how informative a summary can be in explaining $Y$ to how compact a summary is with respect to $X$ is, conceptually speaking, an indicator of potential correlation from $X$ to $Y$. 
  Quantifying the compactness by $I(U;X)$ and the information by $I(U;Y)$ we consider the {\em rate of information bottleneck} as a measure of potential correlation:
\begin{eqnarray}
 	s (X;Y) \;\;  \equiv  \;\; \sup_{U\text{--}X\text{--}Y}  \frac{I(U;Y)}{I(U;X)} \;,
	\label{def:hypercontractivity}
\end{eqnarray}
where $U-X-Y$ forms a Markov chain and the supremum is over all summaries $U$ of $X$. 
This intuition is made precise in Section~\ref{sec:axiom}, 
where we formally define a natural notion of potential correlation (Axiom 6), and 
show that the rate of information bottleneck $s(X;Y)$ captures this potential correlation (Theorem~\ref{thm:axiom}) while other standard measures of correlation fail (Theorem \ref{prop-max}).

This ratio has only recently been identified as the {\em hypercontractivity} coefficient \cite{Anantharam--Gohari--Kamath--Nair2013}. 
Hypercontractivity has a distinguished and central role in a large number of technical arenas including  quantum physics~\cite{quantum,nelson1973construction}, theoretical computer science~\cite{KKL,o2014analysis}, mathematics~\cite{Bonami1970,beckner1975inequalities,gross1975} and probability theory \cite{ahlswede1976spreading,mossel2013reverse}.
In this paper, we provide a novel interpretation to the hypercontractivity coefficient as a measure of potential correlation by demonstrating that it satisfies a natural set of axioms such a measure is expected to obey.

For practical use in discovering correlations,
the standard correlation coefficients are equipped with
corresponding natural sample-based estimators.
However, for hypercontractivity coefficient, estimating it from samples is
 widely acknowledged to be challenging,
especially for continuous random variables~\cite{private_correspondence,deepVIB, dropout2016}.
There is no existing algorithm to estimate the hypercontractivity coefficient in general \cite{private_correspondence},
and there is no existing algorithm for solving IB from samples either \cite{deepVIB, dropout2016}.
We provide a novel estimator of the hypercontractivity coefficient
-- the first of its kind -- by
bringing together the recent theoretical discoveries in
\cite{Anantharam--Gohari--Kamath--Nair2013,nair2014gaussian} of an alternate definition of hypercontractivity coefficient
as ratio of Kullback-Leibler divergences defined in \eqref{defKL-dicrete},
and recent advances in joint optimization (the max step in Equation~\ref{def:hypercontractivity}) and estimating information measures from samples using importance sampling \cite{Gao16}.

Our {\bf main contributions} are the following:

\begin{itemize}
	\item We postulate a set of natural axioms that a measure of potential correlation from $X$ to $Y$ should satisfy (Section~\ref{axiom-subsection}).
	
	\item We show that $\sqrt{s(X;Y)}$, our proposed measure of potential correlation,
		satisfies all the axioms we postulate (Section~\ref{sec2.2}). 
		In comparison, we prove that existing standard measures of correlation
		not only fail to satisfy the proposed axioms,
		but also fail to capture  canonical potential correlations captured by $\sqrt{s(X;Y)}$ (Section~\ref{sec2.3}). Another natural candidate is mutual information, but it is not clear how to interpret the value of mutual information as it is unnormalized, unlike all other measures of correlation which are between zero and one. 

	\item Computation of the hypercontractivity coefficient from samples is known to be a challenging open problem. We in troduce a novel estimator to compute hypercontractivity coefficient from i.i.d.\ samples in a statistically consistent manner for continuous random variables, using ideas from importance sampling and kernel density estimation (Section ~\ref{sec:estimate}). 
			
	\item In a series of synthetic experiments, we show empirically that our estimator for the hypercontractivity coefficient is statistically more powerful in discovering a potential correlation than existing correlation estimators; a larger power means a larger successful detection rate for a fixed false alarm rate (Section~\ref{sec:power}). 
	
	\item We show applications of our estimator of hypercontractivity coefficient in two important datasets: In Section~\ref{sec:who}, we demonstrate that it discovers hidden potential correlations among various national indicators in WHO datasets, including how aid is potentially correlated with the income growth. 
	In Section~\ref{sec:dremi}, we consider the following gene pathway recovery problem:  
	we are given samples of four gene expressions time series. Assuming we know that gene A causes B, that B causes C, and that C causes D, the problem is to discover that these causations occur in the sequential order: A to B, and then B to C, and then C to D. We show empirically that the estimator of the hypercontractivity coefficient recovers this order accurately from a vastly smaller number of samples compared to other state-of-the art causal influence estimators.
		
	\end{itemize}

\section{Axiomatic approach to measure potential correlations}
\label{sec:axiom}

We propose a set of axioms that we expect a measure of potential correlation to satisfy. We then show that hypercontractivity coefficient, first introduced in~\cite{ahlswede1976spreading}, satisfies all the proposed axioms, hence propose hypercontractivity coefficient as a measure of potential correlation. 
We also show that other standard correlation coefficients and mutual information, on the other hand, violate the proposed axioms.


\subsection{Axioms for potential correlation}\label{axiom-subsection}
We postulate that a {\em measure of potential correlation} $\rho^*: \Xcal \times \Ycal \rightarrow [0,1]$ between two random variables $X \in \Xcal$ and $Y \in \Ycal$ should satisfy: 
\begin{enumerate}
	\item $\rho^*(X,Y)$ is defined for any pair of non-constant random variables $X$ and $Y$.
	\item $0 \le \rho^*(X,Y) \le 1$.
	\item $\rho^*(X,Y) = 0$ iff $X$ and $Y$ are statistically independent.
	\item For bijective Borel-measurable functions $f,g: \mathbb{R} \rightarrow \mathbb{R}$, $\rho^*(X,Y) = \rho^* (f(X), g(Y))$.
	\item If $(X,Y) \sim \mathcal{N}(\mu, \Sigma)$, then $\rho^* (X,Y) = |\rho|$, where $\rho$ is the Pearson correlation coefficient.
	\item $\rho^*(X,Y) = 1$ if there exists a subset $\Xcal_r \subseteq \Xcal$ such that for a pair of continuous random variables $(X,Y) \in \Xcal_r \times \Ycal$, 
	$Y = f(X)$ 
	for a Borel-measurable and non-constant continuous function  $f$. 
\end{enumerate}

\begin{figure}[!ht] 
	\begin{center}
 	\includegraphics[scale=0.5]{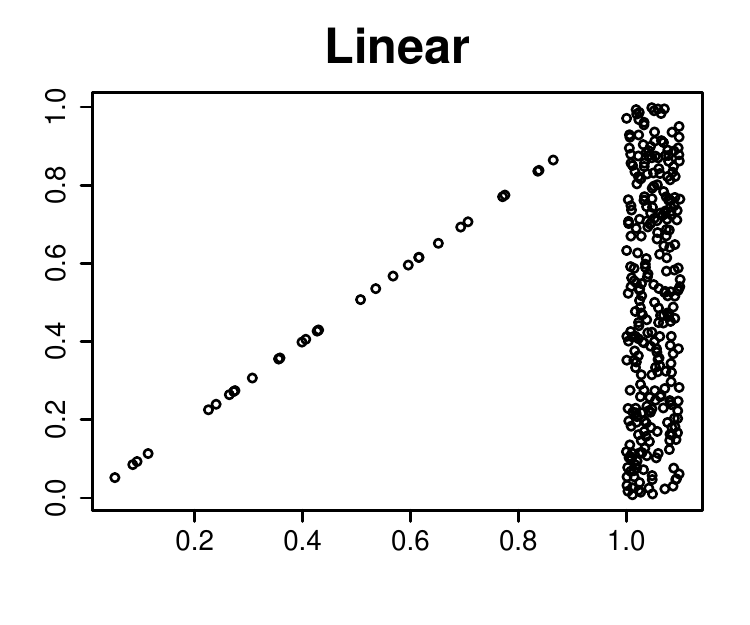}
	\put(-57,-2){$X$}
	\put(-115,50){$Y$}
	\hspace{0.5cm}
 	\includegraphics[scale=0.5]{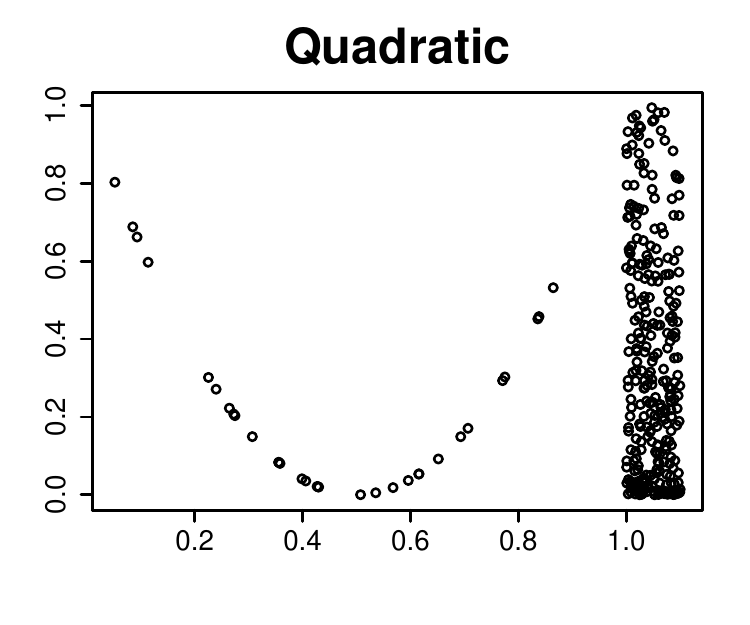}
	\put(-57,-2){$X$}
	\put(-115,50){$Y$}
 	\end{center}
	\caption{A measure of potential correlation should capture the rare correlation in $X\in[0,1]$ in these examples
			which satisfy Axiom 6 for a linear and a quadratic function, respectively.}
	\label{fig:axiom}
\end{figure}
Axioms 1-5 are identical to a subset of the celebrated axioms of R\'enyi in \cite{Renyi1959},
which ensure that the measure is properly normalized and invariant under bijective transformations, and recovers
the Pearson correlation for jointly Gaussian random variables.
R\'enyi's original axioms for a {\em measure of correlation}
in \cite{Renyi1959} included  Axioms 1-5 and also that the measure $\rho^*$ of correlation should satisfy
\begin{enumerate}
	\item[6'.] {\color{black} $\rho^*(X,Y) = 1$ if for Borel-measurable functions $f$ or $g$, $Y = f(X)$ or $X = g(Y)$. }
	\item[7'.] {\color{black} $\rho^*(X;Y) = \rho^*(Y;X)$.}
\end{enumerate}
The Pearson correlation violates a subset (3, 4, and 6') of R\'enyi's axioms.
Together with recent empirical successes in multimodal deep learning
(e.g.~\cite{ngiam2011multimodal,srivastava2012multimodal,andrew2013deep}),
R\'enyi's axiomatic approach has been a major justification of
Hirschfeld-Gebelein-R\'enyi (HGR) 
maximal correlation coefficient defined as  ${\rm mCor}(X,Y) := \sup_{f,g} \E[f(X)g(Y)]$,
which satisfies all R\'enyi's axioms \cite{Hirschfeld1935}.
Here, the supremum is over all measurable functions
with  $\E[f(X)] = \E[g(Y)] = 0$
and   $ \E[f^2(X)] =\E[g^2(Y)] = 1$.
However, maximal correlation is not the only measure satisfying all of R\'enyi's axioms, as we show in the following.
\begin{Proposition}
	\label{pro:renyi}\textnormal{
	For any function $F:[0,1]\times[0,1]\to[0,1]$ satisfying
	$F(x,y)=F(y,x)$, $F(x,x)=x$, and $F(x,y) = 0$ only if $xy = 0$,
	the symmetrized $F(\sqrt{s(X;Y)},\sqrt{s(Y;X)})$ satisfies
	 all R\'enyi's axioms.
}\end{Proposition}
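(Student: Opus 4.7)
The plan is to verify each of R\'enyi's axioms (1--5, 6', 7') in turn for $\rho^*(X,Y) := F(\sqrt{s(X;Y)}, \sqrt{s(Y;X)})$, leveraging the fact that $s(X;Y)$ (and by identical arguments $s(Y;X)$) satisfies each of the corresponding hypercontractivity properties established in Theorem~\ref{thm:axiom}, and then using the three minimal conditions on $F$ to transfer those properties through the symmetrization. Axioms 1, 2, and 4 are inherited directly: $\sqrt{s(X;Y)}$ and $\sqrt{s(Y;X)}$ lie in $[0,1]$ for any non-constant pair and are invariant under bijective Borel transformations applied componentwise, so $F$ is well-defined, lies in $[0,1]$ by hypothesis, and is bijection-invariant. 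Axiom 5 follows from the classical fact $s(X;Y)=s(Y;X)=\rho^2$ for jointly Gaussian $(X,Y)$, combined with $F(x,x)=x$, giving $F(|\rho|,|\rho|)=|\rho|$. Axiom 7' is immediate from $F(x,y)=F(y,x)$.

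The two slightly more delicate axioms are 3 and 6'. For Axiom 3, note that $s(X;Y)=0$ iff $X \indep Y$ iff $s(Y;X)=0$, so either both of $\sqrt{s(X;Y)},\sqrt{s(Y;X)}$ vanish (giving $F(0,0)=0$ by $F(x,x)=x$) or both are strictly positive (giving $F>0$ by the hypothesis that $F(x,y)=0$ only if $xy=0$); either way, $\rho^*(X,Y)=0$ iff $X \indep Y$. For Axiom 6', suppose $Y=f(X)$ for a Borel-measurable $f$. Plugging $U=Y$ into the sup defining $s(X;Y)$ (the chain $U-X-Y$ holds trivially) gives the ratio $H(Y)/I(X;Y)=1$ since $I(X;Y)=H(Y)$; combined with the universal bound $s\le 1$, this forces $s(X;Y)=1$. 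The same choice $U=Y$ in $s(Y;X)$ yields $I(X;Y)/H(Y)=1$, so $s(Y;X)=1$, and then $F(1,1)=1$ by $F(x,x)=x$. The case $X=g(Y)$ is symmetric.

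The main obstacle is not in the per-axiom checks themselves, which are elementary, but in relying on the underlying properties of $s$ --- in particular the ``only if'' half of Axiom 3 and the closed-form Gaussian value $s=\rho^2$. Both are established in the references underlying Theorem~\ref{thm:axiom}; granted these, Proposition~\ref{pro:renyi} reduces to the mechanical verification above, and the three hypotheses on $F$ are seen to be precisely what is needed for the symmetrization to preserve each of R\'enyi's axioms without introducing new obstructions.
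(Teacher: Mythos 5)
Your proposal is correct and follows essentially the same route as the paper's own proof: each R\'enyi axiom is checked by transferring the corresponding property of $s(X;Y)$ and $s(Y;X)$ (from Theorem~\ref{thm:axiom} and the Gaussian identity $s=\rho^2$) through the three hypotheses on $F$, with Axiom 6' handled by the same witness $U=Y=f(X)$ that the paper uses (the paper just splits the discrete case, where both mutual informations equal $H(f(X))$, from the continuous case, where both are infinite). No substantive differences.
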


This follows from the fact that the hypercontractivity coefficient $\sqrt{s(X;Y)}$
satisfies all but the symmetry in Axiom 7' (Theorem~\ref{thm:axiom}), and it follows that a symmetrized version satisfies all axioms,
e.g.~$(1/2)(\sqrt{s(X;Y)}+\sqrt{s(Y;X)})$ and $(s(X;Y)s(Y;X))^{1/4}$. 
A formal proof is provided in \Appendix~\ref{sec:renyi_proof}.

From the original R\'enyi's axioms, for a potential correlation measure,
 we remove Axiom 7' that ensures symmetry,
as directionality is fundamental in measuring the potential correlation from $X$ to $Y$.
We further replace Axiom 6' by Axiom 6,
as a variable $X$ has a full potential to be correlated with $Y$ if
there exists a domain $\Xcal_r$ such that $X$ and $Y$ are deterministically dependent and non-degenerate (i.e.~not a constant function), as
illustrated in Figure~\ref{fig:axiom} for a linear function and a quadratic function.

\subsection{The hypercontractivity coefficient satisfies all axioms}\label{sec2.2}
We show that the hypercontractivity coefficient defined in Equation~\eqref{def:hypercontractivity} satisfies all Axioms 1-6. 
Intuitively, $s(X;Y)$ 
measures  how much potential correlation $X$ has with $Y$.
For example, if $X$ and $Y$ are independent, then $s(X;Y)=0$ as $X$ has no correlation with $Y$ (Axiom 3).
By data processing inequality, it follows that it is a measure between zero and one (Axiom 2) and also invariant under bijective transformations (Axiom 4). For jointly
Gaussian variables $X$ and $Y$ with the Pearson correlation $\rho$, we can show that $s(X;Y)=s(Y;X)=\rho^2$. Hence, the squared-root of $s(X;Y)$ satisfies Axiom 5. In fact, $\sqrt{s(X;Y)}$ satisfies all desired axioms for potential correlation, and 
we make this precise in the following theorem whose proof is provided in \Appendix~\ref{sec:axiom_proof}.
\begin{Theorem}
	\label{thm:axiom}
	\textnormal{
	Hypercontractivity coefficient $\sqrt{s(X;Y)}$ satisfies Axioms 1-6.
	}
\end{Theorem}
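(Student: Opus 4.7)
The plan is to verify Axioms 1--6 one by one, leaning on the dual characterization
\[
 s(X;Y) \;=\; \sup_{r(x)\neq p(x)} \frac{D\!\left(r(y)\,\|\,p(y)\right)}{D\!\left(r(x)\,\|\,p(x)\right)}
\]
of Anantharam--Gohari--Kamath--Nair, in which $r(y)$ denotes the $Y$-marginal obtained by pushing $r(x)$ through the kernel $p(y|x)$. This dual form is more convenient than the Markov-chain definition for the independence axiom and the potential-correlation axiom.

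Axioms 1--5 follow from standard information-theoretic arguments. For Axiom 1, non-constancy of $X$ lets one take $U=\mathbf{1}\{X\in A\}$ with $0<P(X\in A)<1$, so the feasible set in the supremum is nonempty. Axiom 2 is the data-processing inequality along $U\text{--}X\text{--}Y$ combined with non-negativity of mutual information. Axiom 3 splits: if $X\perp Y$ then for any $U\text{--}X\text{--}Y$ one checks directly that $U\perp Y$, hence $I(U;Y)=0$ and $s(X;Y)=0$; conversely, if $s(X;Y)=0$, then applying the KL formula to $r(x)\propto p(x)\mathbf{1}\{x\in A\}$ for arbitrary measurable $A$ with $0<P(A)<1$ gives $P(Y\in\cdot\mid X\in A)=P(Y\in\cdot)$ for every such $A$, which forces $X\perp Y$. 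Axiom 4 follows because bijective Borel maps preserve mutual information, so both $I(U;X)$ and $I(U;Y)$, and hence their ratio, are unchanged under $X\mapsto f(X)$ and $Y\mapsto g(Y)$. For Axiom 5 I would invoke the classical Gaussian hypercontractivity computation (Nelson; see also \cite{Anantharam--Gohari--Kamath--Nair2013,nair2014gaussian}) giving $s(X;Y)=\rho^{2}$ for jointly Gaussian $(X,Y)$ with Pearson correlation $\rho$, so that $\sqrt{s(X;Y)}=|\rho|$.

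Axiom 6 is the substantive one, and I would attack it via the KL characterization. Given a positive-probability region $\Xcal_r$ on which $Y=f(X)$ with $f$ continuous and non-constant, first extract a subinterval $I_0\subseteq\Xcal_r$ on which $f$ is strictly monotone (such an interval exists for any continuous non-constant function on a set of positive Lebesgue measure, by the intermediate value theorem plus a covering argument). Fix $x_0$ in the interior of $I_0$ and a sequence of shrinking neighborhoods $B_n\subset I_0$ around $x_0$. Define mean-zero bumps $\phi_n$ supported on $B_n$ with $\|\phi_n\|_\infty\to 0$, and set
\[
 r_n(x) \;=\; p(x)\,\bigl(1+\phi_n(x)\bigr)\big/Z_n .
\]
Because $f|_{B_n}$ is a diffeomorphism onto $f(B_n)$ and $Y=f(X)$ deterministically on $B_n$, the push-forward $r_n(y)$ agrees with $p(y)$ outside $f(B_n)$, while inside $f(B_n)$ the standard change-of-variables yields $r_n(y)/p(y) = (1+\phi_n\circ f^{-1})/Z_n$. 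Since $p$-measure on $B_n$ equals the induced $p$-measure on $f(B_n)$ under this bijection,
\[
 D\!\left(r_n(y)\,\|\,p(y)\right) \;=\; D\!\left(r_n(x)\,\|\,p(x)\right),
\]
the KL ratio equals $1$, and together with Axiom 2 this yields $\sqrt{s(X;Y)}=1$.

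The main obstacle is making the push-forward step truly clean: (i) one needs a genuinely monotone subinterval of $f$ inside $\Xcal_r$ (requiring that $X$ puts density on an interval inside $\Xcal_r$, which can be arranged whenever $X$ is continuous), and (ii) one must shrink $B_n$ so that $f(B_n)$ is disjoint from $f\!\left(\Xcal_r\setminus B_n\right)$ and from the support of $Y$ arising from $\Xcal\setminus\Xcal_r$, so that $p(y)$ and $r_n(y)$ genuinely differ only on $f(B_n)$. If strict disjointness cannot be achieved for every $n$, one instead lower-bounds $s(X;Y)$ by the ratio of the divergences restricted to $f(B_n)$ and shows that the overlapping contribution to both numerator and denominator is $o(1)$ as $\|\phi_n\|_\infty\to 0$, so the ratio still tends to $1$. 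These two analytic details constitute the principal effort of the proof.
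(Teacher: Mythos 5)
Your treatment of Axioms 1--5 is essentially sound (for Axiom 5 the paper gives an explicit derivation via the Gaussian information-bottleneck result of \cite{Chechik--Amir--Naftali--Yair2005} plus an additive-Gaussian-noise construction, where you only cite the result, but that is a matter of self-containedness). The genuine gap is in Axiom 6. Your construction perturbs $p_x$ infinitesimally, $r_n(x)=p(x)(1+\phi_n(x))/Z_n$ with $\|\phi_n\|_\infty\to 0$, and tries to show the KL ratio tends to $1$. Two things go wrong. First, the disjointness you need in your step (ii) fails in exactly the scenarios Axiom 6 is designed to capture: in the canonical rare-correlation examples (Figure~\ref{fig:axiom}), the conditional law of $Y$ given $X\notin\Xcal_r$ coincides with that given $X\in\Xcal_r$, so $p(y)$ on $f(B_n)$ always carries a mass-$(1-\alpha)$ contribution from the independent part that no shrinking of $B_n$ removes. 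Second, and more fundamentally, your fallback cannot rescue this: in the vanishing-perturbation limit the KL ratio becomes a chi-squared ratio whose supremum is exactly $\mathrm{mCor}^2(X;Y)$ --- this is precisely Remark~\ref{rem:connect}, that restricting the optimization to a neighborhood of $p_x$ recovers squared maximal correlation --- and Theorem~\ref{prop-max} shows this equals $\alpha\,\mathrm{mCor}^2(X_r,Y)$, which is far from $1$ when $\alpha$ is small. The ``overlapping contribution'' is not negligible relative to the main terms: both numerator and denominator are themselves $O(\|\phi_n\|_\infty^2)$, and the overlap deflates the numerator by a factor on the order of $\alpha$ uniformly in $n$. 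So your construction provably cannot reach $1$ in general.

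The mechanism that does work --- and the one the paper uses --- is the opposite of a small perturbation. Take $U_s=f(X)$ when $X\in\Xcal_r$ and $U_s=\emptyset$ otherwise (equivalently, an $r_x$ concentrated on $\Xcal_r$, far from $p_x$). Then, as in Lemma~\ref{LB-general}, $I(U_s;X)=H(\alpha)+\alpha\, I(f(X);X\mid X\in\Xcal_r)$ and $I(U_s;Y)\ge \alpha\, I(f(X);f(X)\mid X\in\Xcal_r)$; for a continuous $X$ and a non-constant continuous $f$ both conditional mutual informations are infinite, so the finite overhead $H(\alpha)/\alpha$ is washed out and the ratio tends to $1$. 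The entire point of the hypercontractivity coefficient as a measure of \emph{potential} correlation is that it allows $r_x$ to move arbitrarily far from $p_x$; your proof stays in the infinitesimal neighborhood where the quantity degenerates to maximal correlation, which the paper shows does \emph{not} satisfy Axiom 6.
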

In particular, the hypercontractivity coefficient satisfies Axiom 6 for potential correlation,   
unlike other measures of correlation
(see Theorem~\ref{prop-max} 
 for examples).
If there is a potential for $X$ in a possibly rare regime in $\Xcal$
to be fully correlated with $Y$ such that $Y=f(X)$, then the hypercontractivity coefficient is maximum:
$s(X;Y)=1$. In the following section, we show that existing correlation measures, on the other hand, violate the proposed axioms. 

\subsection{Standard correlation coefficients violate the Axioms.}\label{sec2.3}

We analyze
 existing  measures of correlations under the scenario with potential correlation (Axiom 6), where we find 
that none of the existing correlation measures satisfy Axiom 6. 
Suppose $X$ and $Y$ are independent (i.e.~no correlation) in
a subset $\Xcal_d$ of the domain $\Xcal$,
  and allow $X$ and $Y$ to be arbitrarily correlated in the rest $\Xcal_r$ of the domain, such that
$\Xcal=\Xcal_d\cup \Xcal_r$.
We further assume that the independent part  is dominant and
the correlated part  is rare;
let $\alpha := \P(X\in \Xcal_r) $ and we consider the scenario when $\alpha$ is small.
A good measure of potential correlation is expected to capture the correlation in $\Xcal_r$
even if it is rare (i.e., $\alpha$ is small).
To make this task more challenging, we assume that 
 the conditional distribution of $Y|\{X \in \Xcal_r\}$ is the
 same as  $Y|\{X \notin \Xcal_r\}$.
Figure~\ref{fig:axiom} (of this section)  illustrates
sampled points for two examples from such a scenario
and more examples  are in
 Figure~\ref{sample-points}. 
Our main result is the analysis of HGR maximal correlation (mCor) \cite{Hirschfeld1935},
distance correlation (dCor) \cite{Szekely--Rizzo--Bakirov2007}, maximal information coefficients (MIC) \cite{ReshefEtAl2011},
which shows that these measures are vanishing with $\alpha$ even if the dependence 
in the rare regime is very high.  Suppose $Y|(X\in\Xcal_r)=f(X)$, then
all three correlation coefficients are vanishing as $\alpha$ gets small.
This in particular  violates Axiom 6.
The reason is that standard correlation coefficients  measure the {\em average correlation} whereas the hypercontractivity coefficient measures
the {\em potential correlation}.
The experimental comparisons on the power of these measures
confirm our analytical predictions
in Figure~\ref{fig:005} in Section~\ref{sec:exp}. 
The formal statement is below and the proof  is provided in \Appendix~\ref{sec:correlation_proof}.

\begin{Theorem}\label{prop-max}
\textnormal{Consider a pair of continuous random variables $(X,Y) \in \Xcal \times \Ycal$. Suppose $\Xcal$ is partitioned as  $\Xcal_r  \cup \Xcal_d  =\Xcal $
such that $P_{Y|X}( S|X\in \Xcal_r) =  P_{Y|X}( S|X\in \Xcal_d)$ for all $S\subseteq \Ycal$, 
and $Y$ is independent of $X$ for $X \in \Xcal_d$. Let 
$\alpha = \P\{X \in \Xcal_r\}$. The HGR maximal correlation coefficient is
\begin{align}
	{\rm mCor}(X,Y) \; \; = \;\;  \sqrt{\a} \;{\rm mCor}(X_r,Y) \;,
	\label{eq:mcor_bound}
\end{align}
the distance correlation coefficient is
\begin{align}
	\mathrm{dCor}(X,Y) \;\;=\;\; \alpha \; \mathrm{dCor}(X_r,Y) \;,
	\label{eq:dcor_bound}
\end{align}
the maximal information coefficient is upper bounded by
\begin{align}
	\mathrm{MIC}(X,Y) \;\; \le\;\;  \alpha \; \mathrm{MIC}(X_r,Y) \;,
	\label{eq:mic_bound}
\end{align}
  where $X_r$ is the random variable $X$ conditioned on the rare domain $X\in \Xcal_r$.
  }
  \end{Theorem}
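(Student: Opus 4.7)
The plan is to introduce the Bernoulli indicator $W = \mathbf{1}\{X\in\Xcal_r\}$ with $\P(W{=}1)=\alpha$ and rewrite the hypotheses as (i) $Y \perp W$, so that the conditional laws $Y_r$ (of $Y\cond W{=}1$) and $Y_d$ (of $Y\cond W{=}0$) both equal the marginal of $Y$, and (ii) $X \perp Y$ given $W{=}0$. All three bounds then follow by inserting this decomposition into the natural variational or spectral formula for each correlation measure.

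For the HGR maximal correlation, I would start from $\mathrm{mCor}(X,Y)=\sup_{f,g}\E[f(X)g(Y)]$ with zero-mean, unit-variance constraints, and condition on $W$. The $W{=}0$ slice factorizes by (ii) and then vanishes because $\E[g(Y)\cond W{=}0]=\E[g(Y)]=0$ by (i), so $\E[f(X)g(Y)]=\alpha\,\E[f(X_r)g(Y_r)]$. Setting $\mu_r=\E[f(X_r)]$, $\sigma_r^2=\mathrm{Var}(f(X_r))$, and $h=(f(X_r)-\mu_r)/\sigma_r$, the mean term drops out using $\E[g(Y_r)]=0$, giving $\alpha\sigma_r\,\E[h\,g(Y_r)]\le \alpha\sigma_r\,\mathrm{mCor}(X_r,Y)$. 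The normalization $\E[f(X)^2]=\alpha\E[f(X_r)^2]+(1-\alpha)\E[f(X_d)^2]=1$ forces $\alpha\sigma_r^2\le 1$, hence $\mathrm{mCor}(X,Y)\le \sqrt{\alpha}\,\mathrm{mCor}(X_r,Y)$. Equality is attained by the ``lifted'' choice $f(x)=\tilde f(x)\mathbf{1}\{x\in\Xcal_r\}/\sqrt{\alpha}$ for any near-optimizer $\tilde f$ of $\mathrm{mCor}(X_r,Y)$.

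For distance correlation, I would use Sz\'ekely's characteristic-function formula. Conditioning on $W$ and applying (ii),
\begin{equation*}
\phi_{X,Y}(s,t)=\alpha\,\phi_{X_r,Y_r}(s,t)+(1-\alpha)\,\phi_{X_d}(s)\,\phi_{Y_d}(t),
\end{equation*}
while (i) gives $\phi_Y=\phi_{Y_r}=\phi_{Y_d}$. Substituting $\phi_X=\alpha\phi_{X_r}+(1-\alpha)\phi_{X_d}$ into the cross-term cancels the $X_d$ contributions exactly, yielding
\begin{equation*}
\phi_{X,Y}(s,t)-\phi_X(s)\phi_Y(t)=\alpha\bigl[\phi_{X_r,Y_r}(s,t)-\phi_{X_r}(s)\phi_{Y_r}(t)\bigr].
\end{equation*}
Squaring and integrating against Sz\'ekely's singular weight yields $\mathrm{dCov}^2(X,Y)=\alpha^2\,\mathrm{dCov}^2(X_r,Y)$, so $\mathrm{dCov}(X,Y)=\alpha\,\mathrm{dCov}(X_r,Y)$; matching the $\mathrm{dVar}(Y)=\mathrm{dVar}(Y_r)$ denominator (immediate from (i)) and the $\mathrm{dVar}(X)=\mathrm{dVar}(X_r)$ denominator (the delicate point, requiring careful use of the rare/dominant structure) then upgrades this to the claimed $\mathrm{dCor}$ identity.

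For MIC, I would argue grid-by-grid. For any $k\times l$ grid yielding $(X_G,Y_G)$, the chain rule gives $I(X_G;Y_G)\le I(W,X_G;Y_G)=I(W;Y_G)+I(X_G;Y_G\cond W)$; the first term vanishes by (i), and the conditional splits as $\alpha\,I(X_G;Y_G\cond W{=}1)+(1-\alpha)\,I(X_G;Y_G\cond W{=}0)$, whose $W{=}0$ term is zero by (ii) and whose $W{=}1$ term equals $I(X_{r,G};Y_G)$ using $Y_r\stackrel{d}{=}Y$. Dividing by $\log\min(k,l)$ and taking the supremum over grids gives $\mathrm{MIC}(X,Y)\le \alpha\,\mathrm{MIC}(X_r,Y)$. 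The main obstacle will be the distance-correlation equality: the $\alpha$-scaling of $\mathrm{dCov}$ is clean from the characteristic-function cancellation, but the denominator identity $\mathrm{dVar}(X)=\mathrm{dVar}(X_r)$ is not automatic and is where the geometry of the rare/dominant decomposition of $X$ must be exploited.
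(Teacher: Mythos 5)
Your proposal takes essentially the same route as the paper's proof on all three parts: conditioning on the indicator of $\{X\in\Xcal_r\}$ and tracking the normalization constraints for $\mathrm{mCor}$ (your variance-decomposition bound plus the lifted achiever is a repackaging of the paper's shift-and-rescale chain of equalities), the characteristic-function cancellation for $\mathrm{dCor}$, and the grid-by-grid chain-rule bound $I(X_Q;Y_Q)\le I(W,X_Q;Y_Q)$ for $\mathrm{MIC}$. The one step you explicitly leave open --- the denominator identity $\mathrm{dVar}(X)=\mathrm{dVar}(X_r)$ needed to pass from the $\alpha$-scaling of $\mathrm{dCov}$ to the stated $\mathrm{dCor}$ equality --- is not supplied by the paper either: its proof stops at the identity $h_{XY}(s,t)-h_X(s)h_Y(t)=\alpha\bigl[h_{X_rY}(s,t)-h_{X_r}(s)h_{Y}(t)\bigr]$ and directly asserts the conclusion, so you have not missed any idea that the paper's argument actually contains.
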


Under the
rare/dominant scenario considered in Theorem~\ref{prop-max}, $s(X;Y) \ge \mathrm{mCor}^2(X;Y)$. It is well known that this inequality holds for any $X$ and $Y$~\cite{ahlswede1976spreading}. In particular, Theorem 3 in~\cite{Makur--Zheng2015} 
 shows that hypercontractivity coefficient is a natural extension of the popular HGR maximal correlation coefficient as follows.
 \begin{Remark}[Connection between $s(X;Y)$ and mCor$(X,Y)$~\cite{Makur--Zheng2015}]\label{rem:connect}
\textnormal{
The squared HGR maximal correlation is a special case of the hypercontractivity optimization in Equation \eqref{defKL-dicrete} restricted to
searching over a distribution $r(x)$ in a close neighborhood of $p(x)$.
}\end{Remark}

As $s(X;Y)$ searches over a larger space, it is always larger than or equal to mCor$^2(X;Y)$.
This gives an intuitive justification for using $s(X;Y)$ as a measure of potential influence;
we allow search over larger space, but properly normalized by the KL divergence, in a hope to find a potential distribution $r(x)$ that can influence $Y$ significantly. While hypercontractivity coefficient is a natural  extension of HGR maximal correlation coefficient, there is an important difference between hypercontractivity coefficient and HGR maximal correlation coefficient (and other correlation measures); hypercontractivity is directional. 
\begin{Remark}[Asymmetry of $s(X;Y)$]\textnormal{
Hypercontractivity coefficient is \emph{asymmetric} in $X$ and $Y$ while HGR maximal correlation, distance correlation, and MIC are \emph{symmetric}.}
\end{Remark}


Under the
rare/dominant scenario considered in Theorem~\ref{prop-max}, 
the hypercontracitivy coefficient $s(X;Y)$ is large because 
it 
measures the potential correlation from $X$ to $Y$.
On the other hand, inverse hypercontractivity coefficient $s(Y;X)$, which measures the potential correlation from $Y$ to $X$, 
is small
as there is no apparent
potential correlation from $Y$ to $X$.
This is made precise in the following proposition. 

\begin{Proposition}\label{YX-domrare}\textnormal{
Under the hypotheses of Theorem~\ref{prop-max},
 the hypercontractivity coefficient from $Y$ to $X$ is
\[
s(Y;X) = \a \; s(Y;X_r),
\]
where $X_r$ is the random variable $X$ conditioned on the rare domain $X\in \Xcal_r$. 
}
\end{Proposition}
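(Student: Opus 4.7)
The plan is to use the alternative KL-divergence characterization of the hypercontractivity coefficient mentioned in the paper, namely
\[
s(Y;X) \;=\; \sup_{r(y)\neq p(y)} \frac{D(r(x)\|p(x))}{D(r(y)\|p(y))},
\]
where, for each test marginal $r(y)$, the induced $X$-marginal is $r(x)=\int r(y)\,p(x|y)\,dy$ using the original reverse channel $p(x|y)$. The strategy is to show that the ratio inside the supremum factors as $\alpha$ times the corresponding ratio for the sub-problem $(Y,X_r)$, and then to pass to the supremum on both sides.

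First, I would pin down the reverse channel on the dominant set. Combining the two hypotheses of Theorem~\ref{prop-max}---that $Y\perp X$ on $\Xcal_d$ and that $P_{Y|X}(\cdot\,|\,X\in\Xcal_r)=P_{Y|X}(\cdot\,|\,X\in\Xcal_d)$---with the law of total probability forces $p(y\,|\,X\in\Xcal_d)=p(y\,|\,X\in\Xcal_r)=p(y)$. Hence $p(y|x)=p(y)$ for every $x\in\Xcal_d$, and Bayes' rule gives $p(x|y)=p(x)$ for all $x\in\Xcal_d$.

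Second, since $p(x|y)=p(x)$ on $\Xcal_d$, the induced measure $r(x)$ agrees with $p(x)$ on $\Xcal_d$ for every $r(y)$, and has total mass $\int_{\Xcal_r} r(x)\,dx = \alpha$ on $\Xcal_r$. Writing $r(x)=\alpha\,r_r(x)$ and $p(x)=\alpha\,p_r(x)$ on $\Xcal_r$ with normalized conditional densities, the $\Xcal_d$ contribution to $D(r(x)\|p(x))$ vanishes and the $\Xcal_r$ contribution collapses cleanly to $\alpha\,D(r_r\|p_r)$. Moreover $r_r$ is precisely the push-forward of $r(y)$ through the conditional reverse channel $p_r(x|y):=p(x|y)/\alpha = P(X=x\,|\,Y=y,\,X\in\Xcal_r)$.

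Finally, I would invoke the KL characterization in the sub-problem $(Y,X_r)$, whose $Y$-marginal is still $p(y)$ (by $p(y\,|\,X\in\Xcal_r)=p(y)$) and whose reverse channel is $p_r(x|y)$. Thus $D(r_r\|p_r)/D(r(y)\|p(y))$ is exactly the ratio supremized in the definition of $s(Y;X_r)$. Taking the supremum over $r(y)\neq p(y)$ of both sides of the factored identity yields
\[
s(Y;X) \;=\; \alpha\sup_{r(y)\neq p(y)} \frac{D(r_r\|p_r)}{D(r(y)\|p(y))} \;=\; \alpha\,s(Y;X_r).
\]
The only real subtlety is the first step: extracting $p(y|x)=p(y)$ on $\Xcal_d$ requires using both hypotheses jointly (neither alone suffices). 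After that, the rest is a clean partition-based decomposition of the KL divergence together with a renormalization that transparently factors out $\alpha$.
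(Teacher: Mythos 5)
Your proof is correct, but it takes a genuinely different route from the paper's. The paper works with the mutual-information form $s(Y;X)=\sup_{U-Y-X} I(U;X)/I(U;Y)$: it introduces the indicator $S=\mathbb{I}_{\{X\in\Xcal_r\}}$, shows $I(U;S)=0$ (since $Y\perp S$ and $U-Y-S$ is Markov) and $I(U;X\,|\,S=0)=0$, and concludes $I(U;X)=\a\, I(U;X\,|\,S=1)$ while $I(U;Y)=I(U;Y\,|\,S=1)$, so the ratio factors for every fixed $U$. You instead work with the KL-ratio characterization $\sup_{r(y)\neq p(y)} D(r(x)\|p(x))/D(r(y)\|p(y))$, first deriving $p(y|x)=p(y)$ on $\Xcal_d$ (correctly noting that this needs both hypotheses of Theorem~\ref{prop-max} jointly), hence $p(x|y)=p(x)$ on $\Xcal_d$, so the induced $r(x)$ coincides with $p(x)$ on the dominant set and the divergence localizes to $\Xcal_r$ with the factor $\a$ emerging from renormalizing the restricted densities. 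Both arguments are sound, and both hinge on the same structural fact that the correspondence between the optimization variables of the two problems is trivial (the identity on $r(y)$ in your case, the identity on $U$ in the paper's). Your version avoids auxiliary random variables and the slightly delicate conditional-independence bookkeeping behind $I(U;X\,|\,S=0)=0$, and it parallels the partition-based computations the paper already uses for mCor and dCor in Theorem~\ref{prop-max}; the paper's version stays within the $I(U;\cdot)$ formulation of Equation~\eqref{def:hypercontractivity} and mirrors the structure of Lemma~\ref{LB-general}, at the cost of relying on the equivalence with the KL form only implicitly. The one thing worth making explicit if you write yours up is the verification that $P(X\in\Xcal_r\,|\,Y=y)=\a$ for every $y$ (which follows from $p(y\,|\,X\in\Xcal_r)=p(y)$ via Bayes), since that is what guarantees $p_r(x|y)=p(x|y)/\a$ is a bona fide conditional channel and that $r_r$ is its push-forward of $r(y)$.
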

Proof is provided in \Appendix~\ref{sec:inverse_proof}.

\subsection{Mutual Information violates the Axioms} \label{sec2.4}
Beside standard correlation measures, another measure widely used to quantify the strength of dependence is mutual information. 
We can show that 
mutual information satisfies Axiom 6 if we replace 1 by $\infty$. 
However there are two key problems: 
\begin{itemize}
\item 
Practically, mutual information is \emph{unnormalized}, i.e., $I(X;Y) \in [0,\infty)$. Hence, it provides no absolute indication of the strength of the dependence.
\item 
Mathematically, we are looking for a quantity that \emph{tensorizes}, i.e., doesn't change when there are many i.i.d. copies of the same pair of random variables. 
\begin{Remark}[Tensorization property of $s(X;Y)$~\cite{Witsenhausen}]\textnormal{Hypercontractivity coefficient tensorizes, i.e,
\[
 s(X_1, ..., X_n; Y_1,..,Y_n) = s(X_1,Y_1), \  \text{ for i.i.d.\ } (X_i,Y_i), \ \ i=1,\cdots,n.
 \]}\end{Remark}
 On the other hand, mutual information is additive, i.e., 
\[
I(X_1, \cdots, X_n; Y_1, \cdots, Y_n) = n I(X_1;Y_1), \  \text{ for i.i.d.\ } (X_i,Y_i), \ \ i=1,\cdots,n.
\]
 Tensorizing quantities capture the strongest relationship among independent copies while additive quantities capture the sum. For instance, mutual information could be large because a small amount of information accumulates over many of the independent components of $X$ and $Y$ (when $X$ and $Y$ are high dimensional) while tensorizing quantities would rule out this scenario, where there is no strong dependence. When the components are not independent, hypercontractivity indeed pools information from different components to find the strongest direction of dependence, which is a desirable property.
\end{itemize}

 One natural way to normalize mutual information is by the log of the cardinality of the input/output alphabets~\cite{bell1962mutual}. One can interpret a popular correlation measure MIC as a similar effort for normalizing mutual information and is one of our baselines.

Given that other correlation measures and mutual information do not satisfy our axioms, a natural  question to ask is whether hypercontractivity is a unique solution that satisfies all the proposed axioms. In the following, we show that  the hypercontractivity coefficient is not the only one satisfying all the proposed axioms -- just as HGR correlation is not the only measure satisfying R\'enyi's original axioms.


\subsection{Hypercontractivity ribbon}\label{sec2.5} 

We 
show that a family of measures known as {\em hypercontractivity ribbon}, which includes hypercontractivity coefficient as a special case, satisfies all the axioms. The hypercontractivity ribbon~\cite{ahlswede1976spreading, nair2014equivalent} is a class of measures parametrized by $\alpha \ge 1$ as
\begin{eqnarray}
r_\alpha(X;Y) = \sup_{r(x,y) \neq p(x,y)} \frac{D(r(y)\|p(y))}{D(r(x)\|p(x)) + \alpha D(r(y|x)\|p(y|x))},
	\label{ribbon0}
\end{eqnarray}
where $D(r(x)||p(x))$ denotes the KL divergence of $r(x)$ and $p(x)$. 
 An alternative characterization of hypercontractivity ribbon in terms of mutual information is provided in~\cite{nair2014equivalent,nair2014gaussian};
 \begin{align}\label{ribbon-mi}
r_\alpha(X;Y)= \sup_{p(u|x,y)} \frac{I(U;Y)}{ I(U;X) + \alpha I(U;Y|X)}.
 \end{align}
from which we can see that hypercontractivity coefficient is a special case of hypercontractivity ribbon~\cite{Anantharam--Gohari--Kamath--Nair2013}:
\begin{align*}
s(X;Y) &= \lim_{\alpha \to \infty} r_\alpha(X;Y) = \lim_{\alpha \to \infty} s_\alpha(X;Y).
\end{align*}

\begin{Proposition}\textnormal{
The (re-parameterized) hypercontractivity ribbon $s_\alpha(X;Y) := (\alpha\,  r_\alpha(X;Y) - 1)/(\alpha\,-1)$, for $\alpha > 1$, satisfies Axioms 1-6.
}
\end{Proposition}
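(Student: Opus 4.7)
The plan is to verify each of the six axioms for $s_\alpha(X;Y)$, with the convention (in exact analogy with how Theorem~\ref{thm:axiom} treats $\sqrt{s(X;Y)}$) that Axiom 5 is applied to $\sqrt{s_\alpha(X;Y)}$. The strategy is two-fold: first establish sharp bounds on the ribbon $r_\alpha$ and read off Axioms 1--3 from them, then relate $r_\alpha$ back to the hypercontractivity coefficient $s$ to inherit Axioms 4--6 from Theorem~\ref{thm:axiom} essentially for free. Concretely, I would first prove $1/\alpha \le r_\alpha(X;Y) \le 1$ for every non-constant pair. The upper bound is the chain-rule inequality $I(U;X,Y) = I(U;X) + I(U;Y|X) \ge I(U;Y)$ combined with $\alpha \ge 1$; the lower bound comes from substituting $U = Y$ in the mutual-information form, which evaluates to $H(Y)/(I(X;Y) + \alpha H(Y|X)) \ge 1/\alpha$ because $\alpha I(X;Y) \ge I(X;Y) \ge 0$. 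These two bounds yield Axiom 1 (well-definedness on all non-constant pairs) and Axiom 2 ($0 \le s_\alpha \le 1$) via the reparameterization $s_\alpha = (\alpha r_\alpha - 1)/(\alpha-1)$.

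For Axiom 3 the aim is $s_\alpha = 0 \iff r_\alpha = 1/\alpha \iff X \perp Y$. If $X \not\perp Y$, the same $U = Y$ calculation gives $\alpha I(X;Y) > I(X;Y)$ strictly for $\alpha > 1$, so $r_\alpha > 1/\alpha$. Conversely, when $X \perp Y$, the identity
\[
I(U;Y|X) - I(U;Y) \;=\; I(X;Y|U) - I(X;Y) \;=\; I(X;Y|U) \;\ge\; 0
\]
forces $I(U;Y) \le I(U;Y|X)$ for every $U$, hence $I(U;Y)/(I(U;X) + \alpha I(U;Y|X)) \le 1/\alpha$ and the supremum equals $1/\alpha$. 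Axiom 4 (invariance under bijective Borel-measurable $f,g$) is immediate: every mutual-information term appearing in $r_\alpha$ is invariant under bijections of $X$ and $Y$, and the feasible set of conditionals $p(u|x,y)$ is in bijection with the feasible set for the transformed problem.

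For Axiom 5 I specialize to jointly Gaussian $(X,Y)$ with Pearson correlation $\rho$. By the Gaussian extremality result for the hypercontractivity ribbon~\cite{nair2014gaussian}, a jointly Gaussian $U$ attains the supremum, and solving the resulting finite-dimensional optimization yields the closed form $r_\alpha(X;Y) = ((\alpha-1)\rho^2 + 1)/\alpha$; substituting gives $s_\alpha(X;Y) = \rho^2$ and hence $\sqrt{s_\alpha(X;Y)} = |\rho|$. Finally, Axiom 6 follows from the key inequality $r_\alpha(X;Y) \ge s(X;Y)$: restricting the KL-divergence supremum in
\[
r_\alpha(X;Y) \;=\; \sup_{r(x,y) \neq p(x,y)} \frac{D(r(y) \| p(y))}{D(r(x) \| p(x)) + \alpha D(r(y|x) \| p(y|x))}
\]
to distributions satisfying $r(y|x) = p(y|x)$ kills the second denominator term and reduces the ratio to the defining expression for $s(X;Y)$. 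Under the hypothesis of Axiom 6, Theorem~\ref{thm:axiom} gives $s(X;Y) = 1$, so the sandwich $1 \le r_\alpha \le 1$ forces $r_\alpha = 1$ and thus $s_\alpha = 1$.

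The hard part will be Axiom 5: establishing the Gaussian extremality that underwrites the closed form for $r_\alpha$ on Gaussian inputs. Every other axiom reduces to short mutual-information or KL-divergence manipulations, but Axiom 5 requires reducing the supremum over all $p(u|x,y)$ to an optimization over a finite-dimensional Gaussian family, which is not automatic and calls for invoking (or re-deriving) the Gaussian extremizer result of~\cite{nair2014gaussian}.
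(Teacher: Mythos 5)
Your proposal is correct and follows essentially the same route as the paper's (terse) proof: axiom-by-axiom verification via mutual-information identities for Axioms 1--4, the Gaussian result of~\cite{nair2014gaussian} for Axiom 5, and the key inequality $r_\alpha(X;Y)\ge s(X;Y)$ combined with $r_\alpha\le 1$ for Axiom 6. The only cosmetic difference is that you obtain $r_\alpha\ge s$ by restricting the KL supremum to $r(y|x)=p(y|x)$, whereas the paper invokes monotonicity of $r_\alpha$ in $\alpha$ together with $s=\lim_{\alpha\to\infty}r_\alpha$ --- two equivalent routes to the same fact --- and your write-up additionally supplies the explicit bounds $1/\alpha\le r_\alpha\le 1$ and the independence argument that the paper dismisses as ``similar to the $s(X;Y)$ case.''
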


\begin{proof}
By definition, $s_\alpha(X;Y)$ is defined for any pair of non-constant random variables (Axiom 1) and is between 0 and 1 by data processing inequality (Axiom2). We can show that $s_\alpha(X;Y)$ satisfies Axioms 3 and 4, in a similar way to show $s(X;Y)$ satisfies Axioms 3 and 4.  Also, $s_\alpha(X;Y) = \rho^2$ for a jointly Gaussian $X$,$Y$ with Pearson correlation $\rho$~\cite{nair2014gaussian} (Axiom 5). Finally, $s_\alpha(X;Y)$ satisfies Axiom 6 because $r_\alpha(X;Y)$ is non-increasing in $\a$, which implies that 
$s_\alpha(X;Y) = r_\alpha(X;Y) = 1$ if $s(X;Y) = 1$.
\end{proof}



Although hypercontracitivy ribbon satisfies all axioms, a few properties of the hypercontractivity coefficient make it
 more attractive than hypercontractivity ribbon for practical use; hypercontractivity coefficient can be efficiently estimated from samples (see Section~\ref{sec:estimate}). Hypercontractivity coefficient is a natural extension of the popular HGR maximal correlation coefficient (Remark~\ref{rem:connect}).  

\subsection{Multidimensional $X$ and $Y$}\label{sec2.6}
\label{sec:multi}

In this section, we discuss potential correlation of multidimensional $X$ and $Y$. 
While most of the correlation coefficients, including  the hypercontractivity coefficient, are
well-defined for multi-dimensional $X$ and $Y$, the axioms are specific to univariate $X$ and $Y$. To bridge this gap, we propose replacing Axiom 5, as this is the only axiom specific to univariate random variables.

\begin{enumerate}
	\item[]Axiom 5'.  If $(X,Y) \sim \mathcal{N}\left(\mu,
		\Sigma=\begin{bmatrix} \Sigma_X & \Sigma_{XY}\\
		\Sigma_{YX} & \Sigma_Y
		\end{bmatrix}\right)$, then $\rho^*(X,Y) = \| \Sigma_X^{-1/2} \Sigma_{XY}\Sigma_Y^{-1/2} \|$, where $\|\cdot\| $ is the spectral norm of a matrix.
\end{enumerate}
This recovers the original Axiom 5 when restricted to univariate $X$ and $Y$. 
This naturally generalizes both R\'enyi's axioms and the proposed potential correlation axioms to multidimensional  $X$ and $Y$.

\begin{Proposition}
	\label{propo:multi}\textnormal{
	Axiom 5', together with original R\'enyi's Axioms 1-4, 6', and 7', recovers
	maximal correlation (mCor)
	as a measure satisfying all Axioms even in this multi-dimensional case.
	Axiom 5', together with our proposed Axioms 1-4, and 6, recovers the  hypercontractivity coefficient 	$\sqrt{s(X;Y)}$  
	as a measure satisfying all axioms.
}\end{Proposition}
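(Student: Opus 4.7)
The plan is to verify the stipulated axioms for each of the two candidate measures in the multidimensional setting, noting that every axiom except Axiom 5' is phrased in a dimension-free manner and so inherits its justification from Theorem~\ref{thm:axiom} (for $\sqrt{s(X;Y)}$) and from the classical maximal-correlation literature (for mCor). Concretely, for $\sqrt{s(X;Y)}$ the proofs in Theorem~\ref{thm:axiom} of definedness, the $[0,1]$ bound, the equivalence $s=0$ iff independence, invariance under bijective Borel maps (now acting on the ambient Euclidean spaces of appropriate dimension), and the value on deterministic subsets all carry over with only cosmetic changes, since each statement is expressed purely in terms of the joint law and the KL-divergence-ratio definition, with no intrinsic reliance on the dimension. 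The analogous statements for mCor are classical. Therefore, in both cases only Axiom 5' requires a new argument.

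For maximal correlation, Axiom 5' is a classical consequence of the fact that, for jointly Gaussian $(X,Y)$, the optimizing pair $(f,g)$ in $\sup_{f,g}\E[f(X)g(Y)]$ (over centered, unit-variance functions) can be taken to be linear. The supremum then equals the largest canonical correlation, which by the definition of canonical correlations is the top singular value of $\Sigma_X^{-1/2}\Sigma_{XY}\Sigma_Y^{-1/2}$, namely its spectral norm.

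For $\sqrt{s(X;Y)}$, I would reduce to the univariate diagonal case via canonical correlation analysis and tensorization. Take the SVD $\Sigma_X^{-1/2}\Sigma_{XY}\Sigma_Y^{-1/2}=U\,\mathrm{diag}(\rho_1,\ldots,\rho_k)\,V^\top$ with $\rho_1\ge\cdots\ge\rho_k\ge 0$, and apply the linear bijections $\tilde X := U^\top\Sigma_X^{-1/2}X$ and $\tilde Y := V^\top\Sigma_Y^{-1/2}Y$. Then $(\tilde X,\tilde Y)$ is jointly Gaussian with identity marginal covariances and diagonal cross-covariance $\mathrm{diag}(\rho_1,\ldots,\rho_k)$, so the scalar pairs $(\tilde X_i,\tilde Y_i)$ are mutually independent, each with Pearson correlation $\rho_i$. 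By the multivariate version of Axiom 4 applied to these linear bijections, $s(X;Y)=s(\tilde X;\tilde Y)$. Combining the univariate Gaussian identity $s(\tilde X_i;\tilde Y_i)=\rho_i^2$ (Axiom 5 verified in Theorem~\ref{thm:axiom}) with tensorization of $s$ across independent blocks yields
\begin{equation*}
s(\tilde X;\tilde Y)\;=\;\max_{1\le i\le k} s(\tilde X_i;\tilde Y_i)\;=\;\max_i \rho_i^2\;=\;\rho_1^2,
\end{equation*}
and taking square roots gives $\sqrt{s(X;Y)}=\rho_1=\|\Sigma_X^{-1/2}\Sigma_{XY}\Sigma_Y^{-1/2}\|$.

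The main obstacle is the tensorization step across non-identical independent blocks. The paper records tensorization only for i.i.d.\ copies, whereas we need $s\bigl((X_1,\ldots,X_n);(Y_1,\ldots,Y_n)\bigr)=\max_i s(X_i;Y_i)$ for independent but non-identical pairs. This can be obtained in two short steps using the KL-ratio characterization $s(X;Y)=\sup_{r\neq p}D(r(y)\|p(y))/D(r(x)\|p(x))$: the lower bound $s\ge\max_i s(X_i;Y_i)$ comes from substituting single-coordinate perturbations $r(x_1,\ldots,x_n)=r_i(x_i)\prod_{j\neq i}p(x_j)$, under which KL divergences reduce to the $i$-th marginal divergences; and the matching upper bound is the standard product tensorization of the hypercontractivity ribbon established in~\cite{ahlswede1976spreading,nair2014equivalent}. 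Once this tensorization lemma is in hand, the remainder of the argument is a direct application of Axiom 4 and the univariate case.
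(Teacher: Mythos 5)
Your proposal is correct, but the key step --- verifying Axiom 5' --- is handled by a genuinely different route than the paper's. For the axioms other than 5' you do exactly what the paper does: observe that the arguments in Theorem~\ref{thm:axiom} (resp.\ R\'enyi's classical arguments for mCor) are dimension-free and carry over verbatim. For Axiom 5' and $\sqrt{s(X;Y)}$, the paper simply quotes the Gaussian Information Bottleneck analysis of Chechik et al., which says that $\min_U\{I(U;X)-\beta I(U;Y)\}$ departs from zero exactly at $\beta = 1/\|\Sigma_X^{-1/2}\Sigma_{XY}\Sigma_Y^{-1/2}\|^2$, giving the spectral-norm formula in one step; you instead whiten and rotate via the SVD to reduce to independent scalar Gaussian pairs with correlations $\rho_1\ge\cdots\ge\rho_k$, then combine the univariate identity $s=\rho_i^2$ with max-tensorization over independent blocks. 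Your route is more elementary in that it only needs the scalar Gaussian case already proved in Theorem~\ref{thm:axiom}, but it imports an extra ingredient the paper avoids: the tensorization identity $s\bigl((X_1,\ldots,X_k);(Y_1,\ldots,Y_k)\bigr)=\max_i s(X_i;Y_i)$ for independent but \emph{non-identical} pairs, which the paper's Remark records only for i.i.d.\ copies. You correctly flag this and your two-step justification (single-coordinate perturbations for the lower bound, the Ahlswede--G\'acs-type product tensorization for the upper bound) is sound, but it should be stated and cited as a separate lemma since the upper bound is not trivial. For mCor and Axiom 5', the paper gives a self-contained sandwich --- ${\rm mCor}\le\sqrt{s}=\|\Sigma_X^{-1/2}\Sigma_{XY}\Sigma_Y^{-1/2}\|$ from Remark~\ref{rem:connect} together with the lower bound from restricting to linear functions --- whereas you invoke the classical theorem that the optimizing $(f,g)$ for jointly Gaussian vectors may be taken linear; both are valid, but the paper's sandwich is cheaper once $\sqrt{s}$ has been computed, since it does not require the full linearity-of-optimizers result.
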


The second statement in the proposition follows from the analyses of the hypercontractivity coefficient of Gaussian distributions in \cite{Chechik--Amir--Naftali--Yair2005}. A formal proof is provided in \Appendix~\ref{sec:multi-proof}.

%

\subsection{Noisy, discrete, noisy and discrete potential correlations}\label{sec2.7}

In this section, we consider more general scenarios of potential correlation than the one in Axiom 6. We consider (i) noisy potential correlation where $Y = f(X) + Z$ for a Gaussian noise $Z$ for $(X,Y) \in \Xcal_r \times \Ycal$, (ii) discrete potential correlation, where $\Xcal_r = \{1,\cdots,k\}$, and (iii) noisy discrete potential correlation -- a random corruption model. 
For these three examples, we obtain a lower bound on $s(X;Y)$. 

\begin{Example}\label{example-noisy}
\textnormal{Suppose that for a pair of random variables $(X,Y) \in \Xcal \times \Ycal$, there exists a subset $\Xcal_r \subseteq \Xcal$ for which $\P\{X \in \Xcal_r\} = \a$ $(\a > 0)$, and for $(X,Y) \in \Xcal_r \times \Ycal$,  $(X,Y) \sim \mathcal{N}(\mathbf{0}, \Sigma)$, where 
\[
\Sigma = \begin{bmatrix}
1 &\rho \\
\rho &1
\end{bmatrix}.
\]
Then
\begin{align}\label{ex-noisy}
s(X;Y) \ge \frac{\log\frac{1}{1-\rho^2} + \log\frac{1}{1+\rho^2} }{\log\frac{1}{1-\rho^2} + \frac{H(\a)}{\a}}
\end{align}
Proof is in \Appendix~\ref{append-nr}.  
}\end{Example}

We now consider for discrete $(X,Y)$. We start with the case for which $X$ and $Y$ are perfectly correlated for $(X,Y) \in \Xcal_r \times \Ycal$.

\begin{Example}\label{LB-div}\textnormal{
Suppose that for a pair of discrete random variables $(X,Y) \in \Xcal \times \Ycal$, there exists a subset $\Xcal_r = \{1,2,\cdots,k\} \subseteq \Xcal$ for which $\P\{X \in \Xcal_r\} = \a$ $(\a > 0)$, and $X|\{X \in \Xcal_r\} \sim \textnormal{Unif}[1:k]$ and $Y = X$ for $X \in \Xcal_r$. Then,
\[
s(X;Y) \ge \frac{\log k}{\log k + \log (1/\alpha)}.
\]
The inequality holds by considering $r(x) = \mathbb{I}_{\{X = 1\}}$ in~\eqref{defKL-dicrete}.
}\end{Example}

We conjecture this lower bound is indeed tight for $\a \le 0.5$ based on numerical simulations. From this lower-bound, we can see the trade-off between $k$ and $\a$. As $k \to \infty$, the lower bounds approaches to 1. As $\a \to 1$, the lower bound approaches to 1. As $\a  \to 0$, the lower bound approaches to 0. In the following, we consider the case where $X$ and $Y$ are not perfectly correlated in $(\Xcal_r \times \Ycal)$ for discrete $(X,Y)$.  In particular, we consider a random corruption model for $(\Xcal_r \times \Ycal)$ and obtain a lower bound on $s(X;Y)$.

\begin{Example} \label{example-discrete-noisy}
\textnormal{
Suppose that for a pair of random variables $(X,Y) \in \Xcal \times \Ycal$, there exists a subset $\Xcal_r \subseteq \Xcal$ for which $\P\{X \in \Xcal_r\} = \a$ $(\a > 0)$, and for  $(X,Y) \in \Xcal_r \times \Ycal$,
\[
Y = \begin{cases}
X &\text{w.p.\ } 1-\frac{k}{k-1}\epsilon,\\
\text{Unif}[1:k] &\text{w.p.\ } \frac{1}{k-1}\epsilon
\end{cases}.
\]
Then
\begin{align}\label{noise-s}
s(X;Y) \ge \frac{(1-\e)\log k(1-\e) + \e \log k\e/(k-1)}{\log(k/\a)} = \frac{\log k -H_2(\e) - \e \log (k-1)}{\log(k/\a)}.
\end{align}
On the other hand,
\begin{align}\label{noise-rho}
\mathrm{mCor}^2(X;Y) = \a \left(1-\frac{k}{k-1}\e\right)^2, \ \ 0 \le \e \le \frac{k-1}{k}.
\end{align}
Proof is in \Appendix~\ref{append-ndr}.
}\end{Example}

In Figure~\ref{ex123}, we show plots of lower bounds on $s(X;Y)$ and mCor$(X;Y)$ in Examples~\ref{example-noisy}-\ref{example-discrete-noisy}; from these figures, we can see that $s(X;Y)$ increases as $\rho \to 1$ and $k \to \infty$. In comparison, mCor$(X;Y)$ remains small. 

\begin{figure}[!ht] 
\begin{center}
 \includegraphics[width=0.31\linewidth]{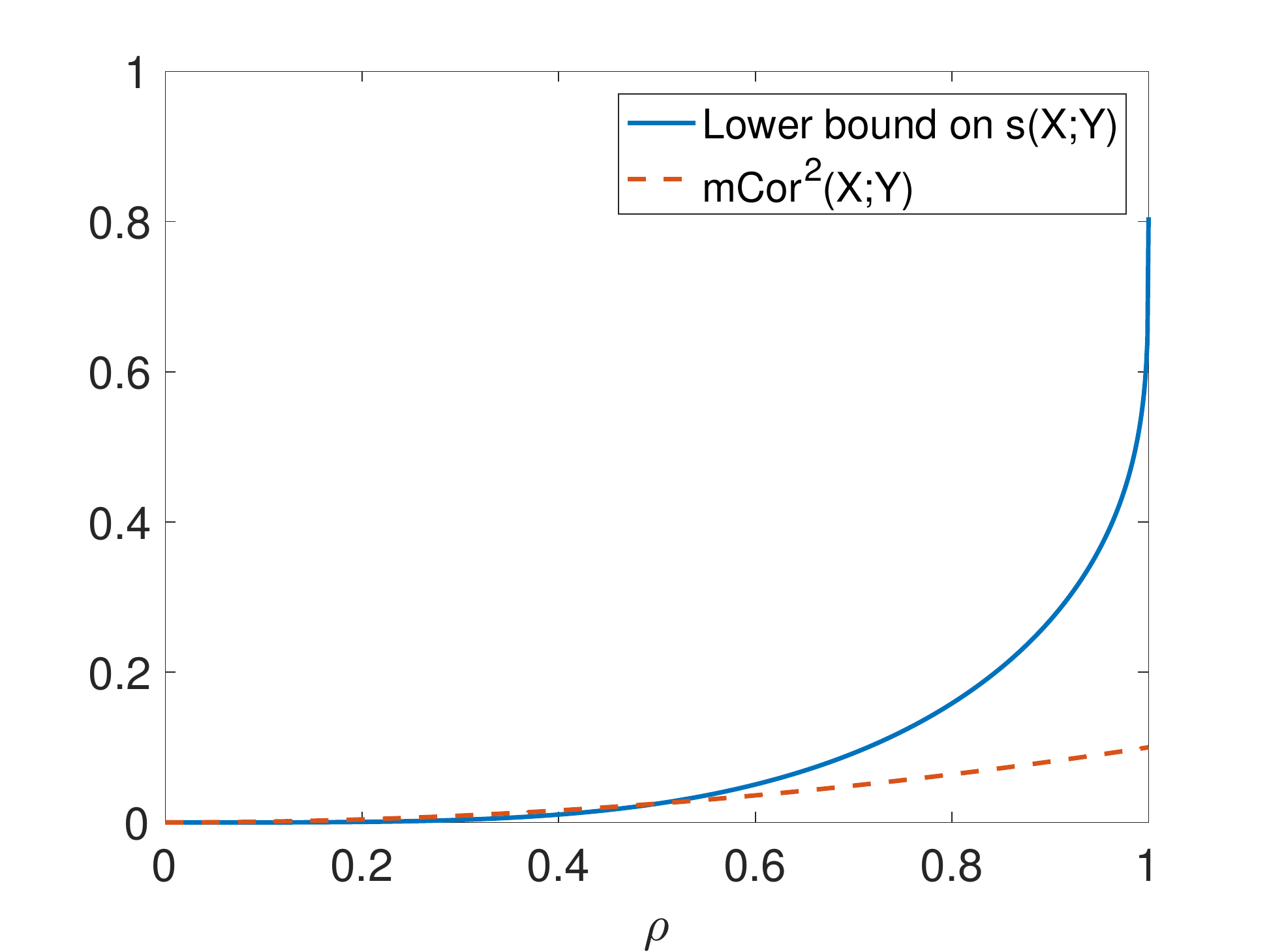} 
 \includegraphics[width=0.31\linewidth]{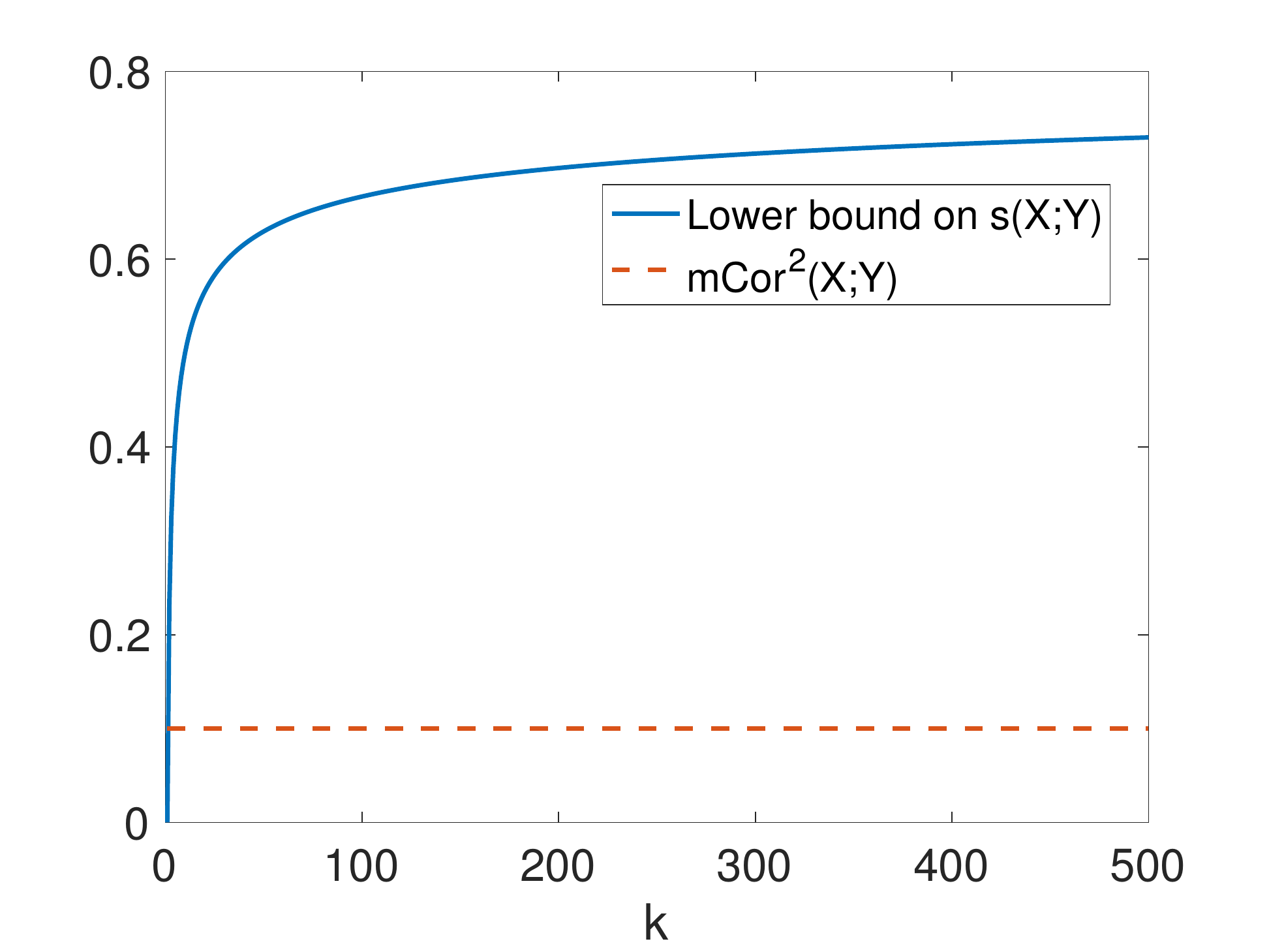} 
 \includegraphics[width=0.31\linewidth]{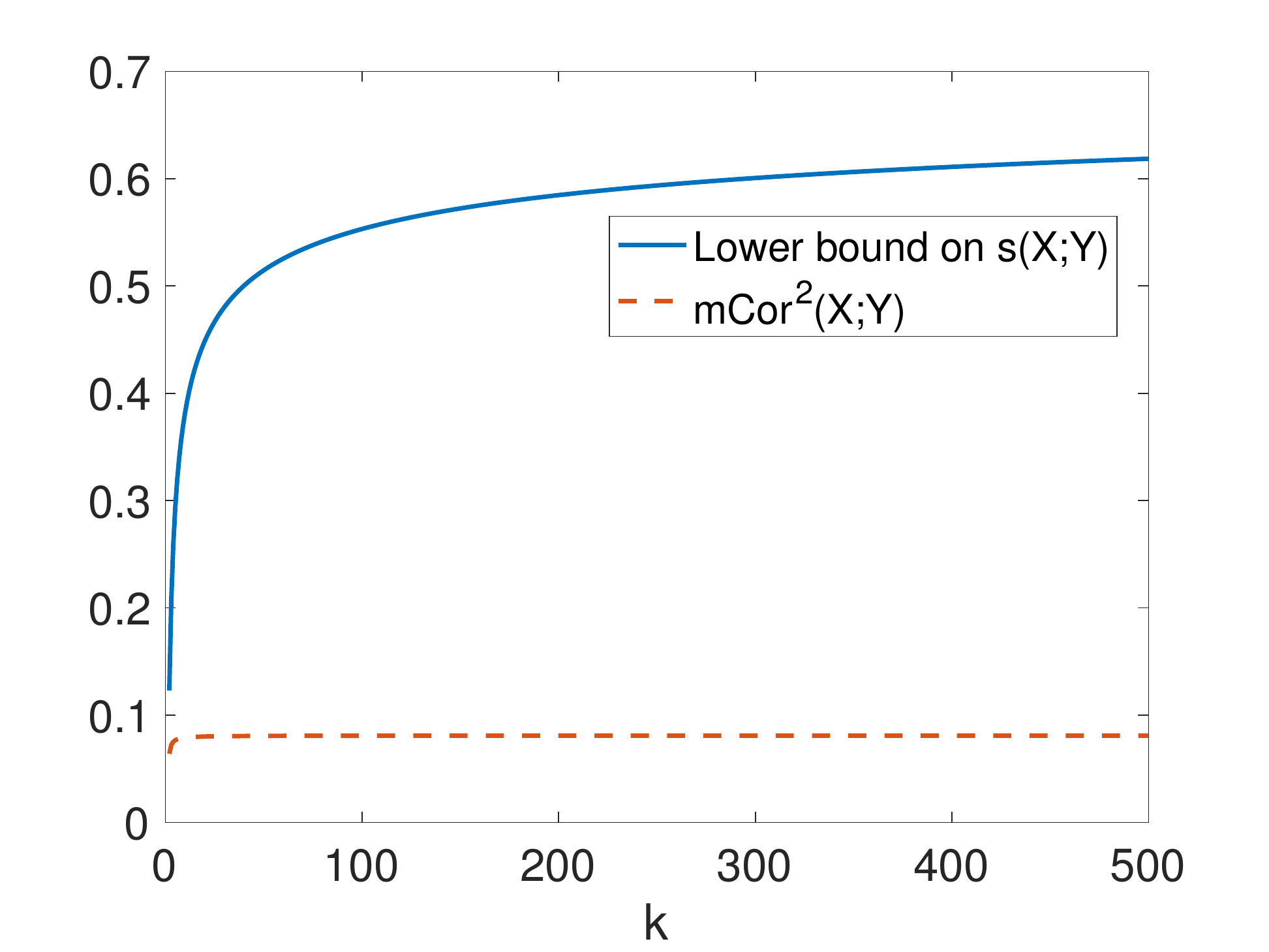} 
 \end{center}
 \caption{Lower bound on $s(X;Y)$ and mCor$(X;Y)$ for $\a = 0.1$ in (Left) Example 1 (Middle) Example 2 (Right) Example 3 for $\epsilon=0.1$.}\label{ex123}
\end{figure}

\section{Estimator of the hypercontractivity coefficient from samples}
\label{sec:estimate}

In this section, we present an algorithm\footnote{Code available at \texttt{https://github.com/wgao9/hypercontractivity}} to compute the hypercontractivity coefficient $s(X;Y)$ from i.i.d. samples $\{X_i,Y_i\}_{i=1}^n$. The computation of the hypercontractivity coefficient from samples is known to be challenging for continuous random variables~\cite{deepVIB, dropout2016}, and to the best of our knowledge, there is no known efficient algorithm to compute the hypercontractivity coefficient from samples. Our estimator is the first efficient algorithm to compute the hypercontractivity coefficient, based on the following equivalent definition of the hypercontractivity coefficient, shown recently in~\cite{Anantharam--Gohari--Kamath--Nair2013}:
\begin{eqnarray}
\label{defKL-dicrete}
    s(X;Y) &\equiv& \sup_{r_x \neq p_x} \frac{D(r_y||p_y)}{D(r_x||p_x)} \;.
\end{eqnarray}

There are two main challenges for computing $s(X;Y)$. The first challenge is -- given a marginal distribution $r_x$ and samples from $p_{xy}$, how do we estimate the KL divergences $D(r_y||p_y)$ and $D(r_x||p_x)$. The second challenge is the optimization over the infinite dimensional simplex. We need to combine estimation and optimization together in order to compute $s(X;Y)$. Our approach is to combine ideas from traditional kernel density estimates and from importance sampling. Let $w_i = r_x(X_i)/p_x(X_i)$ be the {\em likelihood ratio} evaluated at sample $i$. We propose the estimation and optimization  be solved jointly as follows:

{\bf Estimation:} To estimate KL divergence $D(r_x||p_x)$, notice that
\begin{eqnarray*}
    D(r_x||p_x) &=& \mathbb{E}_{X \sim p_x} \left[ \frac{r_x(X)}{p_x(X)} \log \frac{r_x(X)}{p_x(X)} \right].
\end{eqnarray*}
Using empirical average to replace the expectation over $p_x$, we propose
\begin{eqnarray*}
    \widehat{D}(r_x||p_x) = \frac{1}{n} \sum_{i=1}^n \frac{r_x(X_i)}{p_x(X_i)} \log \frac{r_x(X_i)}{p_x(X_i)} = \frac{1}{n} \sum_{i=1}^n w_i \log w_i \;.
\end{eqnarray*}
For $D(r_y||p_y)$, we follow the similar idea, but the challenge is in computing $v_j = r_y(Y_j)/p_y(Y_j)$. To do this, notice that $r_{xy} = r_x p_{y|x}$, so
\begin{eqnarray*}
    r_y(Y_j) &=& \mathbb{E}_{X \sim r_x} \left[\, p_{y|x}(Y_j|X) \,\right] = \mathbb{E}_{X \sim p_x} \left[\, p_{y|x}(Y_j|X) \frac{r_x(X)}{p_x(X)} \,\right] \;.
\end{eqnarray*}
Replacing the expectation by empirical average again, we get the following estimator of $v_j$:
\begin{eqnarray*}
    \widehat{v}_j &=& \frac{1}{n} \sum_{i=1}^n \frac{p_{y|x}(Y_j|X_i)}{p_y(Y_j)} \frac{r_x(X_i)}{p_x(X_i)} = \frac{1}{n} \sum_{i=1}^n \underbrace{\frac{p_{xy}(X_i,Y_j)}{p_x(X_i)p_y(Y_j)}}_{A_{ji}} w_i \;.
\end{eqnarray*}
We can write this expression in matrix form as $\widehat{\bf v} = {\bf A}^T {\bf w}$. We use a kernel density estimator from~\cite{NCCA} to estimate the matrix ${\bf A}$, but our approach is compatible with any density estimator of choice.

{\bf Optimization:} Given the estimators of the KL divergences, we are able to convert the problem of computing $s(X;Y)$ into an optimization problem over the vector ${\bf w}$. Here a constraint of $(1/n)\sum_{i=1}^n w_i = 1$ is needed to satisfy $\mathbb{E}_{p_x}[r_x/p_x] = 1$. To improve numerical stability, we use $\log s(X;Y)$ as the objective function. Then the optimization problem has the following form:
\begin{eqnarray*}
    {\rm max}_{{\bf w}} &\,& \log \left(\, ( {\bf w}^T {\bf A} \log ({\bf A}^T {\bf w}) \,\right) - \log \left(\, {\bf w}^T \log {\bf w}\,\right) \,\notag\\
    {\rm subject\,to} &\,& \frac{1}{n}\sum_{i=1}^n w_i = 1 \,\notag\\
                      &\,& w_i \geq 0, \forall \, i
\end{eqnarray*}
where ${\bf w}^T \log {\bf w} = \sum_{i=1}^n w_i \log w_i$ for short. Although this problem is not convex, we apply gradient descent to maximize the objective. In practice, we initialize $w_i = 1 + \mathcal{N}(0, \sigma^2)$ for $\sigma^2 = 0.01$. Hence, the initial $r_x$ is perturbed mildly from $p_x$. Although we are not guaranteed to achieve the global maximum, we consistently observe in extensive numerical experiments that we have 50\%-60\% probability of achieving the same maximum value, which we believed to be the global maximum. A theoretical  analysis of the landscape of local and global optima and their regions of attraction with respect to gradient descent  is an interesting and challenging open question, outside the scope of this paper.

\subsection{Consistency of Estimation}  While a theoretical understanding of the performance of gradient descent on  the optimization step (where the number of samples is fixed) above is technically very challenging, we can study the performance of the solution as the number of samples increases. In particular we show below (under suitable simplifying assumptions to get to the essence of the proof) that the optimal solution to the finite sample optimization problem is {\em consistent}.  Suppose that $\mathcal{X}$ is discrete. Further we restrict the optimization over a quantized and bounded set $T_{\Delta}$, where ${\bf w} \in T_{\Delta}$ is quantized by a gap $\Delta$ and satisfies: (1) $C_1 \leq w_i \leq C_2$ for all $i$; (2) $(1/n)\sum_{i=1}^n w_i \log w_i > C_0$. We further assume that we have access of ${\bf A} = P_{xy}(X_i,Y_j)/P_x(X_i)P_y(Y_j)$. Define $\widehat{s}_{\Delta}(X;Y) = \max_{{\bf w} \in T_{\Delta}} {\bf w}^T {\bf A} \log ({\bf A}^T {\bf w})/{\bf w}^T \log {\bf w}$, then with two further simplifying  conditions on the joint distribution (formally stated in \Appendix~\ref{sec:proof-consistency}), we can prove consistency of our estimation procedure: 

\begin{Theorem}
	\label{thm:convergence}
	As $n$ goes to infinity, $\widehat{s}_{\Delta}(X;Y)$ converges to $s(X;Y)$ up to a resolution of quantization in probability, i.e., for any $\varepsilon >0$, $\Delta > 0$ and $s(\Delta) = O(\Delta)$, we have
\begin{eqnarray}
    \lim_{n \to \infty} \Pr \left(\, |\, \widehat{s}_{\Delta}(X;Y) - s(X;Y) \,| > \varepsilon +s(\Delta)\,\right) = 0 \;.
\end{eqnarray}
\end{Theorem}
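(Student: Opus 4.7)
The plan is to decompose the argument into three pieces: a pointwise (in $r_x$) convergence of the empirical objective to its population counterpart, a uniform version of that statement obtained by a union bound, and a quantization-error bound that produces the additive $O(\Delta)$ slack. The structural observation that makes this clean is that, since $\mathcal{X}$ is discrete, each candidate $r_x$ lives in the $|\mathcal{X}|$-simplex, so the quantized feasible set $T_\Delta$ has cardinality at most $(1/\Delta)^{|\mathcal{X}|-1}$, which is finite and independent of $n$. Writing $w_i = r_x(X_i)/p_x(X_i)$ and $v(y) = r_y(y)/p_y(y)$, the objective we must control is $\widehat{R}(r_x) = \widehat{D}(r_y\|p_y)/\widehat{D}(r_x\|p_x)$, and the target is $R(r_x) = D(r_y\|p_y)/D(r_x\|p_x)$.

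For the pointwise step, fix $r_x \in T_\Delta$. The denominator estimator $\widehat{D}(r_x\|p_x) = (1/n)\sum_i w_i \log w_i$ is a sample mean of the bounded random variable $(r_x/p_x)\log(r_x/p_x)$ under $p_x$, so LLN (or Hoeffding) delivers convergence to $D(r_x\|p_x)$. The numerator is more delicate. First I would observe that, conditionally on $Y_j$, $\widehat{v}_j = (1/n)\sum_i A_{ji} w_i$ is a bounded sample mean with conditional expectation $v(Y_j)$, so $\widehat{v}_j - v(Y_j) \to 0$ by a conditional LLN, at a rate controlled by the bounds $C_1 \le w_i \le C_2$. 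Next I would write $(1/n)\sum_j \widehat{v}_j \log \widehat{v}_j = (1/n)\sum_j v(Y_j) \log v(Y_j) + (1/n)\sum_j [\widehat{v}_j \log \widehat{v}_j - v(Y_j) \log v(Y_j)]$. The first term is an ordinary sample mean over $Y_j \sim p_y$ and converges to $E_{p_y}[v \log v] = D(r_y\|p_y)$. For the second, since $\widehat{v}_j$ and $v(Y_j)$ are uniformly bounded away from $0$ and $\infty$ under the assumed conditions, $x \mapsto x \log x$ is Lipschitz on the relevant interval, and a Chebyshev/variance bound over the joint $(X_i, Y_j)$ samples makes the remainder vanish in probability.

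With pointwise convergence in hand, the continuous mapping theorem (using the hypothesis that $(1/n)\sum_i w_i \log w_i$ is uniformly lower bounded by $C_0$, so dividing by the denominator estimator is stable) yields $\widehat{R}(r_x) \to R(r_x)$ in probability for each fixed $r_x \in T_\Delta$. A union bound over the finite set $T_\Delta$ upgrades this to $\sup_{r_x \in T_\Delta} |\widehat{R}(r_x) - R(r_x)| < \varepsilon$ with probability tending to one, which in turn implies $|\widehat{s}_\Delta(X;Y) - \max_{r_x \in T_\Delta} R(r_x)| < \varepsilon$. To close the gap to $s(X;Y) = \sup_{r_x} R(r_x)$, I would invoke local Lipschitz continuity of $r_x \mapsto R(r_x)$ on the feasible region (where the denominator is bounded below by $C_0$ and the simplex is compact); approximating an arbitrary $r_x$ by its nearest point of $T_\Delta$ costs $O(\Delta)$ in total variation, hence $O(\Delta)$ in $R$, which is exactly the $s(\Delta)$ term in the statement.

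The hardest step, I expect, is the joint control of $(1/n)\sum_j \widehat{v}_j \log \widehat{v}_j$ in the pointwise convergence argument. The difficulty is that $\widehat{v}_j$ depends on every $X_i$, so the summands across $j$ are far from independent and a naive LLN does not apply; one has to exploit the linearization of $x \log x$ together with the boundedness assumptions to reduce the error to a Chebyshev-type second-moment estimate. This is precisely why the hypotheses $C_1 \le w_i \le C_2$ and $\widehat{D}(r_x\|p_x) > C_0$ on $T_\Delta$ are imposed: they prevent the logarithms from blowing up and keep the objective ratio continuous, so that the entire chain from pointwise LLN to uniform convergence to quantization-approximation goes through.
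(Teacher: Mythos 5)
Your proposal follows essentially the same route as the paper's proof: the identical triangle-inequality decomposition into a quantization term, controlled by Lipschitz continuity of $r_x \mapsto D(r_y\|p_y)/D(r_x\|p_x)$ on the region where the ratio is bounded and the denominator divergence is bounded below, and a statistical term, controlled by pointwise convergence of the numerator and denominator estimators plus a union bound over the finite quantized set; your treatment of the dependence among the $\widehat{v}_j$ is if anything more explicit than the paper's appeal to the law of large numbers. The one detail the paper handles that you pass over is reconciling the empirical normalization constraint $(1/n)\sum_{i=1}^n w_i = 1$ defining $T_\Delta$ with the population constraint $\sum_{x} r_x(x) = 1$ defining the quantized simplex (done there via the strong law for $n_x/n$ and absorbed into the $O(\Delta)$ slack), but this is a minor patch rather than a missing idea.
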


\section{Experimental results}\label{sec:exp}

We present experimental results on synthetic and real datasets showing
that the hypercontractivity coefficient $(a)$  is more powerful in detecting potential correlation compared to existing measures;
$(b)$ discovers hidden potential correlations among various national indicators in WHO datasets; and
$(c)$ is more robust in discovering pathways of gene interactions from gene expression time series data.

\subsection{Synthetic data: power test on potential correlation }
\label{sec:power}

As our estimator (and the measure itself) involves a maximization, it is possible that we
are sensitive to outliers and may capture spurious noise. Via a series of experiments we show that the hypercontractivity coefficient and our estimator are capturing the true potential correlation.
As shown in Figure~\ref{sample-points}, we generate pairs of datasets
-- one where $X$ and $Y$ are independent and
one where there is a potential correlation  as per our scenario. 
We run experiment with eight types of functional associations, following the examples
from \cite{ReshefEtAl2011,SimonTibshirani,GHH12}.
For the correlated datasets,
out of $n$ samples  $\{(x_i,y_i)\}_{i=1}^n$,
$\a n$ rare but correlated samples are in $\Xcal=[0,1]$ and
$(1-\a) n$ dominant but independent samples are in $\Xcal\in[1,1.1]$.
The rare but correlated  samples are generated as $x_i \sim \text{Unif}[0,1], y_i \sim f(x_i) + \mathcal{N}(0,\sigma^2)$ for $i \in [1:\a n]$. The dominant samples are generated as  $x_i \sim \text{Unif}[1,1.1], y_i \sim f(\text{Unif}[0,1]) + \mathcal{N}(0,\sigma^2)$ for $i \in [\a n+1, n]$.

Table~\ref{comp-table} shows the  hypercontractivity coefficient and the other correlation coefficients for correlated and independent datasets shown in Figure~\ref{sample-points}, along with the chosen value of $\a$ and $\sigma^2$. 
Correlation estimates with the largest separation for each row is shown in bold.
The hypercontractivity coefficient gives the largest separation between the correlated and the independent dataset for most functional types.

\begin{figure}[H]
\begin{center}
 \includegraphics[scale=0.45]{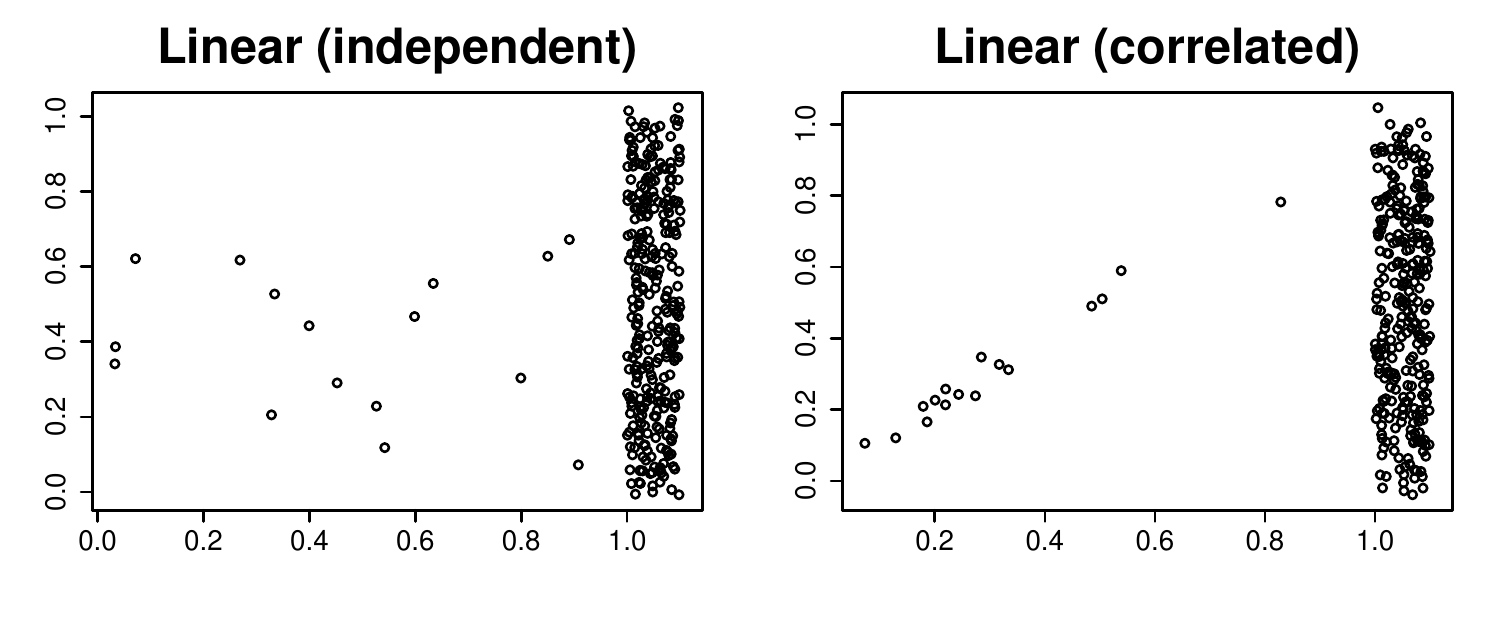}\ \ \ 
 \includegraphics[scale=0.45]{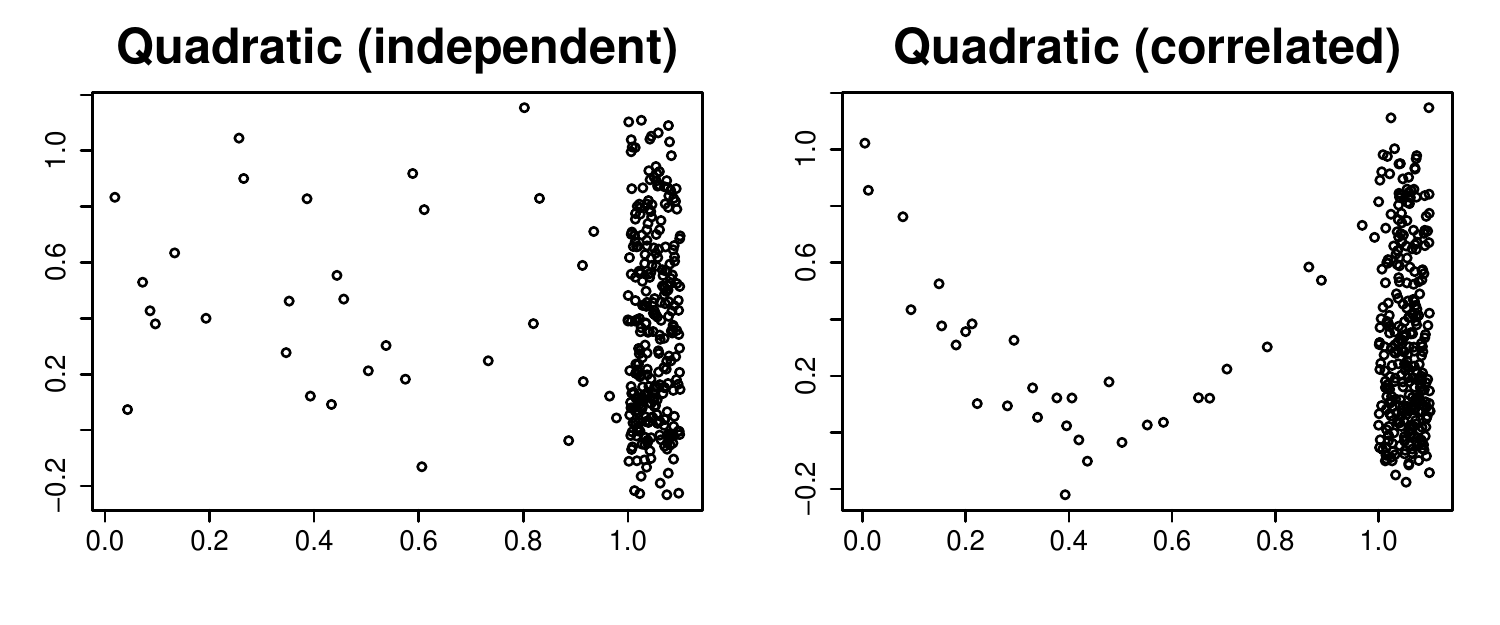}
 \put(-50,4){\tiny $x$}
\put(-150,4){\tiny $x$}
\put(-250,4){\tiny $x$}
\put(-350,4){\tiny $x$}
\put(-198,45){\tiny $y$}
\put(-398,45){\tiny $y$}\\
 \includegraphics[scale=0.45]{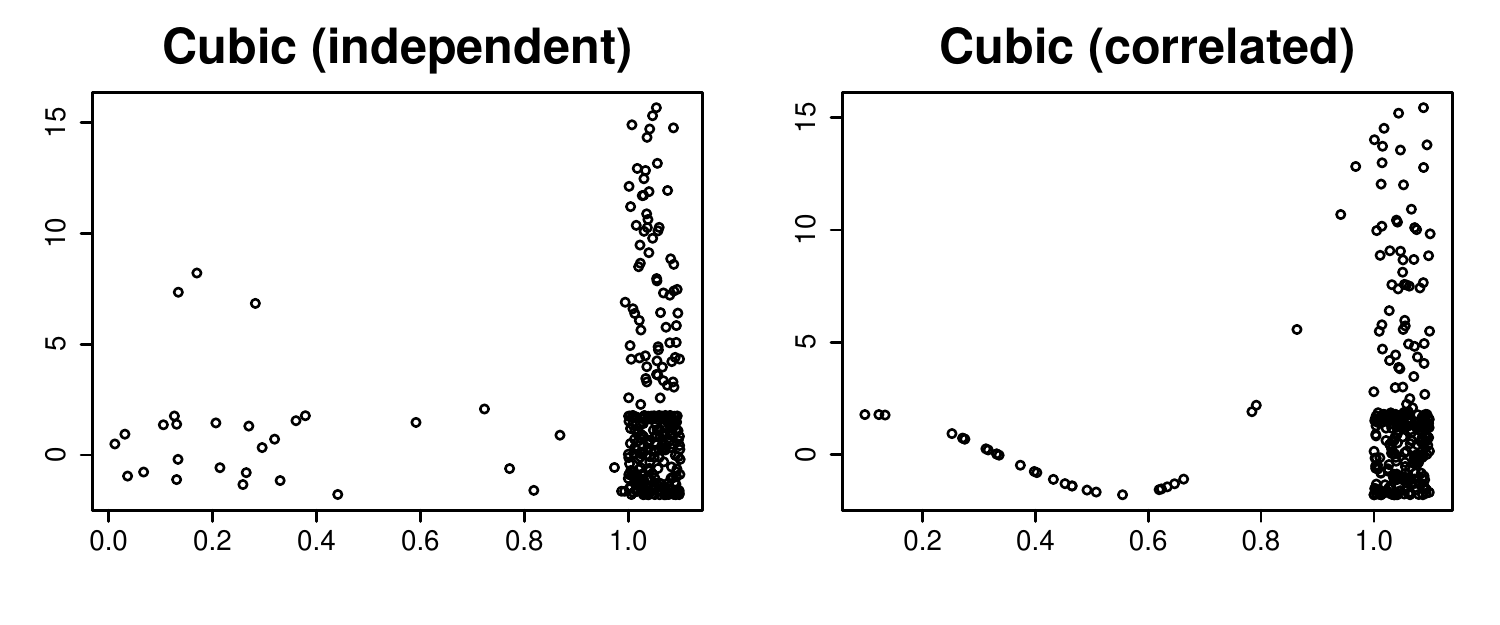}\ \ \ \includegraphics[scale=0.45]{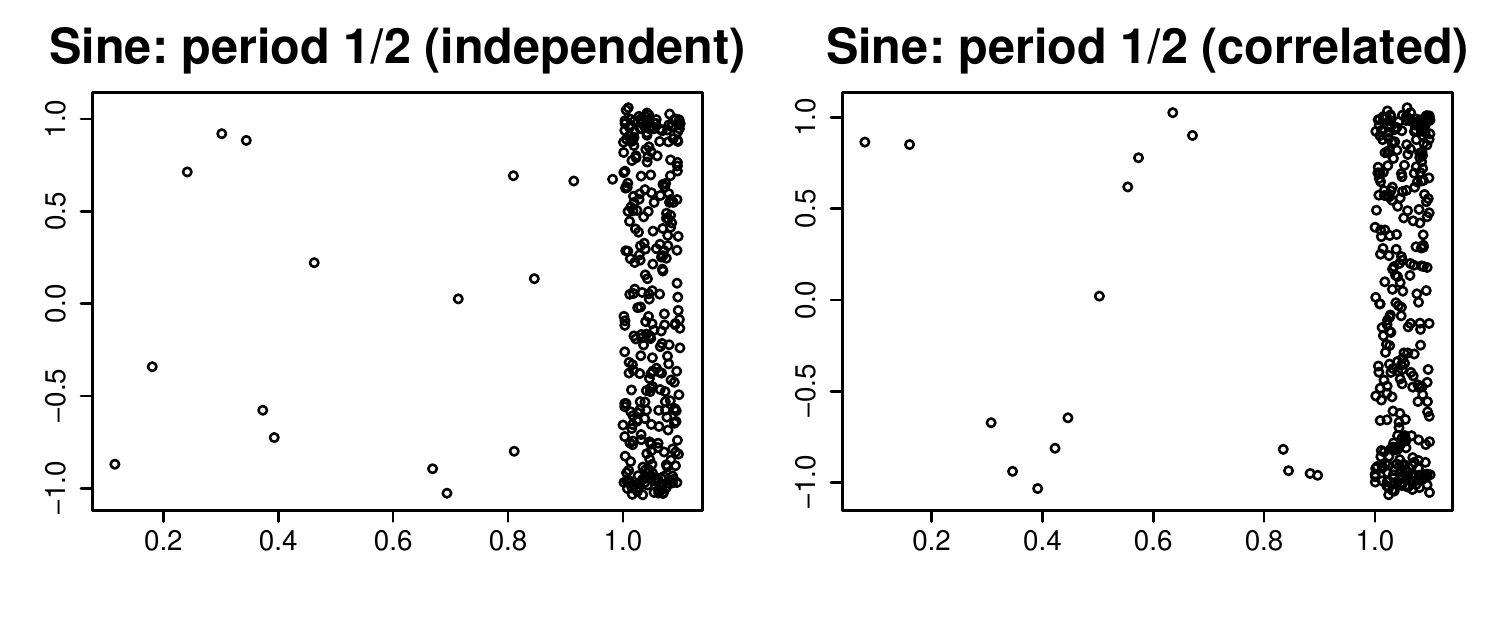}
 \put(-50,4){\tiny $x$}
\put(-150,4){\tiny $x$}
\put(-250,4){\tiny $x$}
\put(-350,4){\tiny $x$}
\put(-198,45){\tiny $y$}
\put(-398,45){\tiny $y$}\\
\includegraphics[scale=0.45]{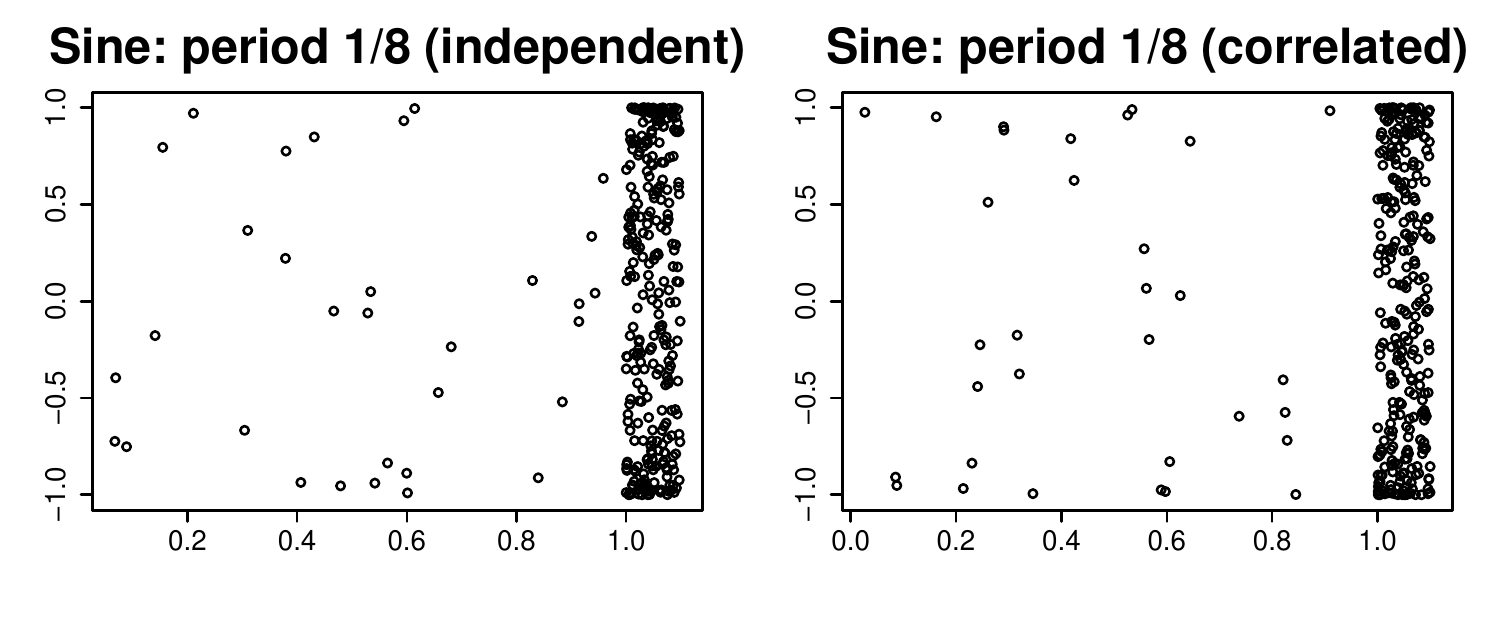}\ \ \ \includegraphics[scale=0.45]{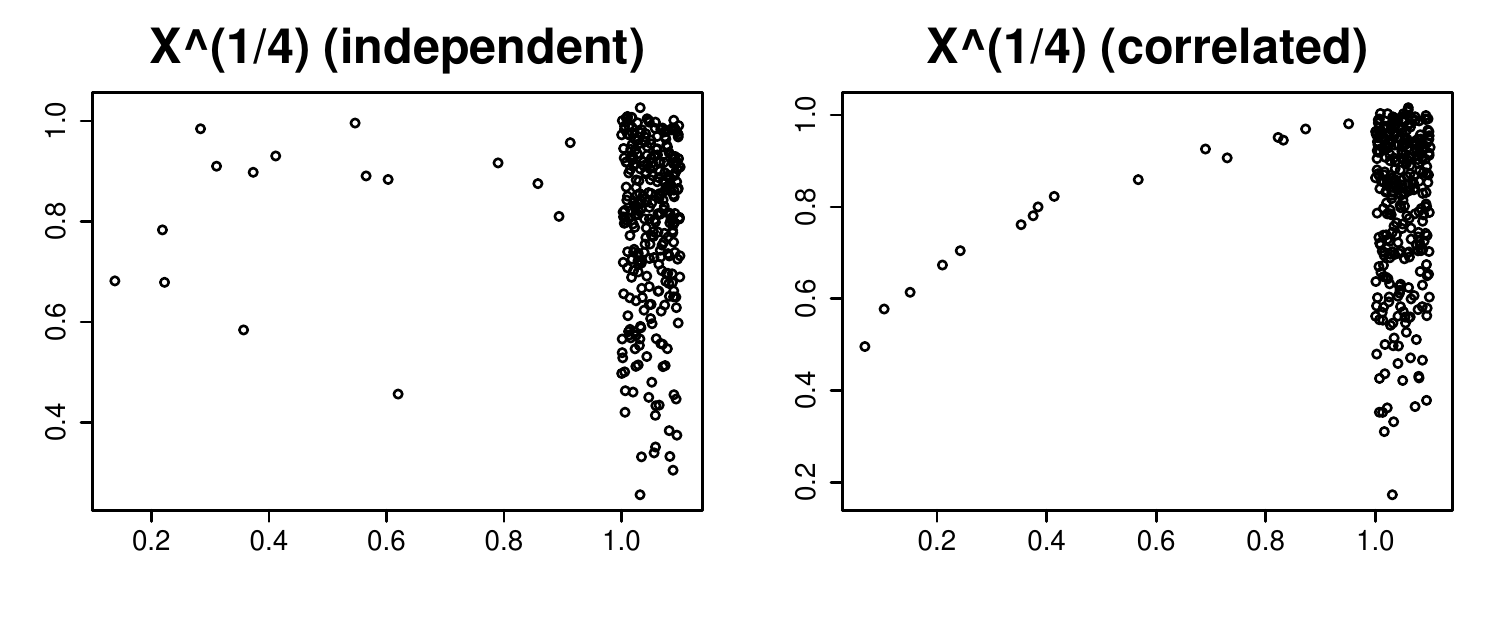} 
\put(-50,4){\tiny $x$}
\put(-150,4){\tiny $x$}
\put(-250,4){\tiny $x$}
\put(-350,4){\tiny $x$}
\put(-198,45){\tiny $y$}
\put(-398,45){\tiny $y$}\\
\includegraphics[scale=0.45]{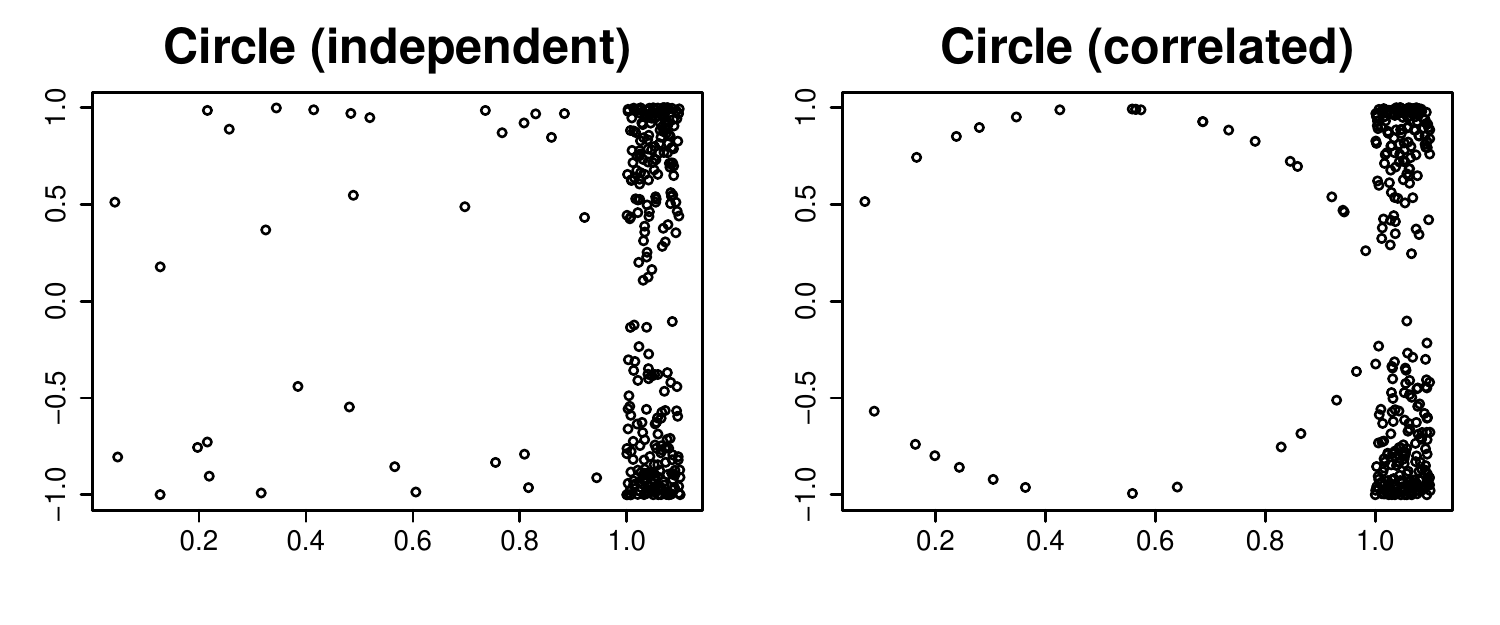}\ \ \ \includegraphics[scale=0.45]{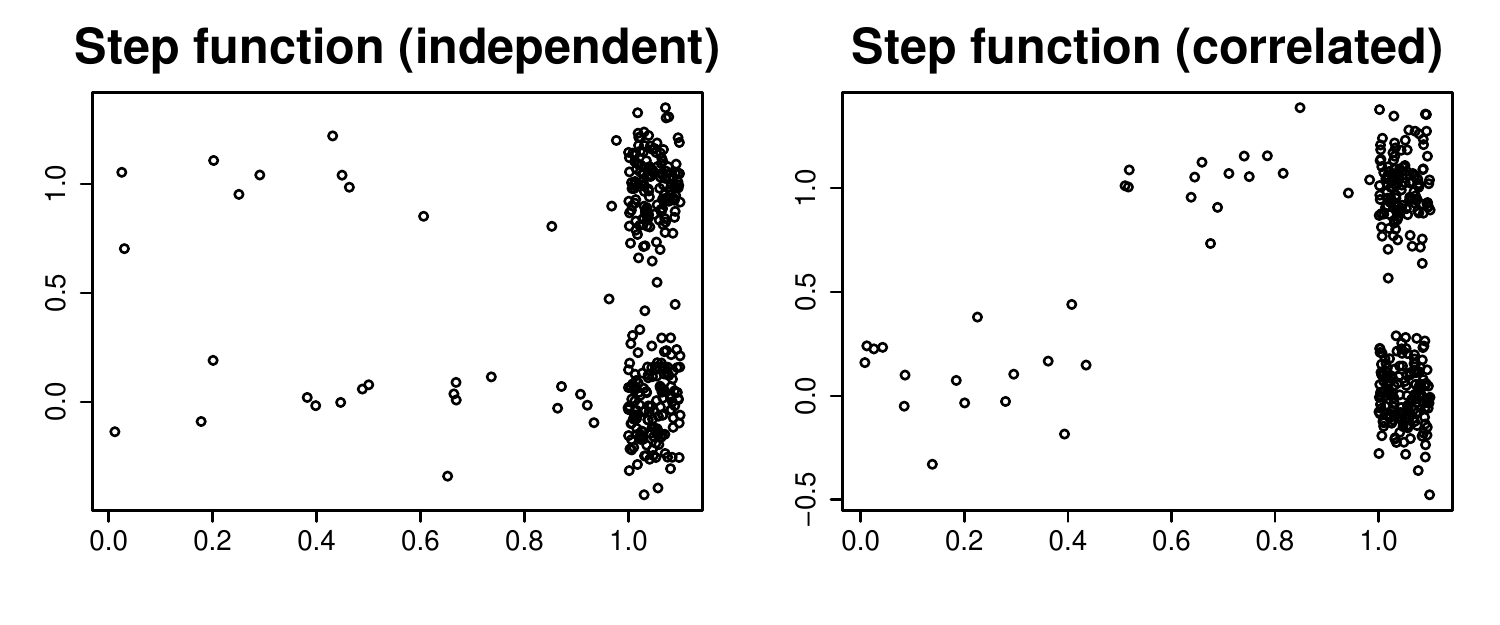}
\put(-50,4){\tiny $x$}
\put(-150,4){\tiny $x$}
\put(-250,4){\tiny $x$}
\put(-350,4){\tiny $x$}
\put(-198,45){\tiny $y$}
\put(-398,45){\tiny $y$}\\
\end{center}
\caption{Sample data points for eight functions with/without a potential correlation for $n=320$.}
\label{sample-points}
\end{figure}

\begin{table}
\caption{Comparison of correlation coefficients for independent and correlated samples 
from Figure~\ref{sample-points}
}\label{comp-table} 
\begin{center}
\setlength{\tabcolsep}{3.5pt}
\def\arraystretch{1.3}
\begin{tabular}{|c|c|c|c|c|c|c|c|c|c|c|c|c|c|}
  \hline
 \multicolumn{4}{|c|}{}  & \multicolumn{2}{|c|}{Cor} & \multicolumn{2}{c|}{dCor} & \multicolumn{2}{c|}{mCor} & \multicolumn{2}{c|}{MIC} & \multicolumn{2}{c|}{HC} \\ 
  \hline
 \# & Function & $\a$ & $\sigma^2$& dep & indep & dep & indep &  dep & indep &  dep & indep &  dep & indep \\ 
  \hline
1 & Linear & 0.05 & 0.03 & 0.03 & 0.00 & 0.19 & 0.11 & 0.06 & 0.04 & 0.21 & 0.17 & \textbf{0.18} & \textbf{0.08} \\ 
  2 & Quadratic & 0.10 & 0.10 & 0.00 & 0.01 & 0.09 & 0.10 & \textbf{0.07} & \textbf{0.02} & 0.21 & 0.18 & 0.08 & 0.04 \\ 
  3 & Cubic & 0.10 & 0.00 & 0.02 & 0.00 & 0.16 & 0.08 & 0.09 & 0.03 & \textbf{0.26} & \textbf{0.17} & 0.11 & 0.04 \\ 
  4 & sin($4\pi X$) & 0.05 & 0.03 & 0.00 & 0.00 & 0.10 & 0.06 & 0.03 & 0.01 & 0.20 & 0.18 & \textbf{0.10} & \textbf{0.04} \\ 
  5 & sin($16\pi X$) & 0.10 & 0.00 & 0.00 & 0.00 & 0.07 & 0.08 & \textbf{0.03} & \textbf{0.03} & 0.18 & 0.22 & \textbf{0.03} & \textbf{0.03} \\ 
  6 & $X^{1/4}$ & 0.05 & 0.01 & 0.01 & 0.00 & 0.12 & 0.07 & 0.02 & 0.01 & 0.20 & 0.20 & \textbf{0.12} & \textbf{0.04} \\ 
  7 & Circle & 0.10 & 0.00 & 0.00 & 0.00 & 0.09 & 0.05 & 0.01 & 0.03 & 0.16 & 0.17 & \textbf{0.06} & \textbf{0.01} \\ 
  8 & Step func. & 0.10 & 0.03 & 0.00 & 0.00 & 0.13 & 0.07 & 0.04 & 0.02 & 0.20 & 0.17 & \textbf{0.11} & \textbf{0.04} \\ 
   \hline
\end{tabular}
\end{center}
\end{table}


A formal statistical approach to test the robustness as well as accuracy is to run {\em power tests}:
testing for the power of the estimator in binary hypothesis tests. 
To compute the power of each estimator, we compare the false negative rate at a fixed false positive rate of, say, 5\%. We generate 500 independent datasets and 500 correlated datasets. We compute the correlation estimates on 500 independent samples, and take the top 5\% as a threshold. We compute the correlation estimates on 500 correlated samples. Power is defined as the fraction of correlated datasets for which the correlation estimate is larger than the threshold. 
 
 We show empirically that for linear, quadratic, sine with period 1/2, and the step function,
 the hypercontractivity coefficient is more powerful as compared to other measures.
For a given setting, a larger power means a larger successful detection rate for a fixed false alarm rate.
Figure~\ref{fig:005} shows the power of correlation estimators as a function of the additive noise level, $\sigma^2$, for $\a = 0.05$ and $n=320$.
The hypercontractivity coefficient is more powerful than other correlation estimators for most functions. The power of all the estimators are very small for sine (period 1/8) and circle functions. This is not surprising given that it is very hard to discern the correlated and independent cases even visually, as shown in Figure~\ref{sample-points}. 

\begin{figure}[H]
\begin{center}
\includegraphics[scale=0.63]{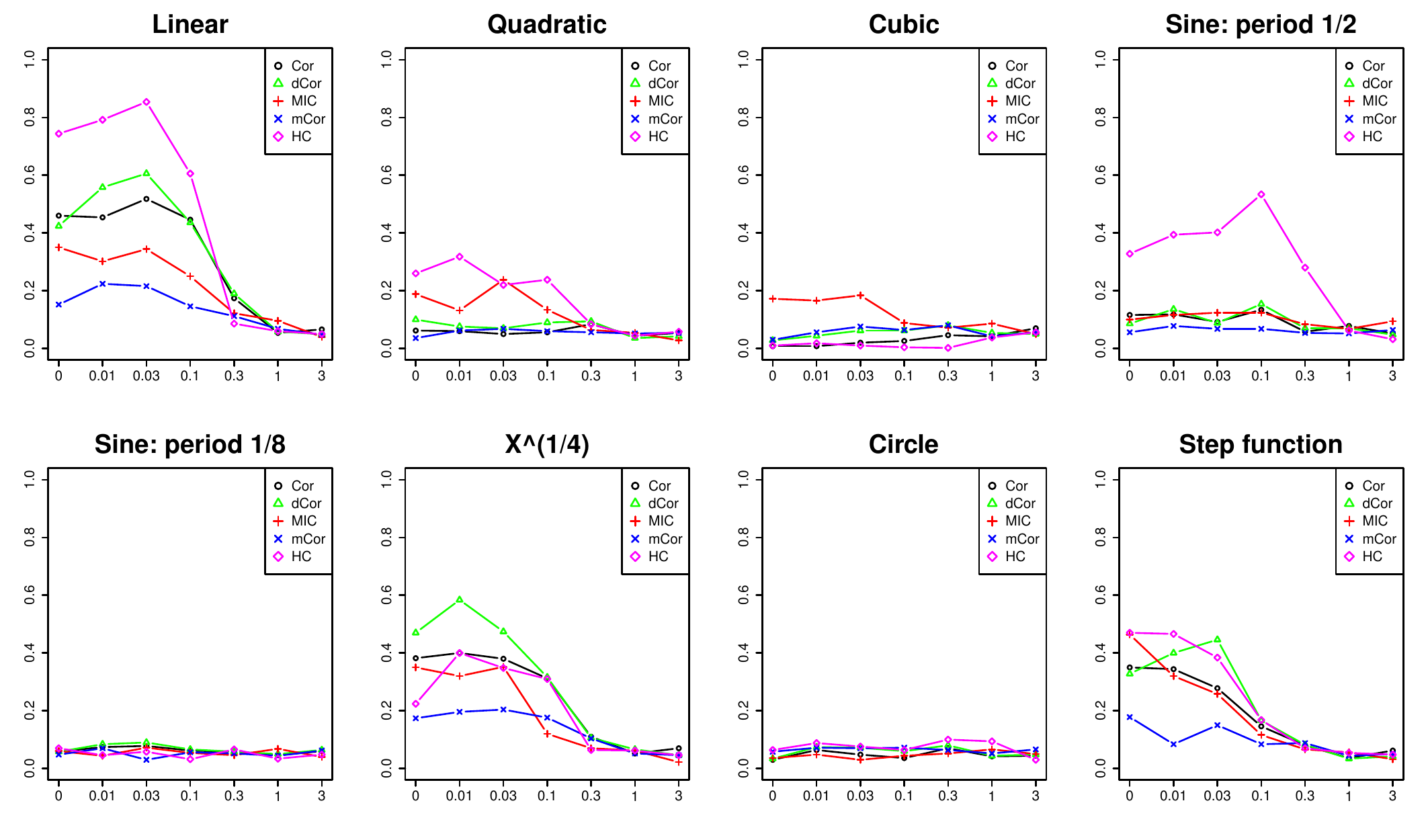}
\put(-64,0){\small Noise level}
\put(-158,0){\small Noise level}
\put(-256,0){\small Noise level}
\put(-352,0){\small Noise level}
\put(-64,115){\small Noise level}
\put(-158,115){\small Noise level}
\put(-256,115){\small Noise level}
\put(-352,115){\small Noise level}
\put(-390,50){\small \rotatebox{90}{Power}}
\put(-390,160){\small \rotatebox{90}{Power}}
\end{center}
\caption{Power vs. noise level for $\alpha = 0.05$, $n=320$}
\label{fig:005}
\end{figure}

Figure~\ref{fig:01} plots the power of correlation estimators as a function of noise level for $\a = 0.1$ and $n=320$. As we can see from these figures, hypercontractivity estimator is more powerful than other correlation estimators for most functions. For circle function, the gap between the power of hypercontractivity estimator and the powers of other estimators is significantly large. 

\begin{figure}[ht]
\begin{center}
\includegraphics[scale=0.63]{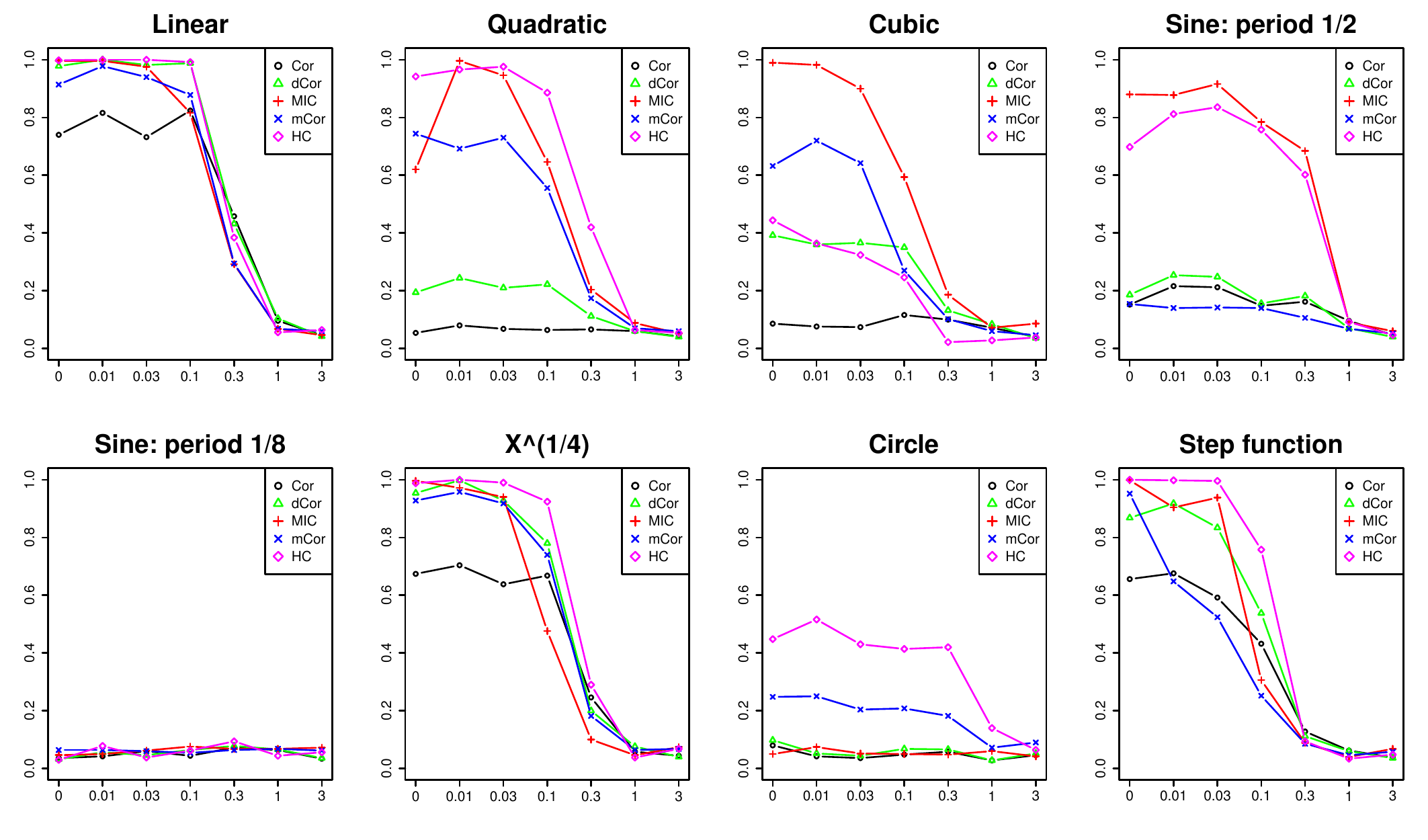}
\put(-64,0){\small Noise level}
\put(-158,0){\small Noise level}
\put(-256,0){\small Noise level}
\put(-352,0){\small Noise level}
\put(-64,115){\small Noise level}
\put(-158,115){\small Noise level}
\put(-256,115){\small Noise level}
\put(-352,115){\small Noise level}
\put(-390,50){\small \rotatebox{90}{Power}}
\put(-390,160){\small \rotatebox{90}{Power}}
\end{center}
\caption{Power vs. noise level for $\alpha = 0.1$, $n = 320$}
\label{fig:01}
\end{figure}

On the other hand, hypercontractivity estimator is power deficient for the cubic function. This is because in estimating hypercontractivity coefficient, we estimate $p(y_j|x_i)/p(y_j)$ using the kernel density estimator (KDE), which gives a smooth estimate of $p(y_j|x_i)/p(y_j)$, i.e., for $x_i$ and $x_j$ close to each other, estimated $p(y|x_i)$ and $p(y|x_j)$ are close to each other. Hence, for a correlated dataset for a cubic function, shown in Figure~\ref{cubic-comp} (A)-(right), the estimated $p(y|x)$ does not vary much for $x$. (Estimated $p(y|x)$ for $x \in [0.8:1]$ and $p(y|x)$ for $x \in [1:1.1]$ are close to each other). This results in a small hypercontracitivy, which in turn results in a low power in the hypothesis testing. 
 To further analyze this effect, we considered the same dataset but with dominant independent samples appear on the left, as shown in Figure~\ref{cubic-comp} (B)-(middle) and (right), and computed the power of hypercontractivity estimator, shown in Figure~\ref{cubic-comp}-(B) (left).  Hypercontractivity estimator is much more powerful than the one for the original dataset. This is because the estimated $p(y|x)$ for $x \in [0.8,1]$ is very different from the estimated $p(y|x)$ for $x \in [-0.1,0]$, which results in a large hypercontractivity coefficient for the correlated dataset.

\begin{figure}[H]
\begin{center}
\begin{subfigure}[t]{\textwidth}
\begin{center}
\includegraphics[scale=0.55]{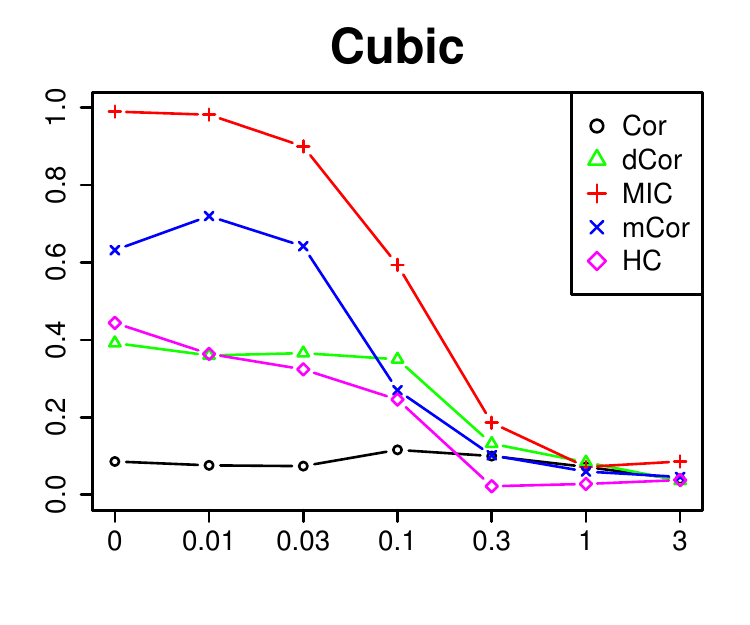}
\includegraphics[scale=0.55]{figs/442_draw_samples3.pdf}
\put(-58,0){$x$}
\put(-175,0){$x$}
\put(-118,50){$y$}
\put(-238,50){$y$}
\put(-318,0){\small Noise level}
\put(-365,42){\small \rotatebox{90}{Power}}
\caption{}
\end{center}
\end{subfigure}
\begin{subfigure}[t]{\textwidth}
\begin{center}
\includegraphics[scale=0.55]{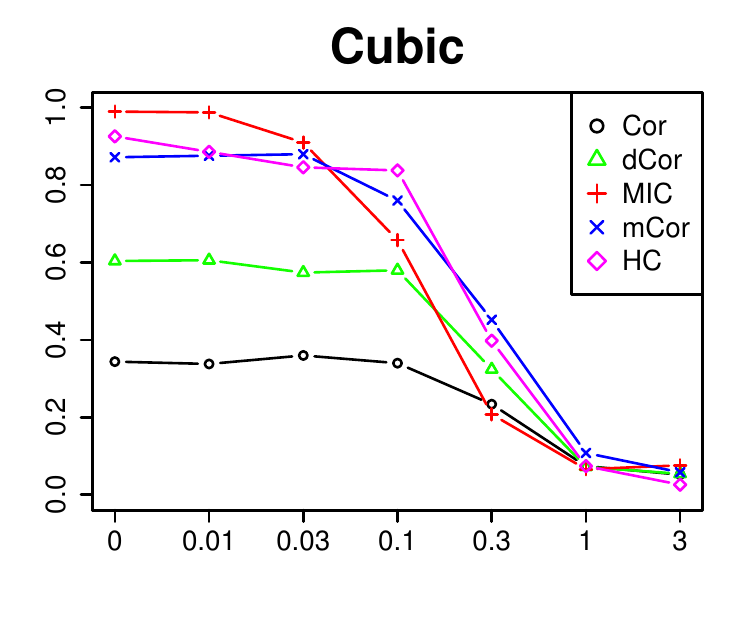}
\includegraphics[scale=0.55]{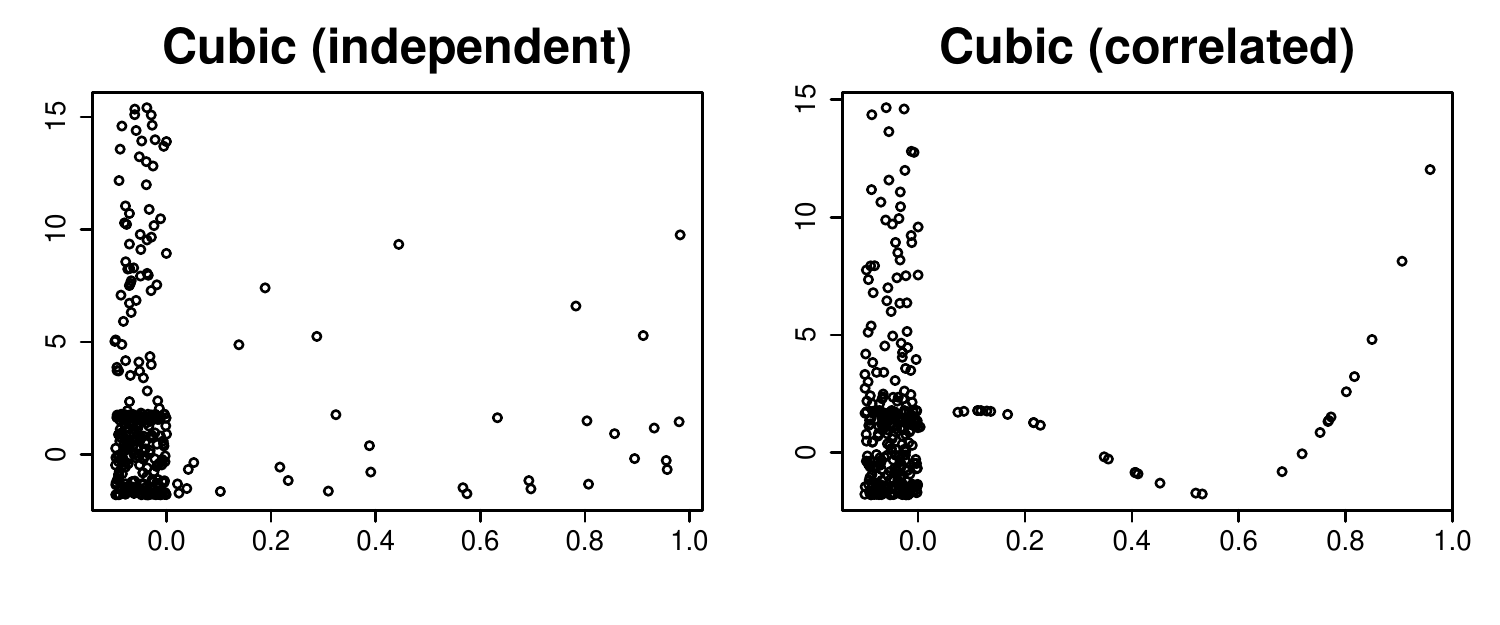}
\put(-58,0){$x$}
\put(-175,0){$x$}
\put(-118,50){$y$}
\put(-238,50){$y$}
\put(-318,0){\small Noise level}
\put(-365,42){\small \rotatebox{90}{Power}}
\caption{}
\end{center}
\end{subfigure}
\end{center}
\caption{Power vs. noise level for $\a = 0.1$ and $n=320$ (left), corresponding examples of an independent dataset (middle) and a correlated dataset (right).
}
\label{cubic-comp} 
\end{figure}

To investigate the dependency of power on $\a$ more closely, in Figure~\ref{noise0.1-sample320-alpha}, we plot the power vs. $\a$ or $n = 320$ and $\sigma^2 = 0.1$. 
Hypercontractivity estimator is more powerful than other estimators for most $\a$, for all functions except for cubic function. For a sine with period 1/8, due to its high frequency, the powers of all the correlation estimators do not increase as $\a$ increases. Figure~\ref{alpha0.05-noise0.1-sample} plots the power vs. sample size $n$ for $\a =  0.05$ and $\sigma^2 = 0.1$. For sine with period 1/2, hypercontractivity estimator is much more powerful than the other estimators for all sample sizes. We can also see that for sine with period 1/8, powers of all correlation estimators do not increase as sample size increases. 
\begin{figure}[H]
\begin{center}
\includegraphics[scale=0.63]{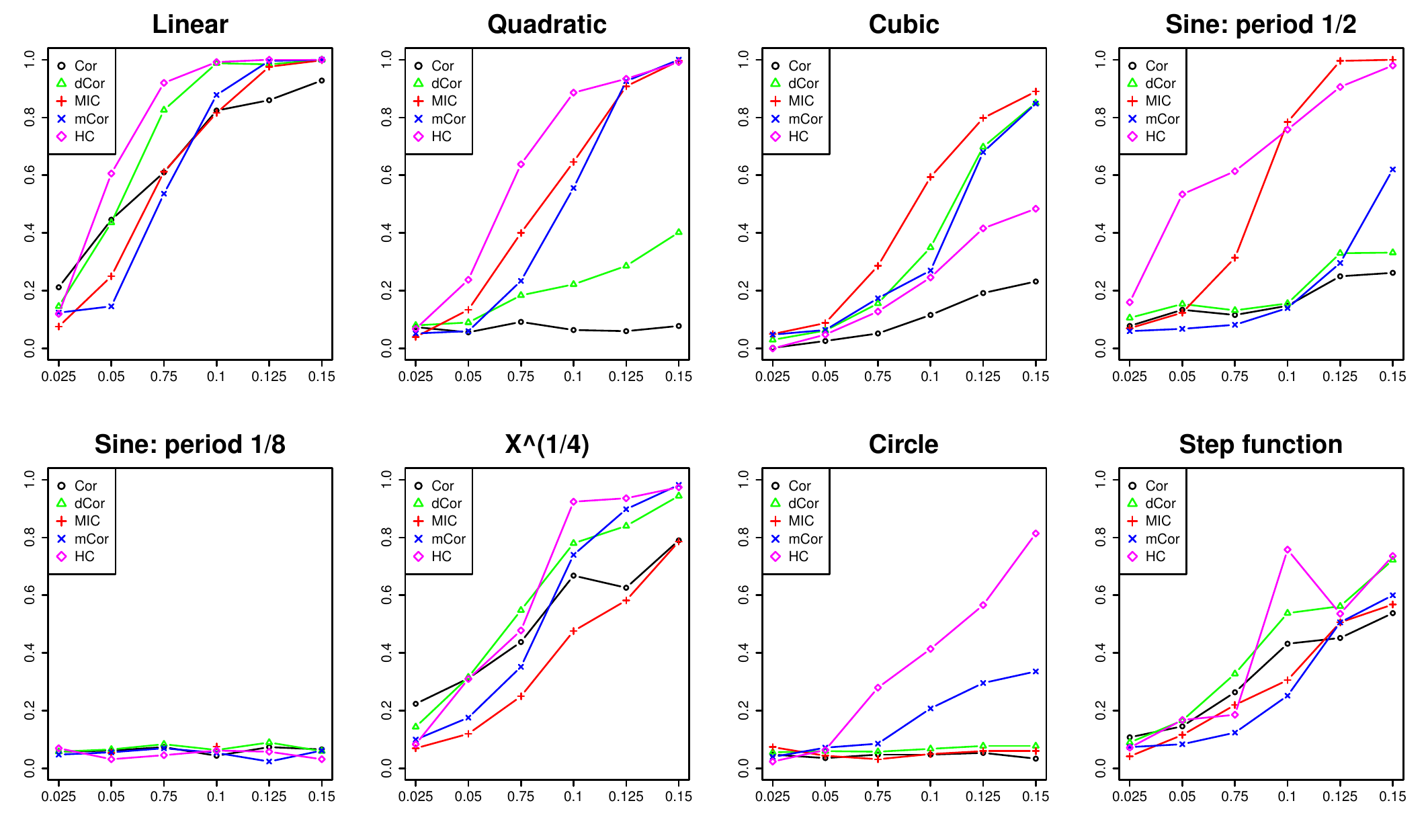}
\put(-50,0){\small $\a$}
\put(-144,0){\small $\a$}
\put(-243,0){\small $\a$}
\put(-338,0){\small $\a$}
\put(-50,115){\small $\a$}
\put(-144,115){\small $\a$}
\put(-243,115){\small $\a$}
\put(-338,115){\small $\a$}
\put(-390,50){\small \rotatebox{90}{Power}}
\put(-390,160){\small \rotatebox{90}{Power}}
\end{center}
\caption{Power vs. $\alpha$ (fraction of correlated samples) for $n = 320$, $\sigma^2$= $0.1$}
\label{noise0.1-sample320-alpha}
\end{figure}
\begin{figure}[H]
\begin{center}
\includegraphics[scale=0.63]{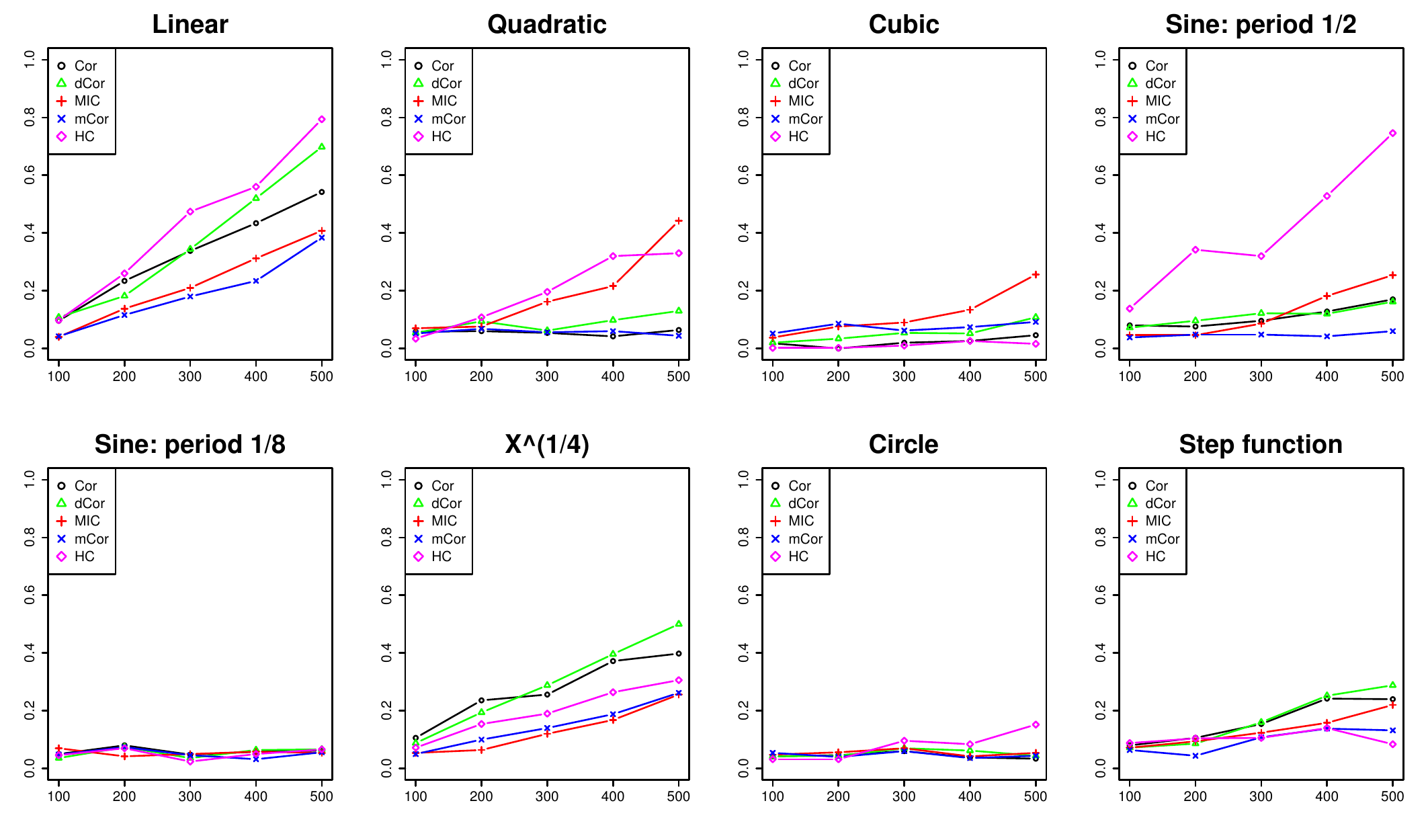}
\put(-50,0){\small $n$}
\put(-144,0){\small $n$}
\put(-243,0){\small $n$}
\put(-338,0){\small $n$}
\put(-50,115){\small $n$}
\put(-144,115){\small $n$}
\put(-243,115){\small $n$}
\put(-338,115){\small $n$}
\put(-390,50){\small \rotatebox{90}{Power}}
\put(-390,160){\small \rotatebox{90}{Power}}
\end{center}
\caption{Power vs. $n$ (number of samples) for $\alpha = 0.05$, $\sigma^2$= $0.1$} \label{alpha0.05-noise0.1-sample}
\end{figure}


\subsection{Real data: correlation between indicators of WHO datasets}
\label{sec:who}

We computed  the hypercontractivity coefficient, MIC, and Pearson correlation of 1600 pairs of indicators for 202 countries in the World Health Organization (WHO) dataset~\cite{ReshefEtAl2011}. 
Figure~\ref{WHO-VS} illustrates that the hypercontractivity coefficient discovers hidden potential correlation (e.g.~in (E) - (H)), whereas other measures fail.
Scatter plots of Pearson correlation vs. the hypercontractivity coefficient and MIC vs. the hypercontractivity coefficient for all pairs are presented in (A) and (D). 
The samples for pairs of indicators corresponding to B,C,E -- J are shown in (B),(C),(E) - (J), respectively. In Figure~\ref{WHO-VS} (B), it is reasonable to assume that the number of bad teeth per child is uncorrelated with the democracy score. 
The hypercontractivity coefficient,  MIC, and Pearson correlation are all small, as expected. In Figure~\ref{WHO-VS} (C), the correlation between CO$_2$ emissions and energy use is clearly visible, and all three correlation estimates are close to one.

However, only the hypercontractivity coefficient discovers the hidden potential correlation in Figure~\ref{WHO-VS} (E) -- (H). In Figure~\ref{WHO-VS} (E), the data is a mixture of two types of countries -- one with small amount of aid received (less than $\$ 5 \times 10^8$), and the other with large amount of aid received (larger than $\$ 5 \times 10^8$). Dominantly many countries (104 out of 146) belong to the first type (small aid), and for those countries, the amount of aid received and the income growth are independent.
For the remaining countries with larger aid received, although those are rare, there is a clear correlation between the amount of aid received and the income growth.

Similarly in Figure~\ref{WHO-VS}  (F), there are two types of countries -- one with small arms exports (less than $\$ 2 \times 10^8$) and the other with large arms exports (larger than $\$ 2\times 10^8$). Dominantly many countries (71 out of 82) belong to the first type, for which the amount of arms exports and the health expenditure are independent. For the remaining countries that belong to the second type, on the other hand, there is a visible correlation between the arms exports and the health expenditure. This is expected as for those countries that export arms the GDP is positively correlated with both arms exports and health expenditure, whereas for those do not have arms industry, these two will be independent.

 \begin{figure}[H]
 \centering
       \begin{subfigure}[t]{0.3\textwidth}
        \centering
        \includegraphics[width=\textwidth]{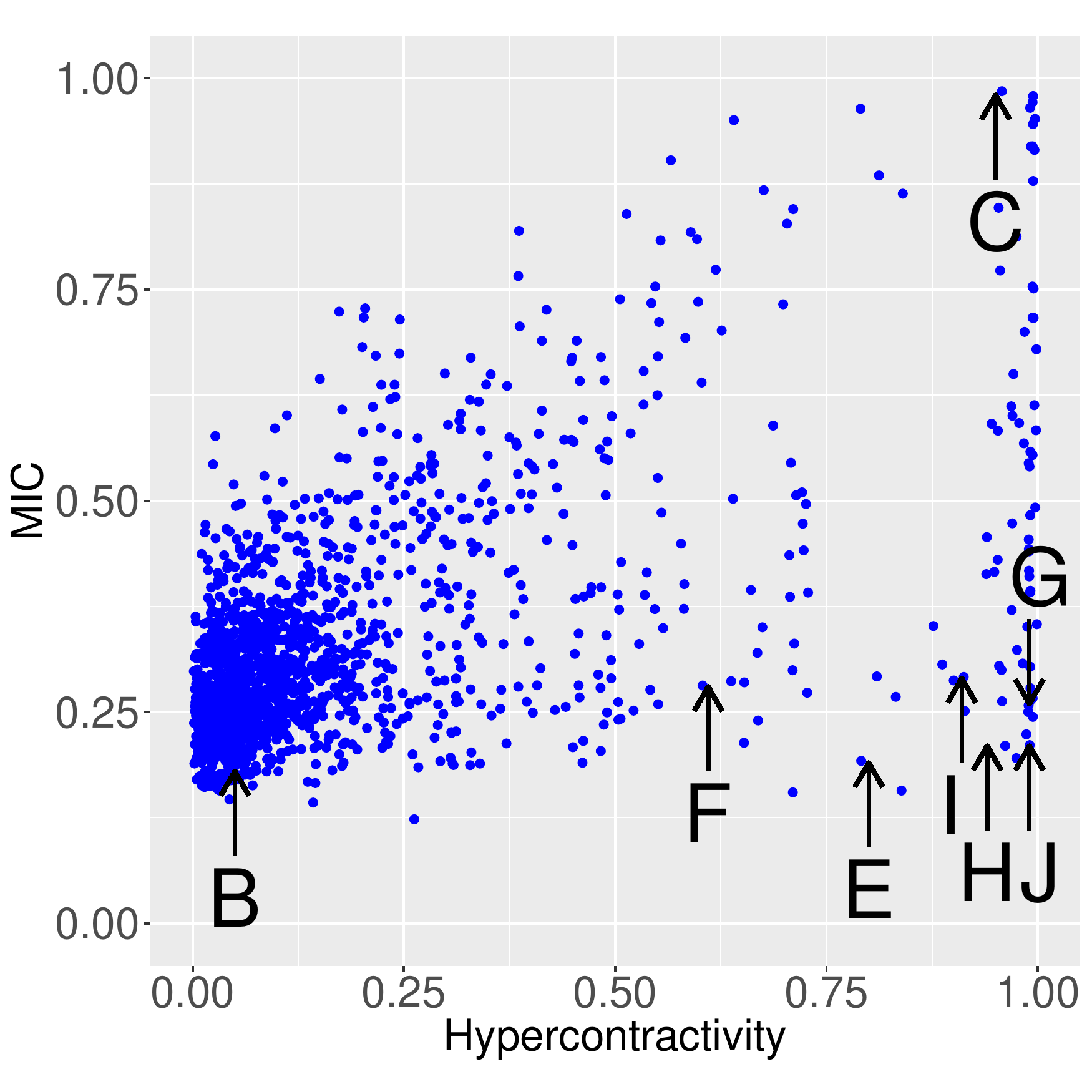}
        \caption{}
        \end{subfigure}\
                \begin{subfigure}[t]{0.3\textwidth}
        \centering
        \includegraphics[width=\textwidth]{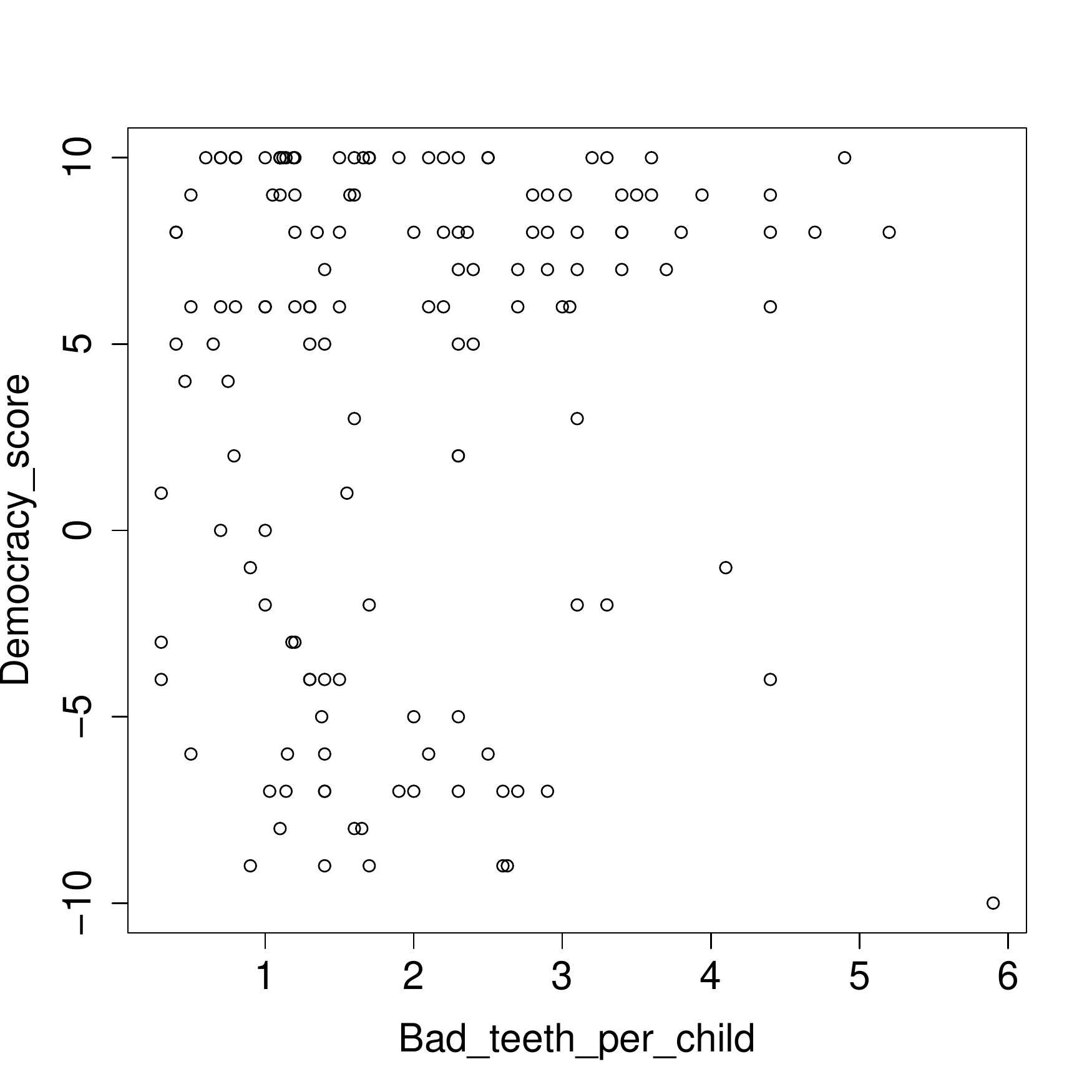}
        \caption{}
    \end{subfigure}\
        \begin{subfigure}[t]{0.3\textwidth}
        \centering
        \includegraphics[width=\textwidth]{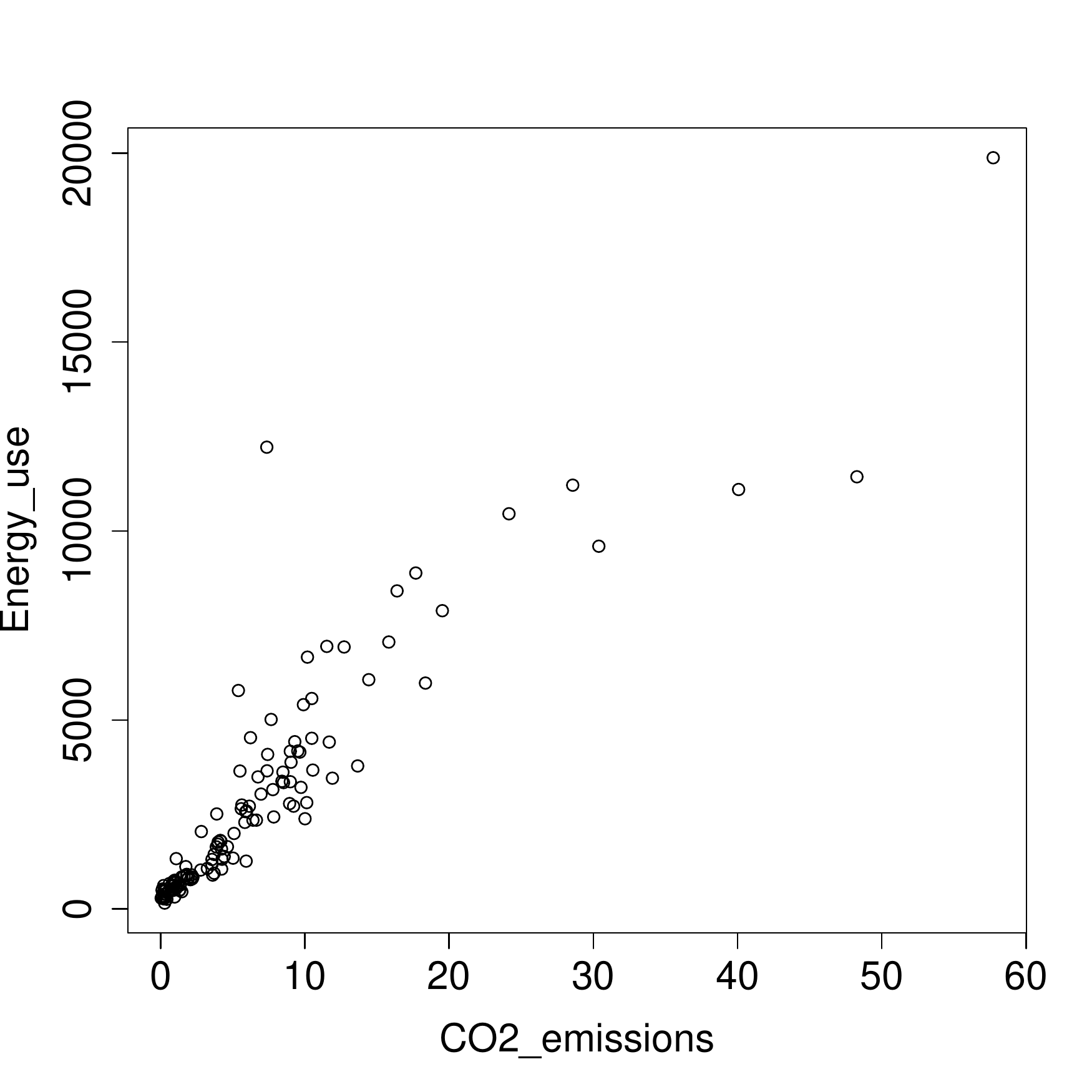}
        \caption{}
         \end{subfigure} 
                 \vspace{-2em}
\end{figure}
\begin{figure}[H]
    \ContinuedFloat 
\centering
                        \begin{subfigure}[t]{0.3\textwidth}
       \centering
        \includegraphics[width=\textwidth]{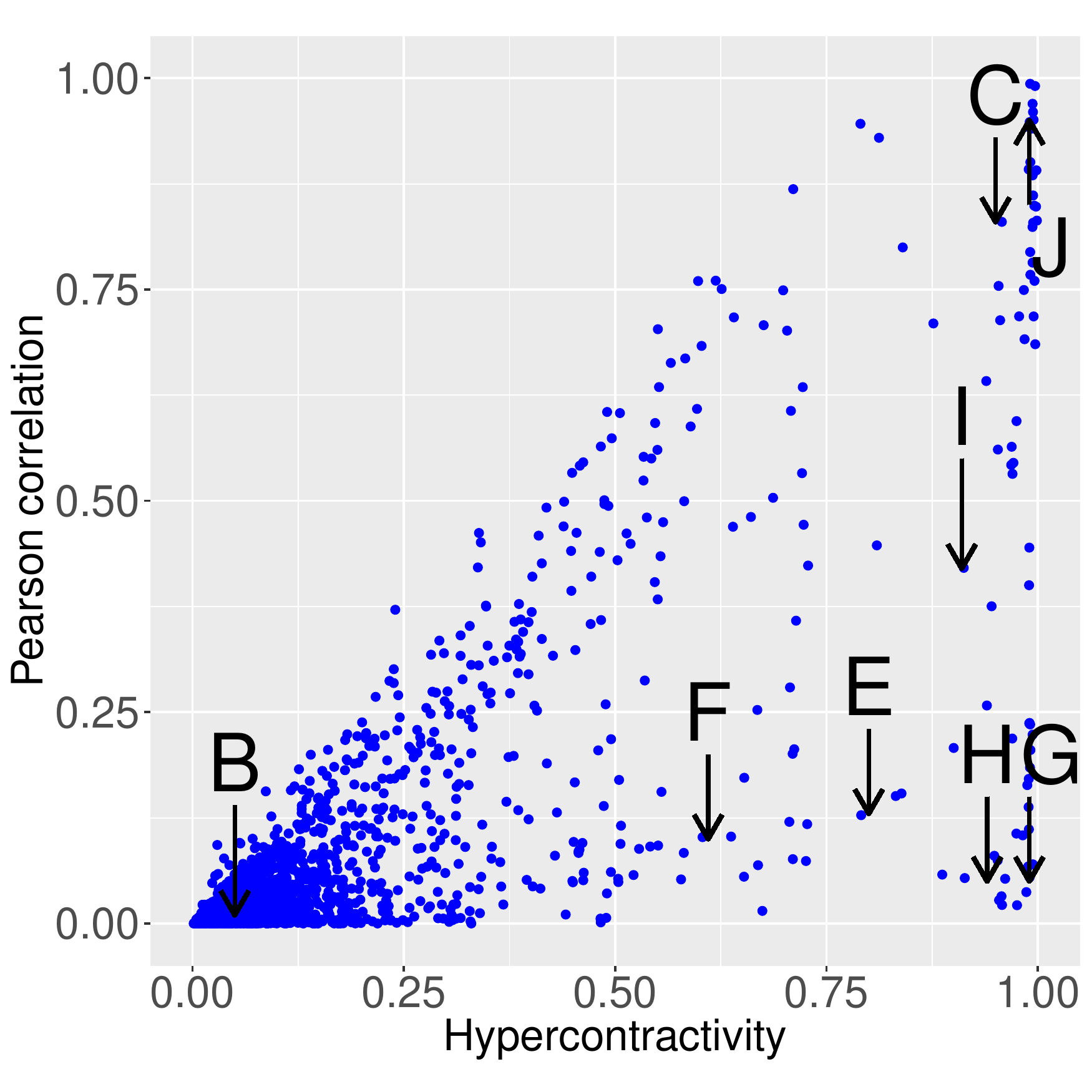}
       \caption{}
      \end{subfigure}\
        \begin{subfigure}[t]{0.3\textwidth}
        \centering
        \includegraphics[width=\textwidth]{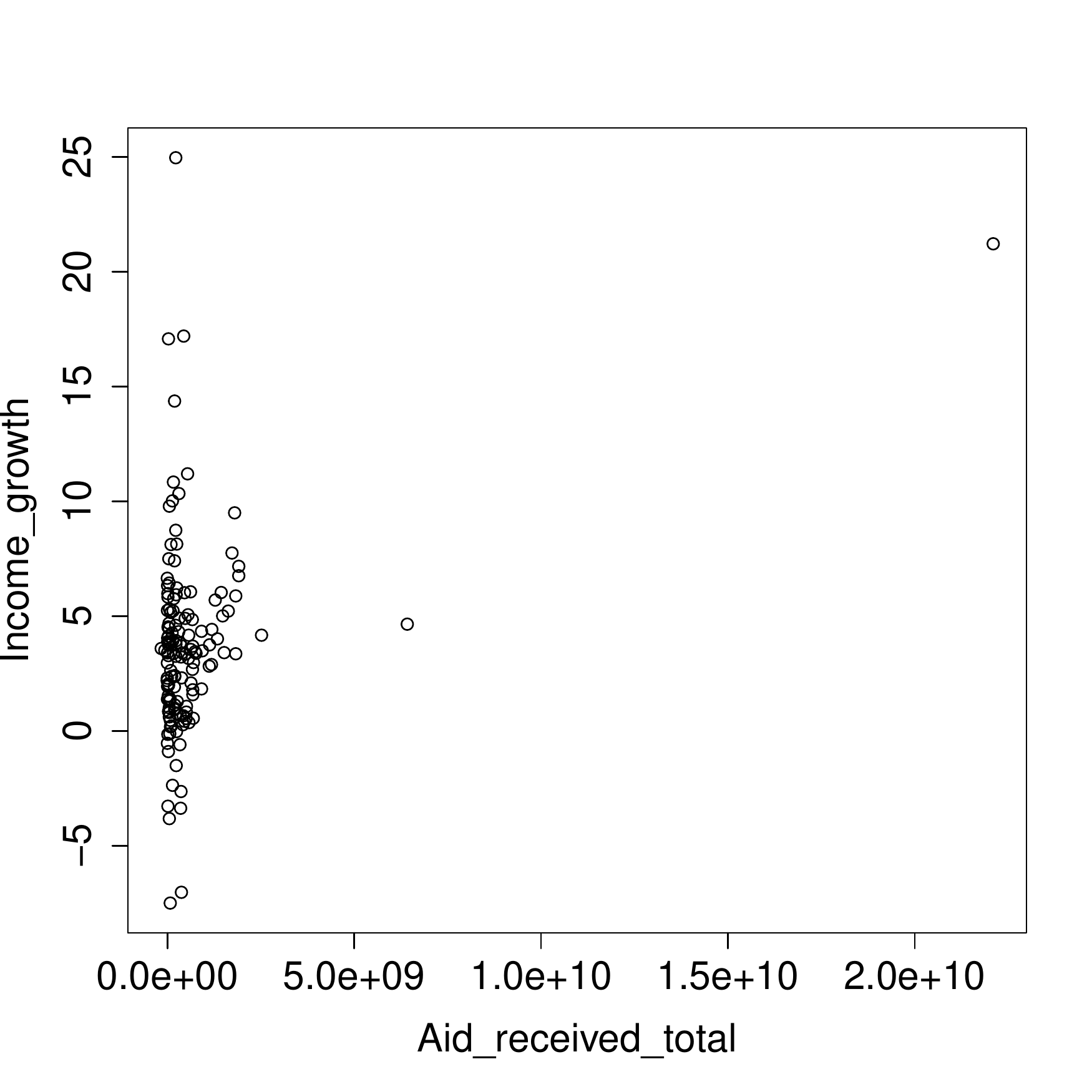} 
        \caption{}
        \end{subfigure}\
         \begin{subfigure}[t]{0.3\textwidth}
        \centering
        \includegraphics[width=\textwidth]{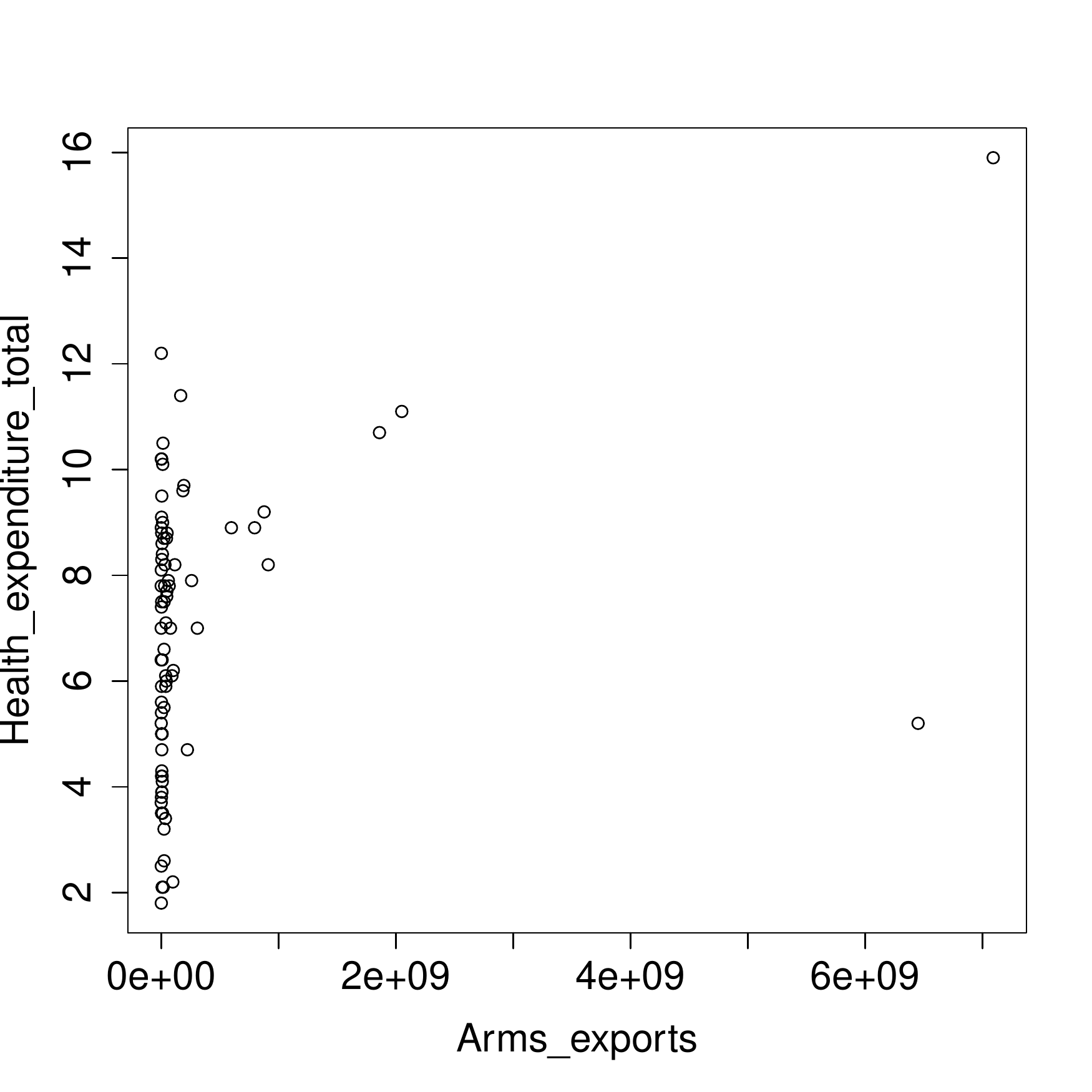}
        \caption{}
        \end{subfigure}
        \vspace{-2em}
\end{figure}
\begin{figure}[H]
    \ContinuedFloat 
\centering        
                \begin{subfigure}[t]{0.3\textwidth}
        \centering
        \includegraphics[width=\textwidth]{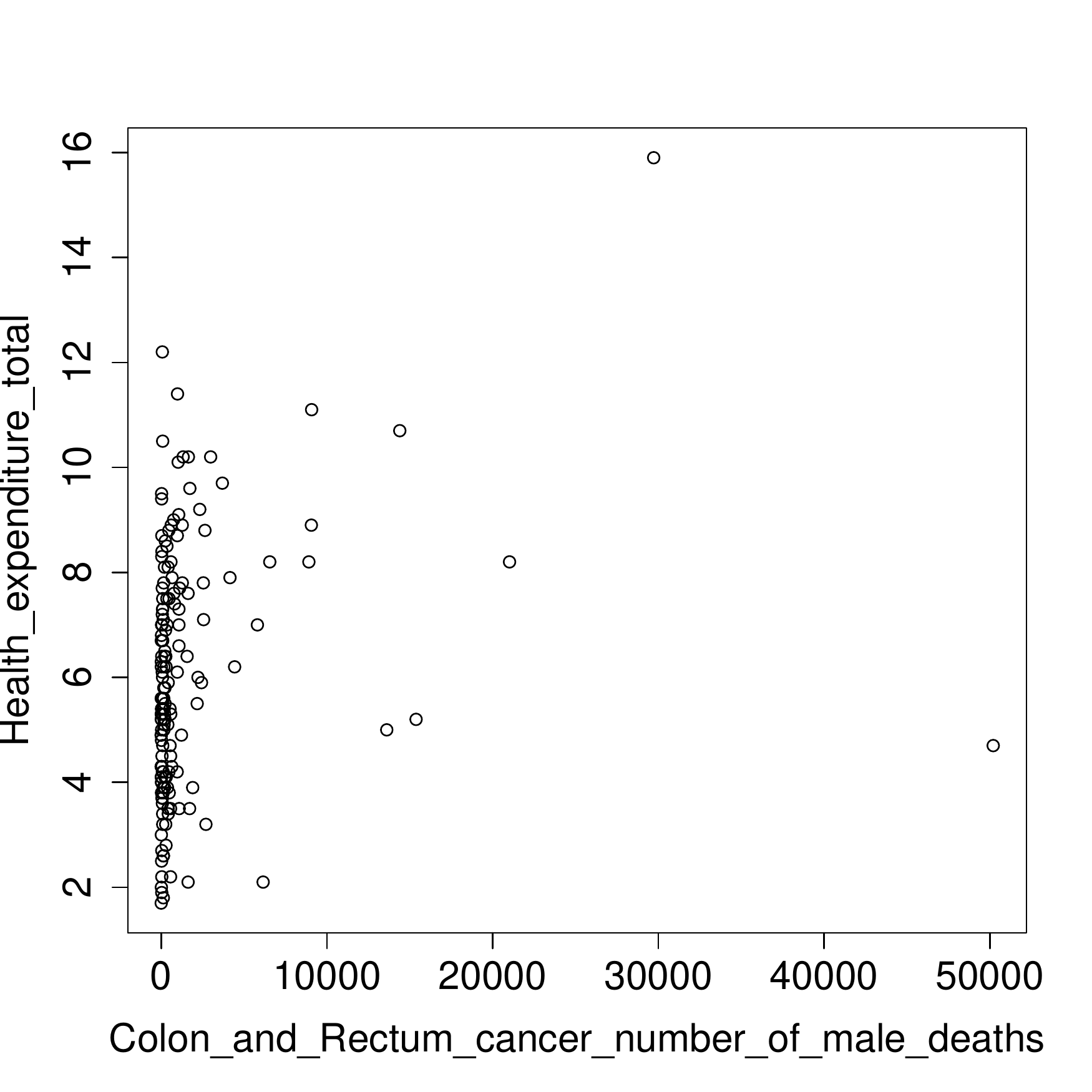}
        \caption{}
    \end{subfigure}\ 
        \begin{subfigure}[t]{0.3\textwidth}
        \centering
        \includegraphics[width=\textwidth]{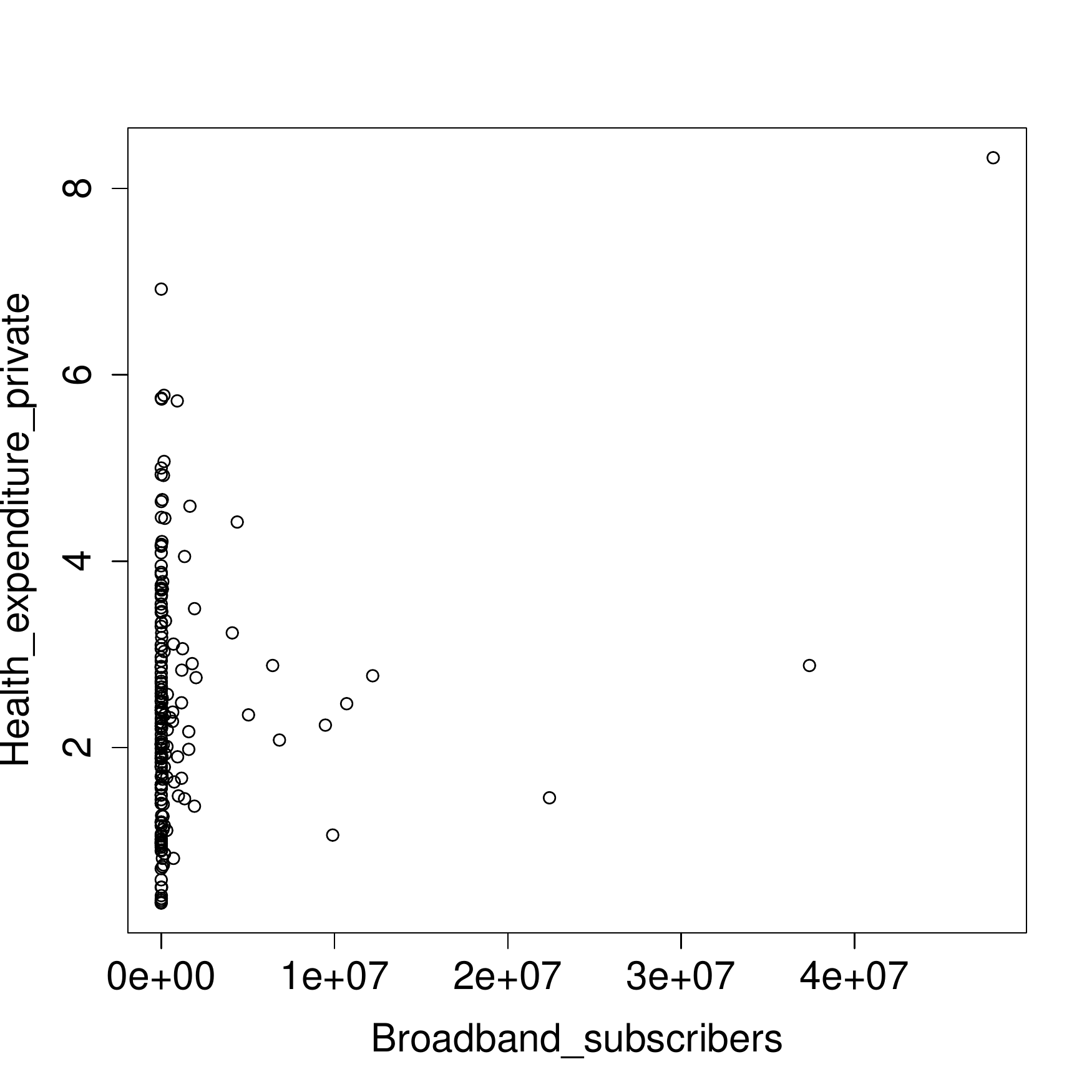}
        \caption{}
         \end{subfigure}
        \vspace{-2em}
         \end{figure}
\begin{figure}[H]
    \ContinuedFloat 
\centering
        \begin{subfigure}[t]{0.3\textwidth}
        \centering
        \includegraphics[width=\textwidth]{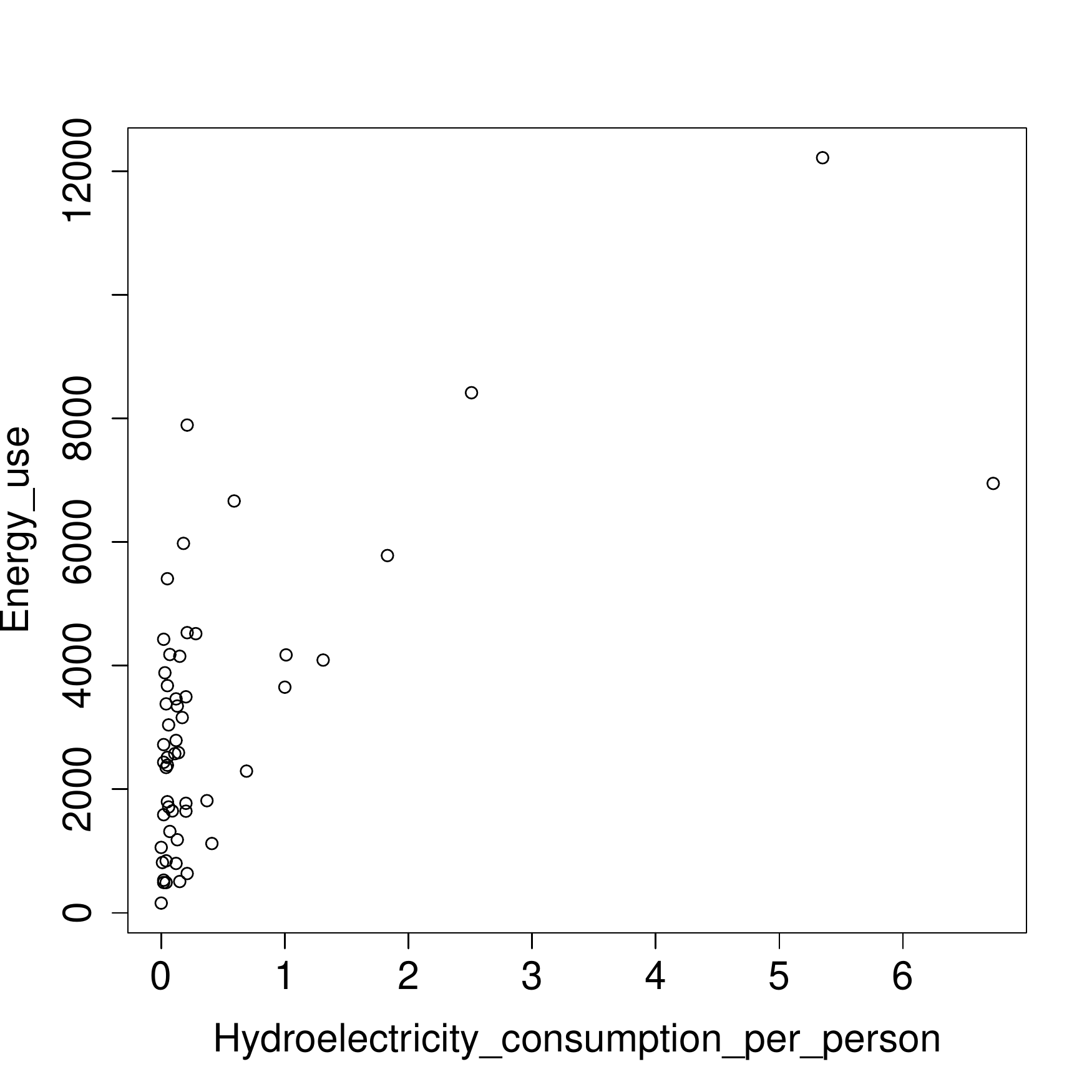} 
        \caption{}
        \end{subfigure}\ 
         \begin{subfigure}[t]{0.3\textwidth}
        \centering
        \includegraphics[width=\textwidth]{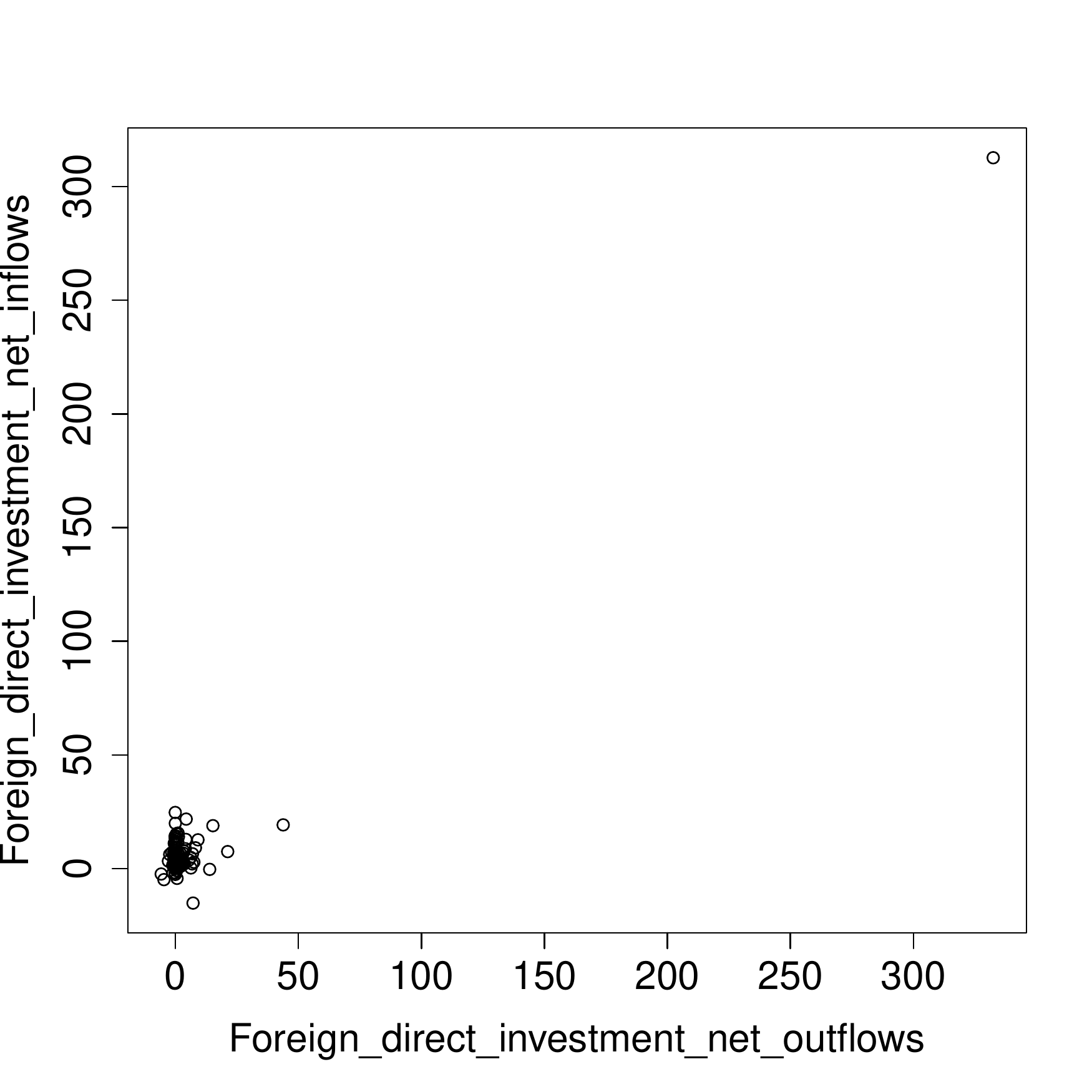}
        \caption{}
        \end{subfigure}
        \vspace{-0em}
\caption{(A) and (D): Scatter plot of correlation measures. (B): Correlations are small. (C): Correlations are large. (E),(F),(G),(H): Only the hypercontractivity coefficient discovers potential correlation. (I): Hypercontractivity discovers potential correlation. (J): Hypercontractivity and Pearson correlation are large because of an outlier. }\label{abcd}\label{WHO-VS}
    \end{figure}

In Figure~\ref{WHO-VS} (G), for dominant number of countries, the number of male deaths from the colon and rectum cancer is small (145 out of 169 countries have it less than 2000), and it is independent of the amount of health expenditure. On the other hand, for the remaining countries with larger number of male deaths from colon and rectum cancer, the two indicators are positively associated. This is expected as both indicators are positively correlated with the population. Only hypercontractivity discovers this hidden potential correlation. MIC and Pearson correlation are small. 

In Figure~\ref{WHO-VS} (H), for dominant number of countries, the number of broadband subscribers is very small and is independent of the private health expenditure; 155 out of 180 countries have broadband subscribers less than $10^6$. On the other hand, for the remaining countries, the number of broadband subscribers is positively correlated with the private health expenditure. This is as expected because both indicators are positively correlated with the population. Hypercontractivity is large for this dataset, discovering the hidden correlation, whereas all other correlations all small. 

In Figure~\ref{WHO-VS} (I), most countries do not have large hydroelectricity facilities, and for those countries, energy use and hydroelectricity consumption are independent (41 out of 53 countries have hydroelectricity $\le$ 0.25). On the other hand, for the countries which have hydroelectrocity facilities, the amount of total energy use and the amount of hydroelectricity consumption are positively correlated. Hypercontractivity discovers this hidden potential correlation.  Unlike in (G) and (H) for which the fraction of correlated samples was only about $14\%$, in (I), the fraction of correlated samples is about $23\%$. Hence, Pearson correlation is larger compared to Pearson correlation values for (G) and (H).


In Figure~\ref{WHO-VS} (J), there is one country (Luxembourg) with very large amounts of foreign direct investment net inflow and outflow. Due to this outlier, Pearson correlation is close to 1. Hypercontractivity is also close to 1, whereas MIC is small. To analyze the effect of the outlier in correlation measures, in the following, we compute the correlation measures for samples without an outlier.

\subsubsection{How hypercontractivity changes as we remove outliers}
 Figure~\ref{fig-repeat1}--\ref{more-good2}, on the left, are shown samples from Figure~\ref{WHO-VS} (E),(F),(G),(H),(I),(J) respectively. On the middle and on the right are shown all samples but one outlier and all samples but two outliers, respectively. By comparing the hypercontractivity coefficients for the three datasets for each pair of indicators, we can analyze the effect of outliers on hypercontractivity. For a comparison, on the top of each figure, we show the estimated hypercontractivity (HC), MIC, Pearson correlation (Cor), distance correlation (dCor), maximal correlation (mCor), and the hypercontractivity for reversed direction (HCR). 
In Figures~\ref{fig-repeat1} and~\ref{fig-repeat2}, we can see that hypercontractivity is more sensitive to an outlier than other correlation measures. 

 \begin{figure}[h]
        \begin{subfigure}[t]{\textwidth}
        \centering
        \includegraphics[width=0.3\textwidth]{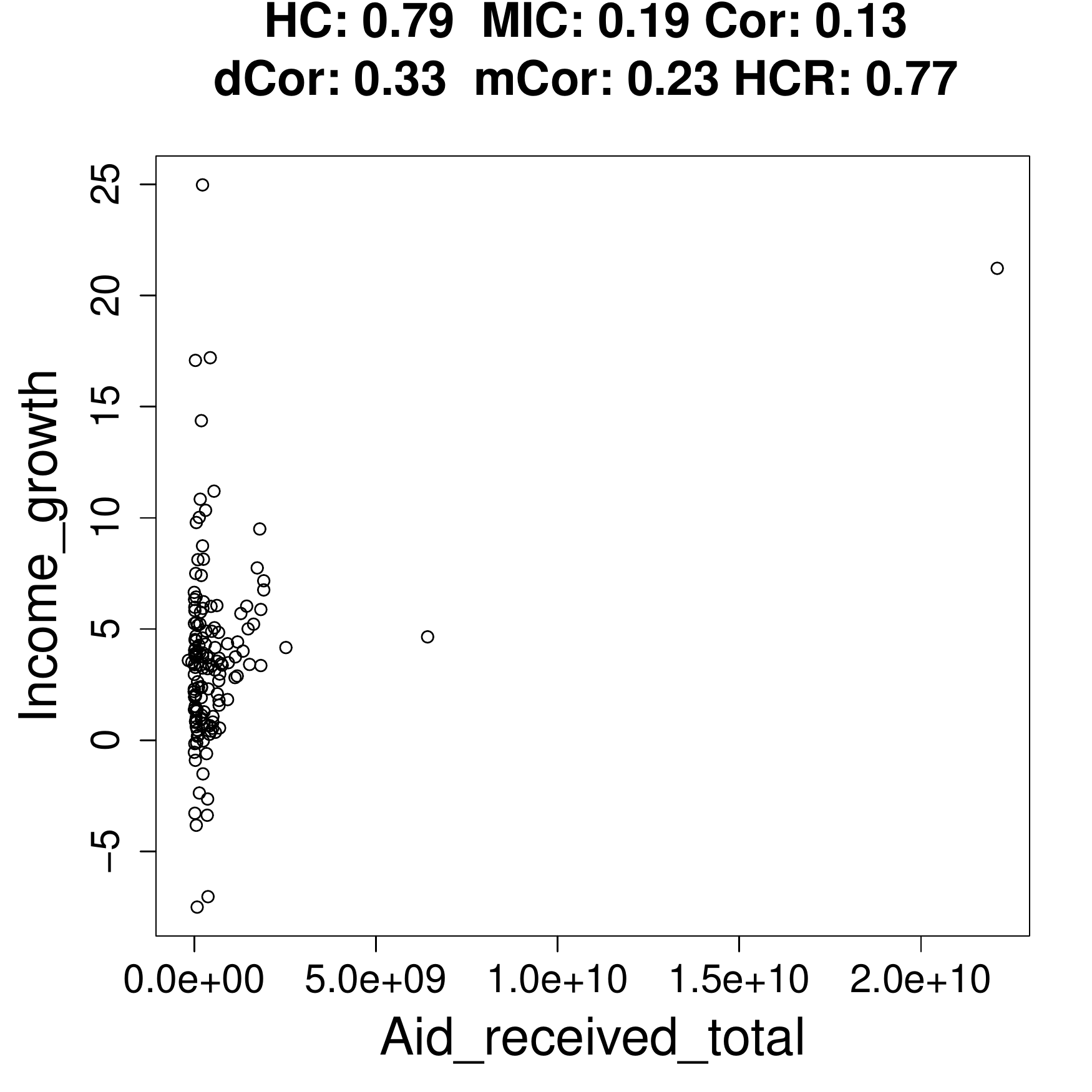} 
        \includegraphics[width=0.3\textwidth]{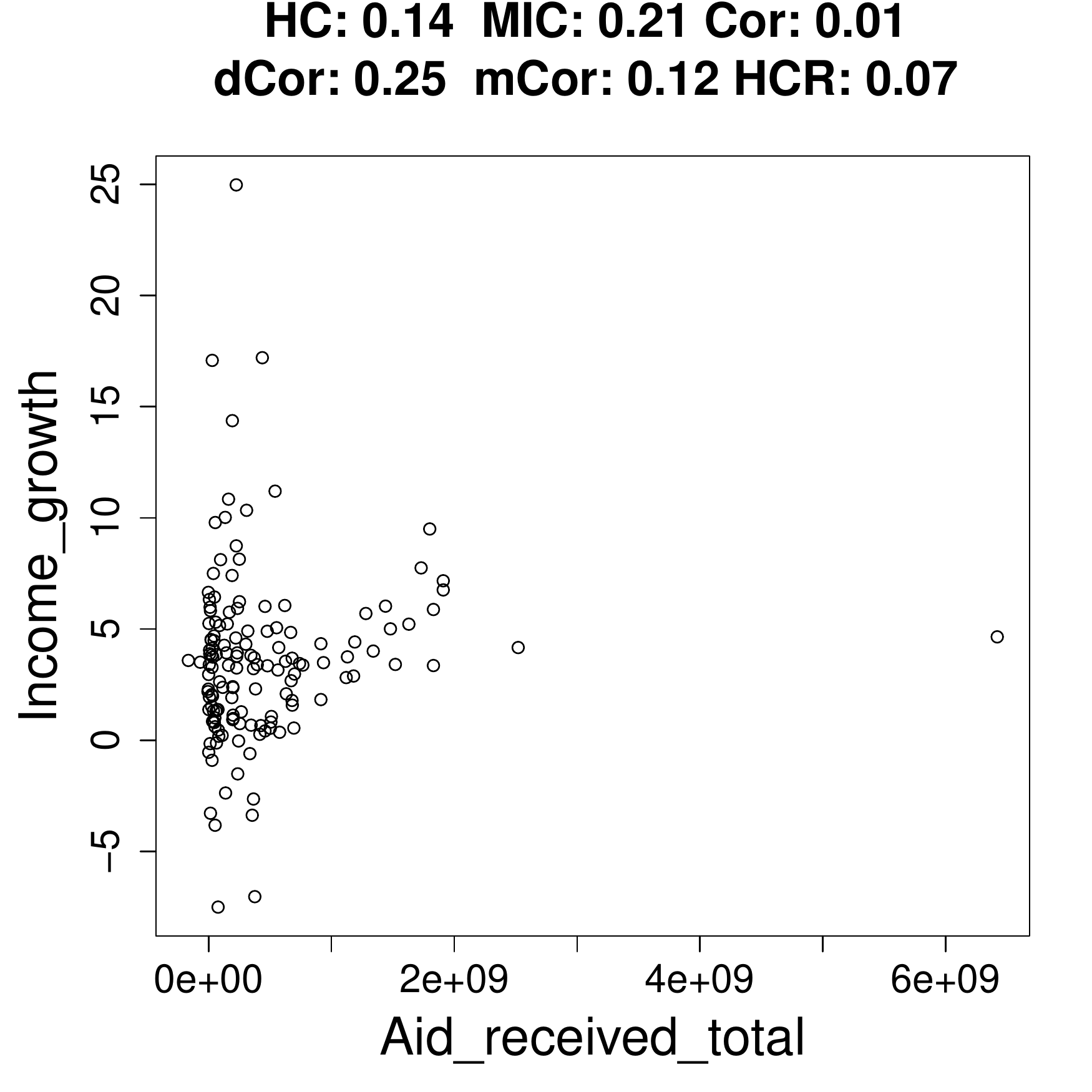}
        \includegraphics[width=0.3\textwidth]{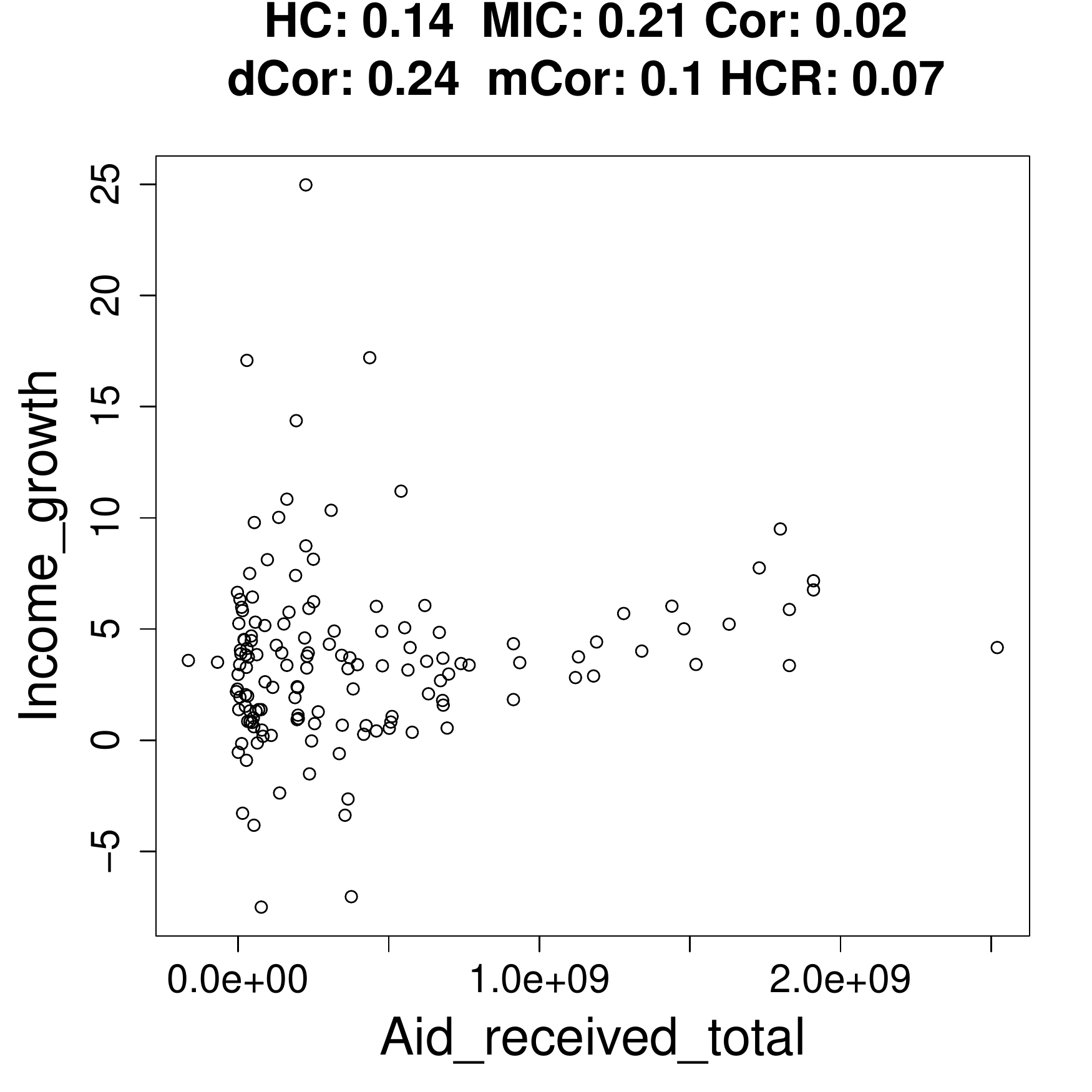}
     \caption{}
     \end{subfigure} 
     \caption{Samples for the pair of indicators shown in Figure~\ref{WHO-VS}-(E) from the entire WHO dataset (left), without one outlier (middle), and without two outliers (right).}\label{fig-repeat1}
\end{figure}

\begin{figure}[h]
        \begin{subfigure}[t]{\textwidth}
        \centering
        \includegraphics[width=0.3\textwidth]{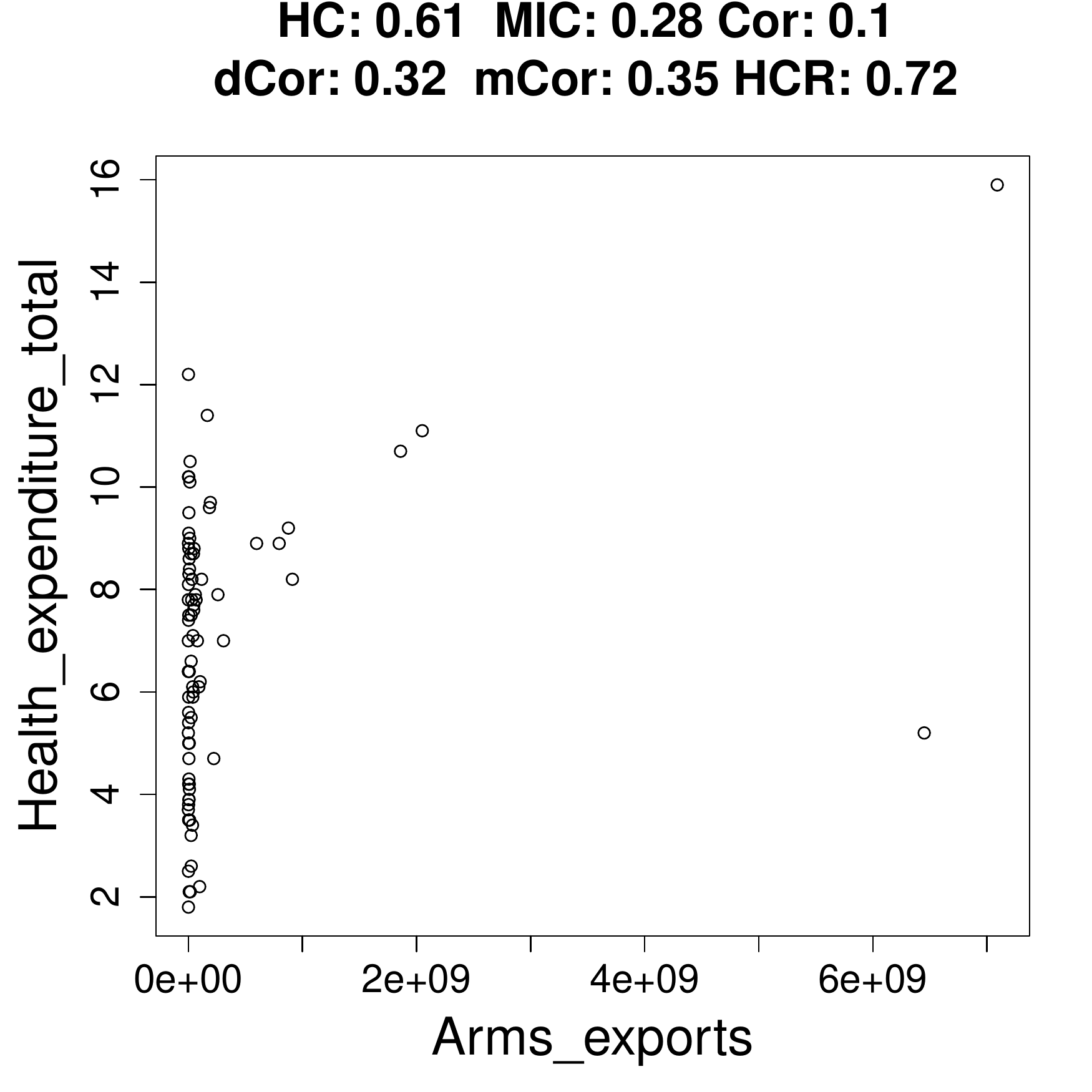} 
        \includegraphics[width=0.3\textwidth]{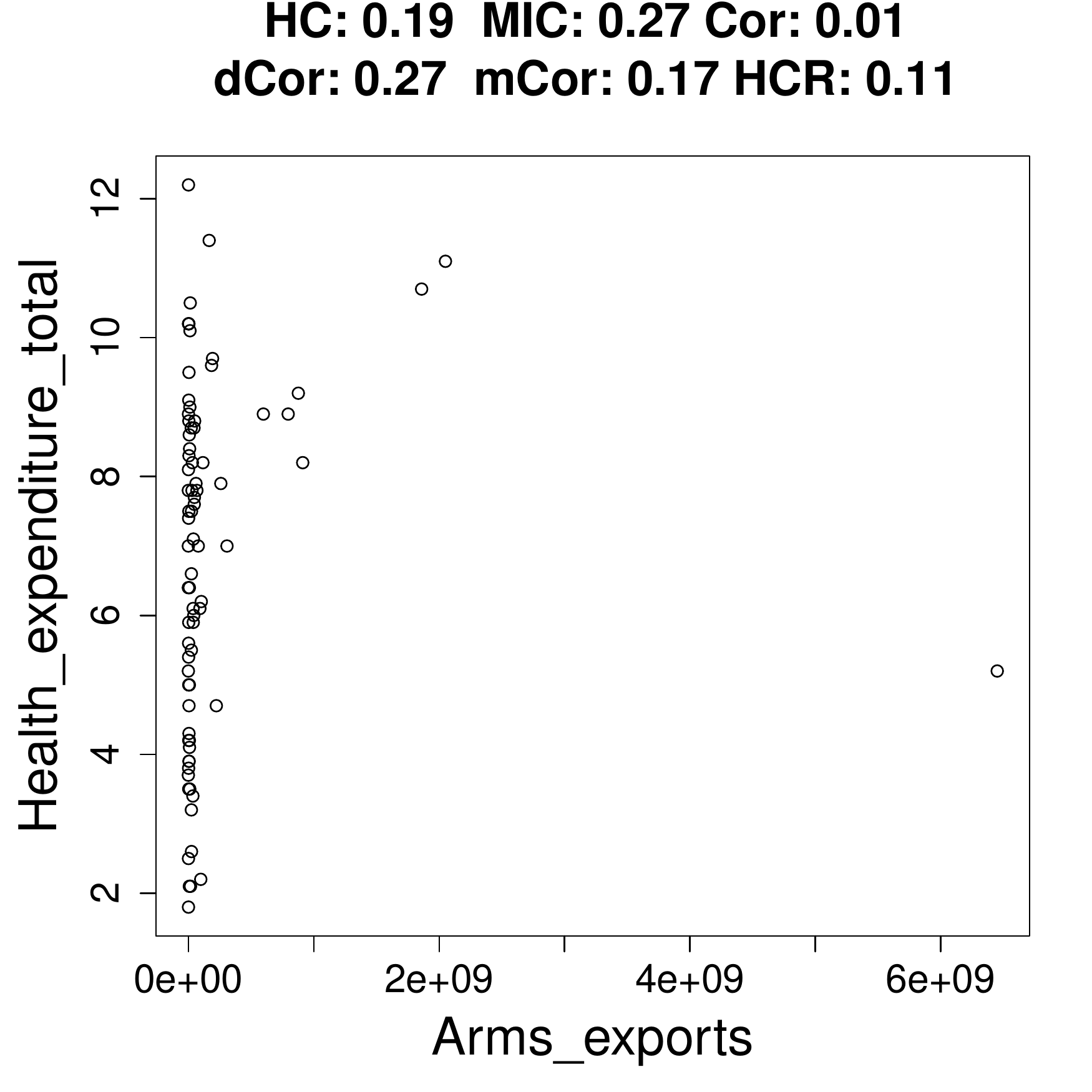}
        \includegraphics[width=0.3\textwidth]{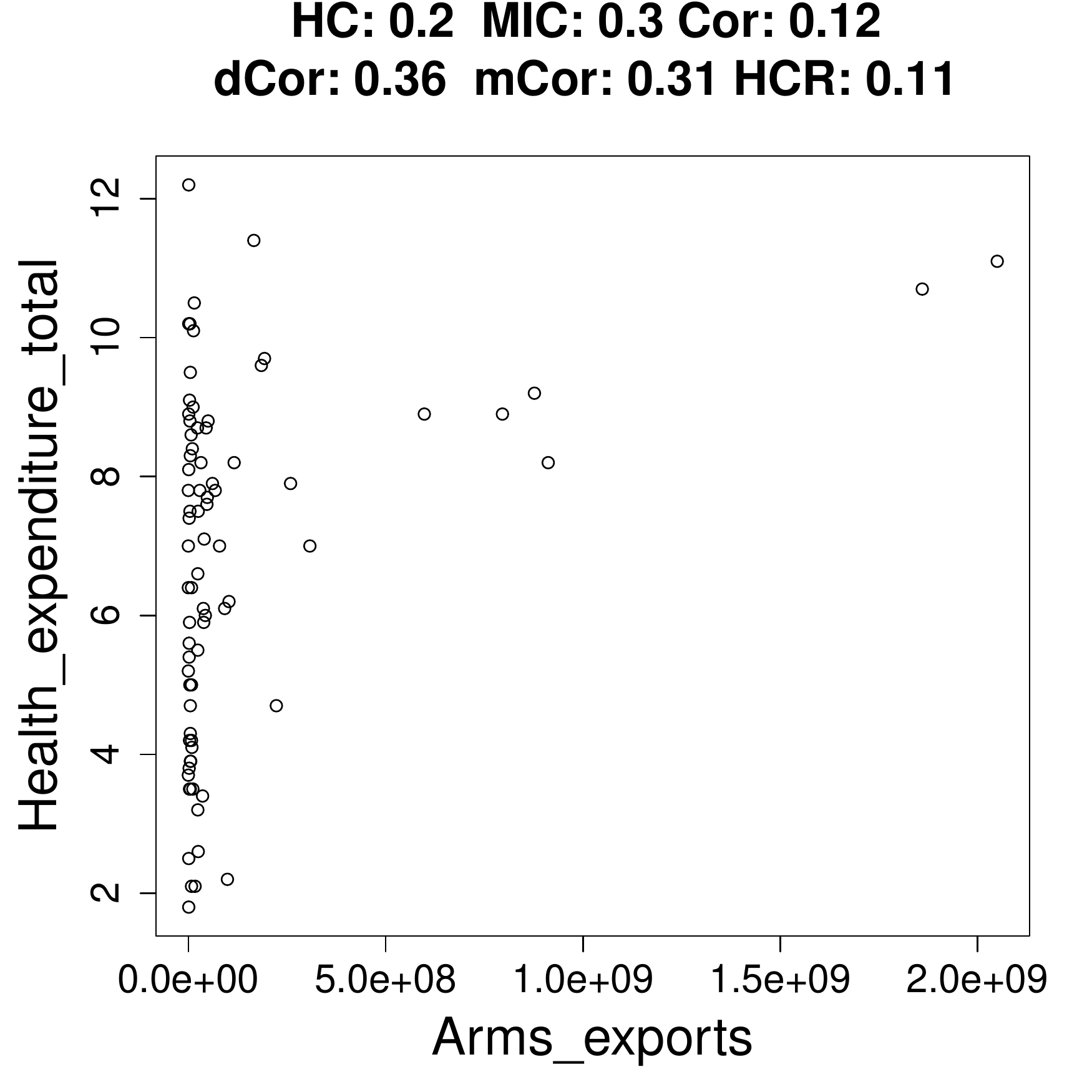}
        \caption{}
     \end{subfigure} 
  \caption{Samples for the pair of indicators shown in Figure~\ref{WHO-VS}-(F) from the entire WHO dataset (left), without one outlier (middle), and without two outliers (right).}\label{fig-repeat2}
\end{figure}

In Figure~\ref{more-good3} (left), 
the two countries with the largest number of male deaths from the colon and rectum cancer are China and United States. As China is removed from the dataset, in (middle), hypercontractivity remains unchanged. As we also remove United States, in (right), hypercontractivity becomes small, 0.17. This value is still larger than the typical coefficient for two independent indicators ($\approx 0.05$), we can see that hypercontractivity is more sensitive to the outlier than other correlation measures.  

 \begin{figure}[H]
        \centering
        \includegraphics[width=0.3\textwidth]{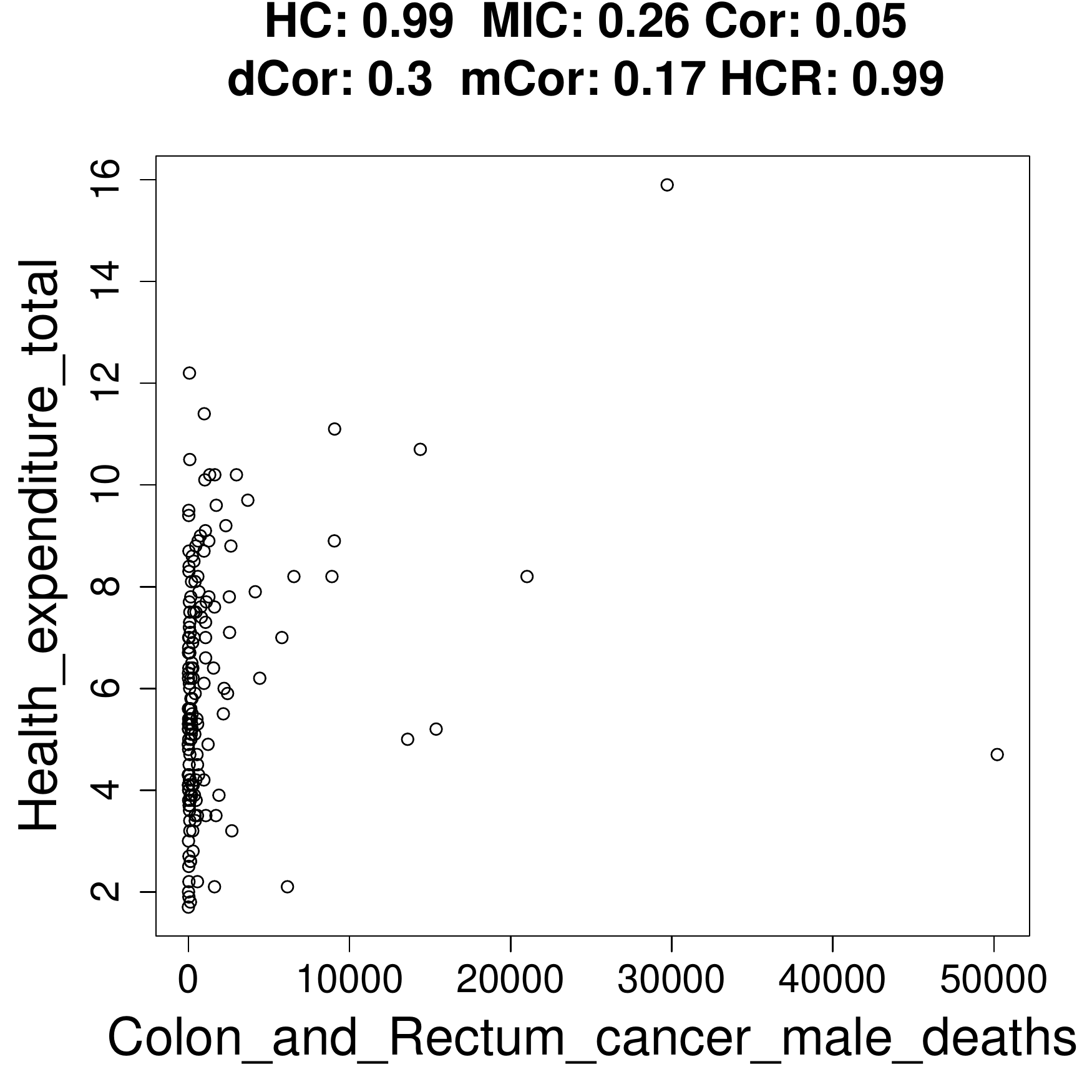}
        \includegraphics[width=0.3\textwidth]{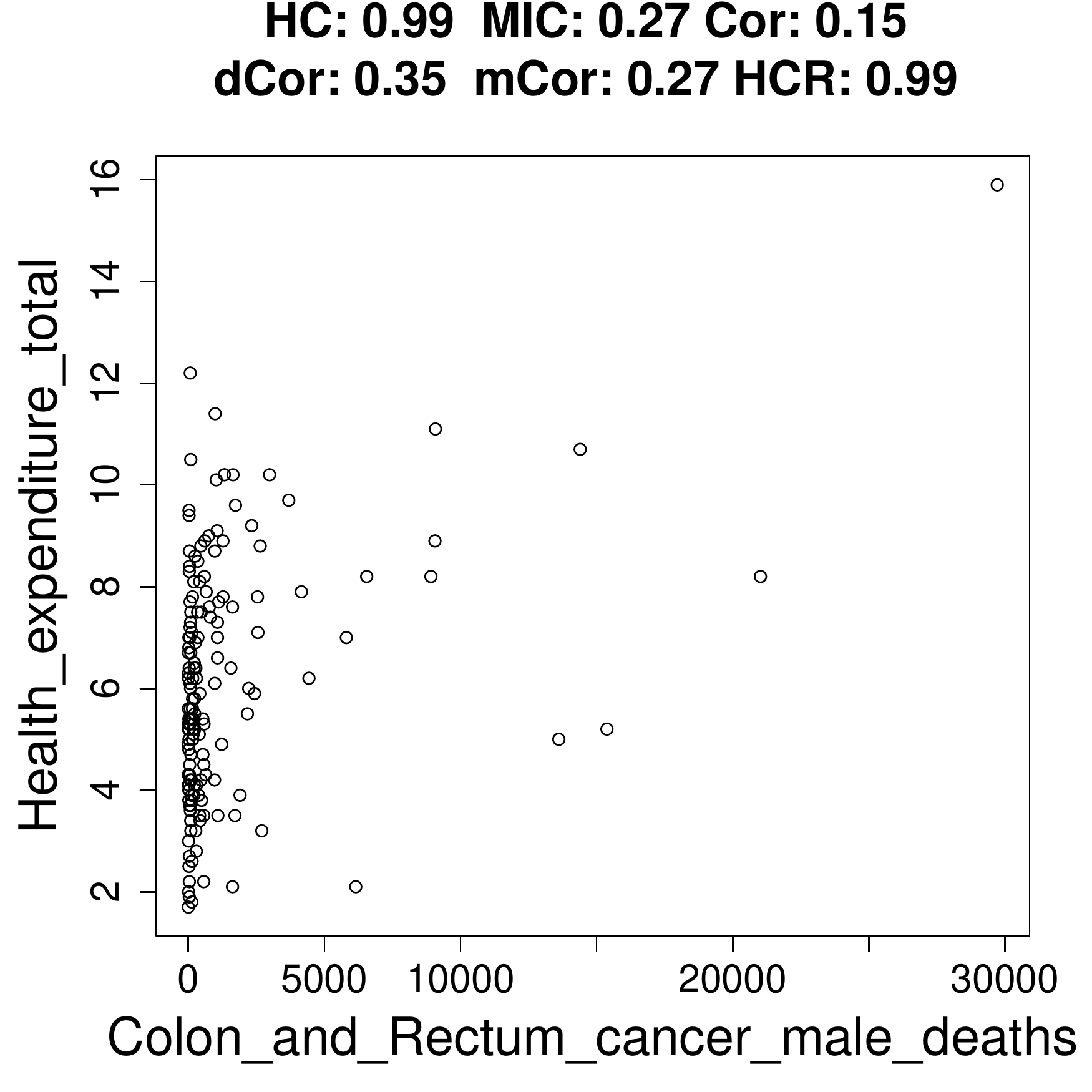}
        \includegraphics[width=0.3\textwidth]{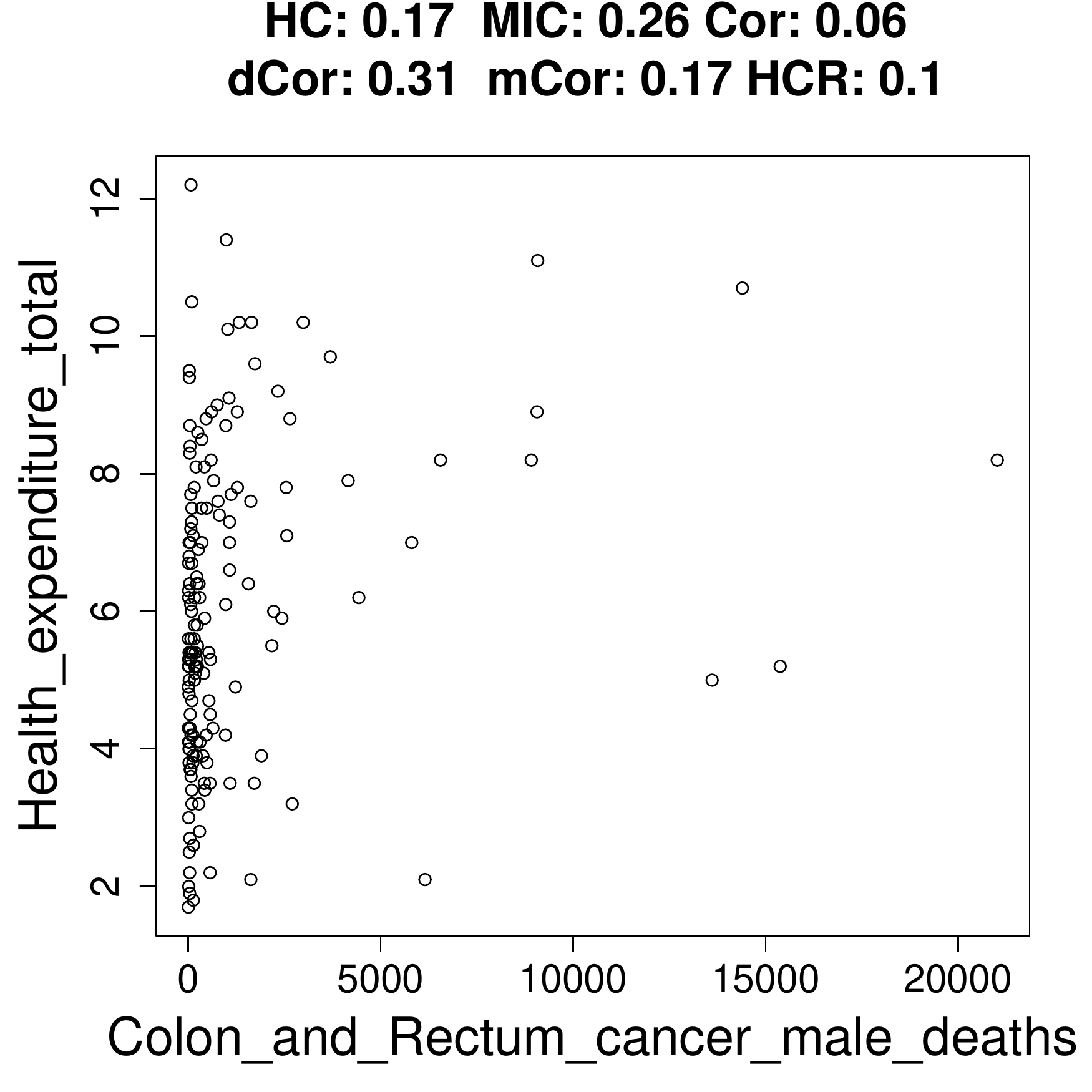}
                \caption{Samples for the pair of indicators shown in Figure~\ref{WHO-VS}-(G) from the entire WHO dataset (left), without one outlier (middle), and without two outliers (right).} \label{more-good3} 
    \end{figure}
    \vspace{-1em}

In Figure~\ref{more-good4},  
the two countries with the largest number of broadband subscribers are United States and China. When we remove United States from the samples, hypercontractivity becomes close to zero, which also shows hypercontractivity is sensitive to the outliers. 
 \begin{figure}[H]
        \centering
        \includegraphics[width=0.3\textwidth]{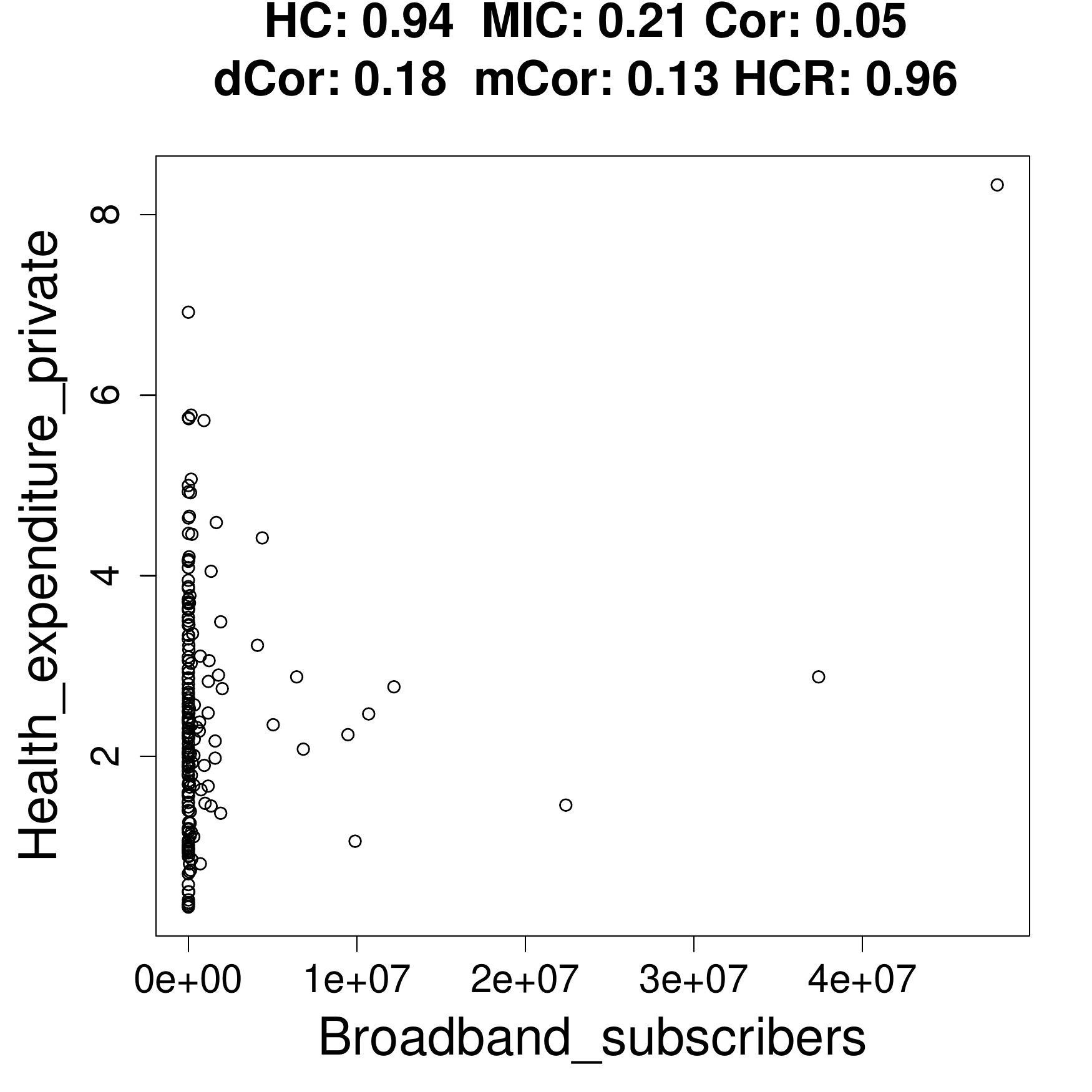}
        \includegraphics[width=0.3\textwidth]{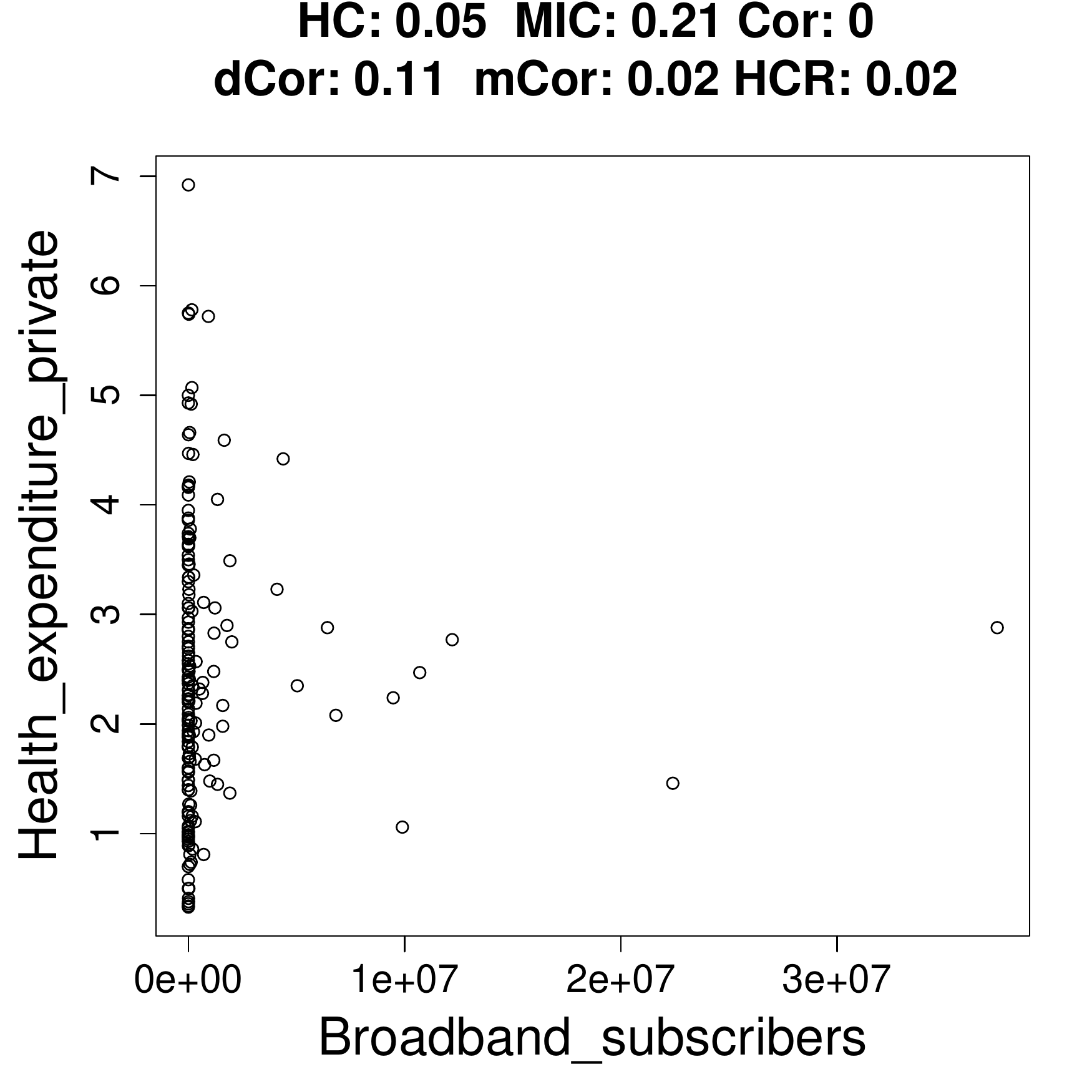}
        \includegraphics[width=0.3\textwidth]{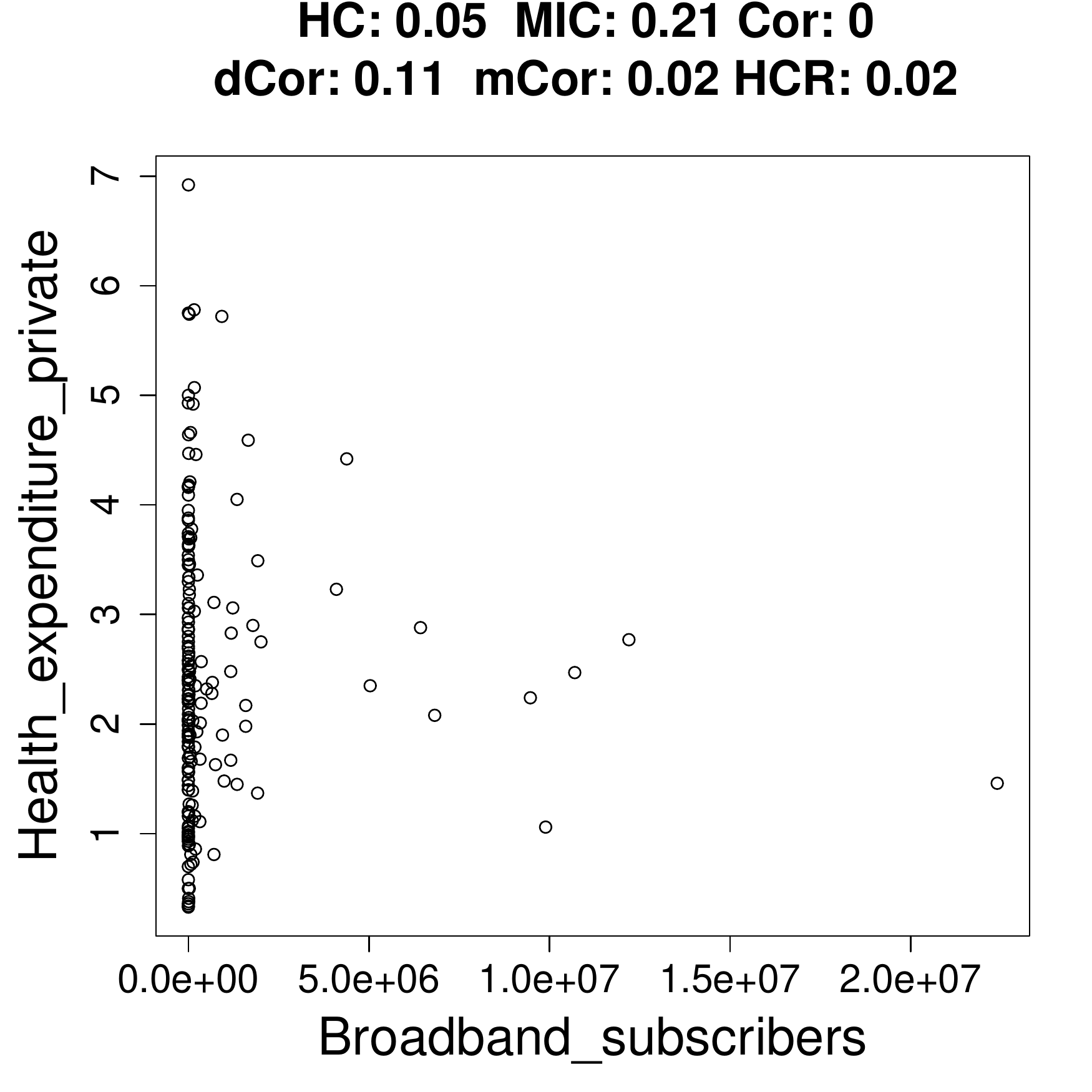}
                \caption{Samples for the pair of indicators shown in Figure~\ref{WHO-VS}-(H) from the entire WHO dataset (left), without one outlier (middle), and without two outliers (right).} \label{more-good4} 
    \end{figure}
\vspace{-1em}
    
In Figure~\ref{more-good1}, hypercontractivity remains large even after we remove outliers. 
The two countries with the largest amount of hydroelectricity consumption are Norway and Iceland. Even after we remove Norway from the samples, as shown in (middle), hypercontractivity remains large. As we further remove one outlier (Iceland) from the samples, as shown in (right), hypercontractivity becomes 0.49.  
\begin{figure}[H]     
        \centering
        \includegraphics[width=0.3\textwidth]{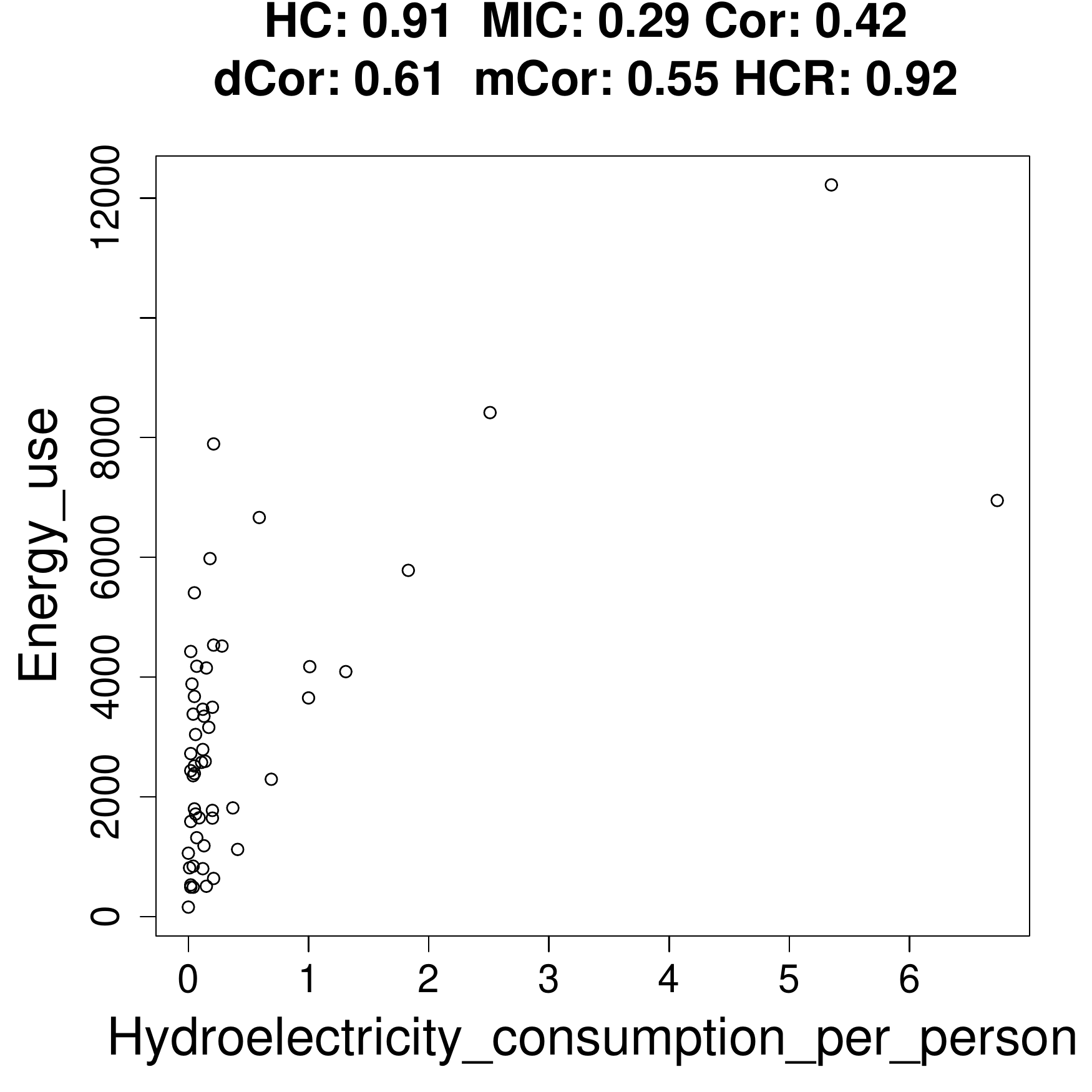} 
        \includegraphics[width=0.3\textwidth]{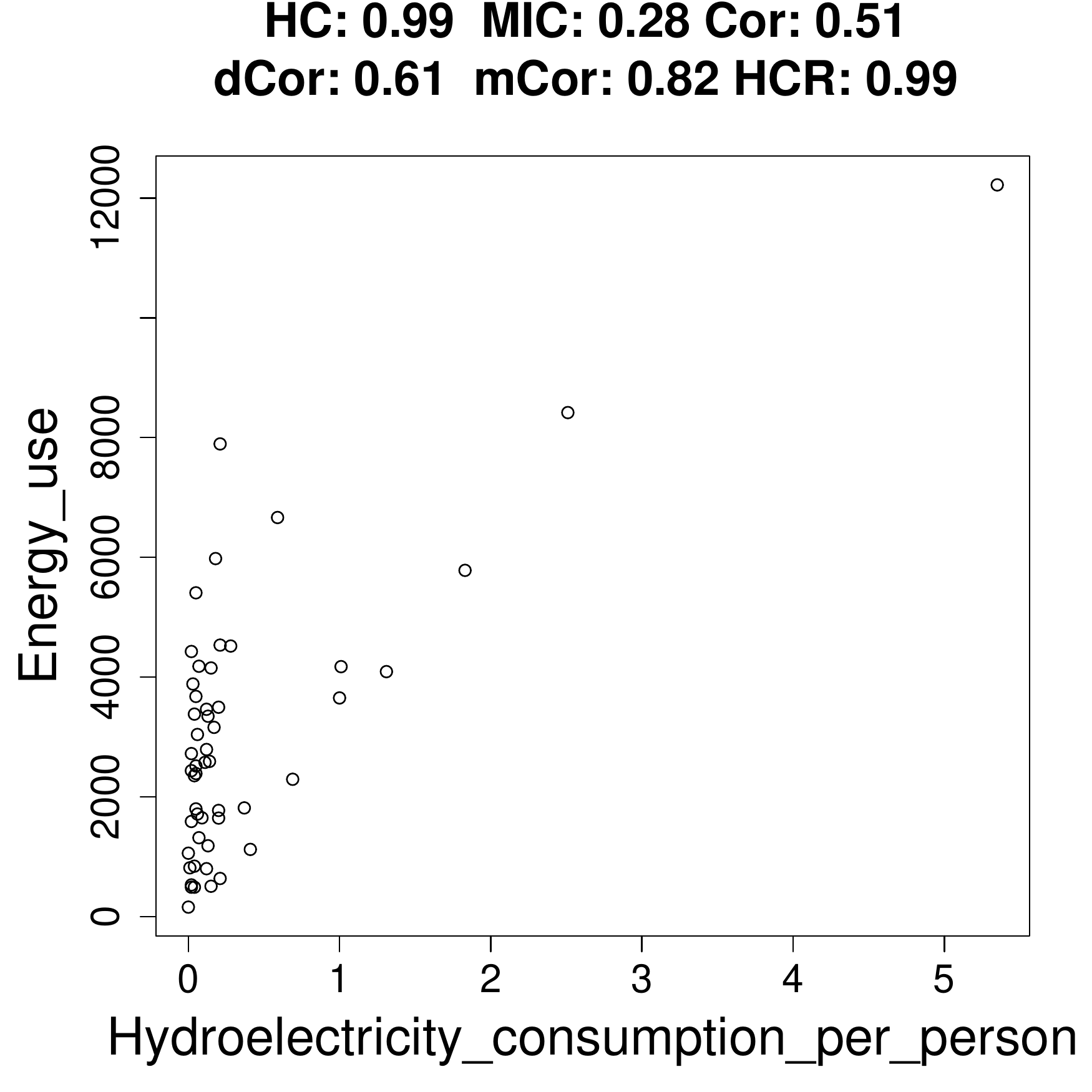} 
        \includegraphics[width=0.3\textwidth]{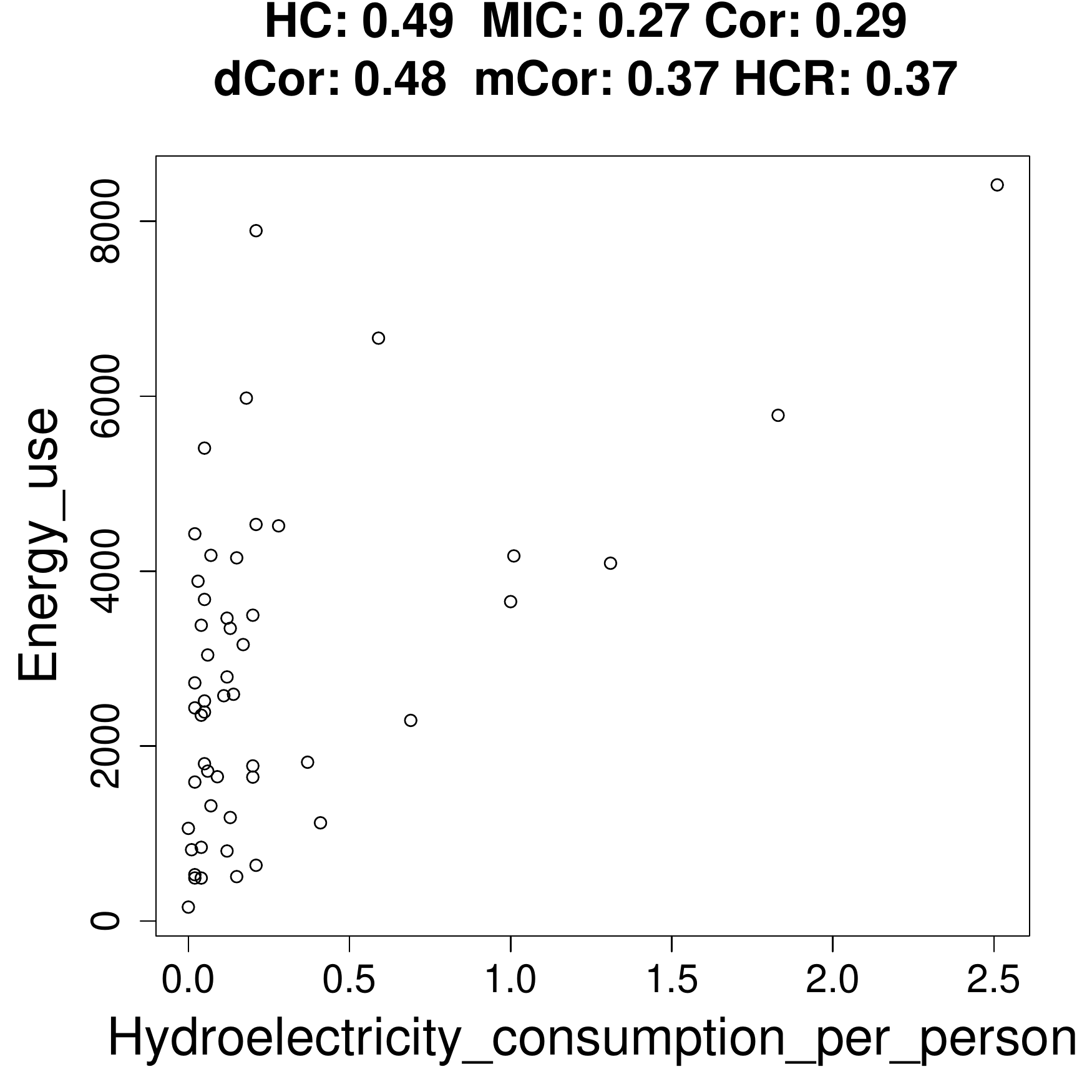} 
        \caption{Samples for the pair of indicators shown in Figure~\ref{WHO-VS}-(I) from the entire WHO dataset (left), without one outlier (middle), and without two outliers (right).} \label{more-good1}
\end{figure}

In Figure~\ref{more-good2}, (middle), all samples but Luxembourg is shown.  
We can see that most countries have a very small absolute amount of foreign direct investment net outflows (For 126 out of 157 countries, it is between $[-2,2]$), and for those countries, the foreign direct investment net outflow is independent of foreign direct investment net inflows. For the remaining countries, there is a positive association between the outflow and the inflow. Hypercontractivity captures this hidden correlation better than other correlations; hypercontractivity is 0.47, whereas MIC and Pearson correlation are small. If we further remove the rightmost sample, as shown in (right), hypercontractivity becomes small. 
 \begin{figure}[H]    
        \centering
        \includegraphics[width=0.3\textwidth]{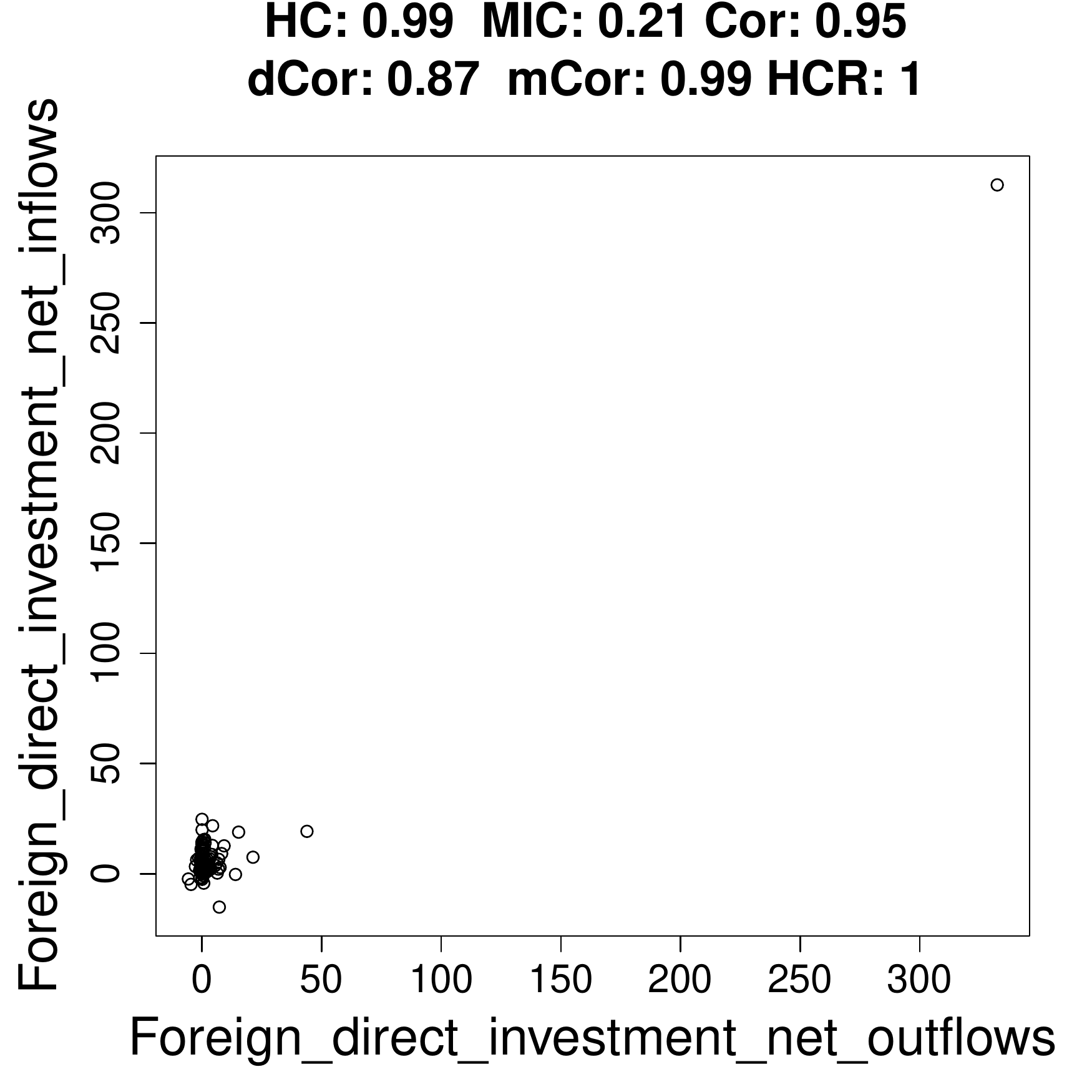}
        \includegraphics[width=0.3\textwidth]{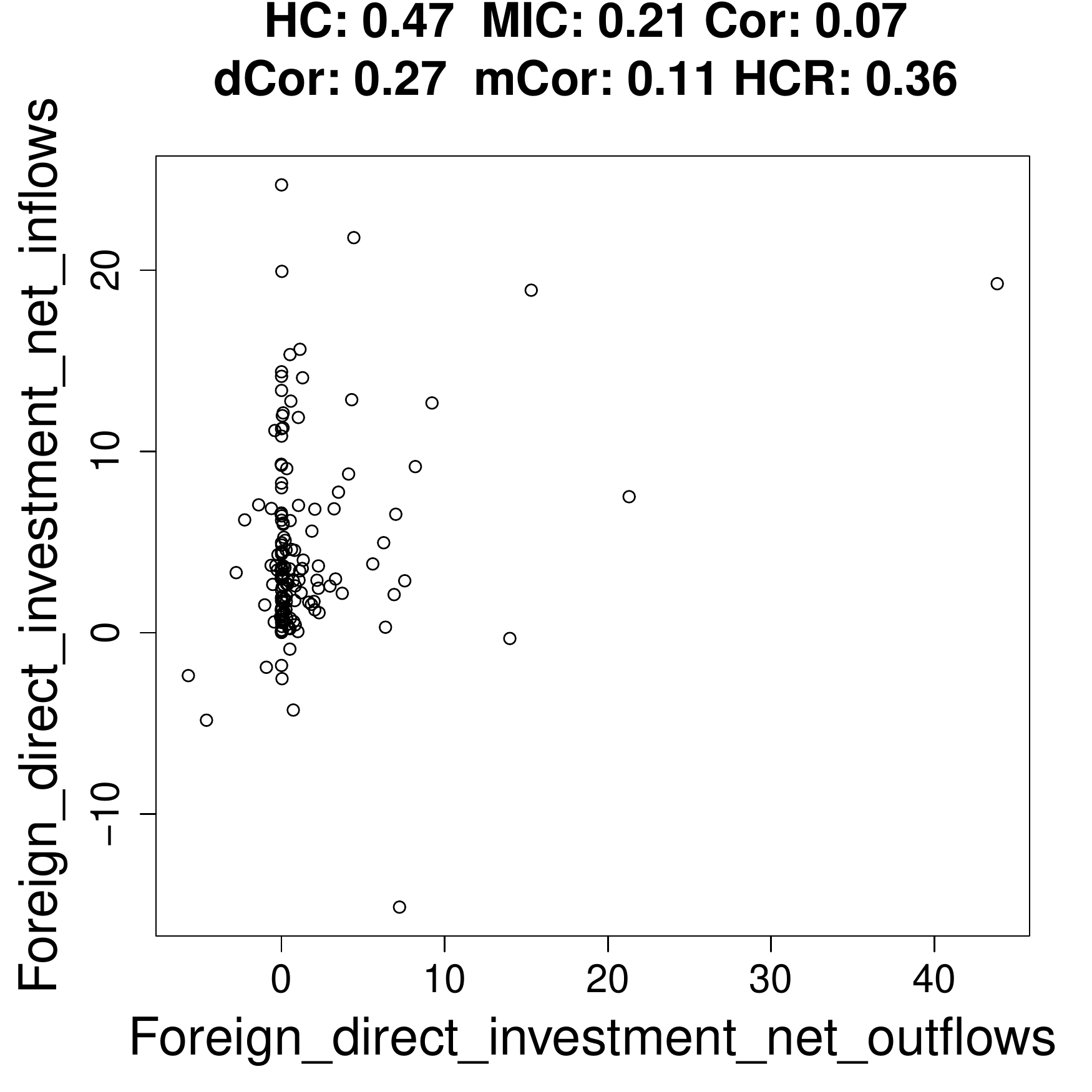}
        \includegraphics[width=0.3\textwidth]{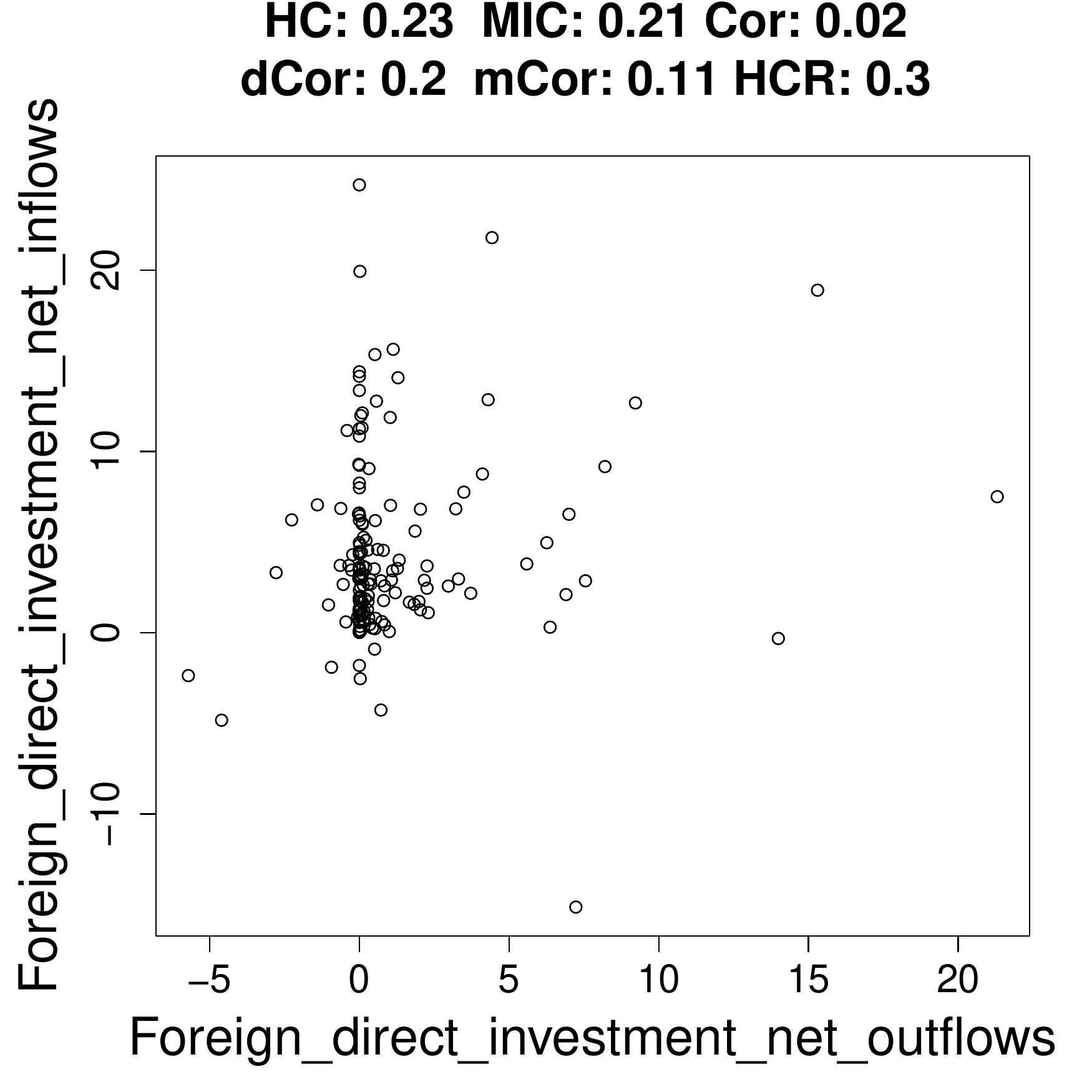}
                \caption{Samples for the pair of indicators shown in Figure~\ref{WHO-VS}-(J) from the entire WHO dataset (left), without one outlier (middle), and without two outliers (right).} \label{more-good2}
    \end{figure}

Whether we should consider a sample in a rare type as a meaningful sample or as an outlier depends on the application. 
If we use hypercontractivity to discover a pair of measures for which one variable can be potentially correlated with the other, then we would expect to discover that 
an aid for a country has potential correlation in the income growth. 
Other measures will fail.  
It is possible that hypercontractivity might have larger false positive rate, 
and depending on the application, one might prefer to error on the side of having more 
positive cases to be screened by further experiments, surveys, or human judgements.

\subsubsection{Hypercontractivity detecting an outlier}
In Figure~\ref{hc-outlier} (A) and (B), we show examples of pairs of indicators for which there is one  outlier and the remaining samples are independent, but hypercontractivity is large. 
As shown in Figure~\ref{hc-outlier} (A) and (B) (left), hypercontractivity is close to 1, when there is an outlier. As shown in (right), hypercontractivity is close to 0, when the outlier is removed. This implies that one single outlier can make the hypercontractivity large. We can see similar patterns for other correlation measures, such as for Pearson correlation, distance correlation, and maximal correlation for both (A) and (B), and MIC for (B), but are less sensitive than hypercontractivity. 


    \begin{figure} [!ht]
            \begin{subfigure}[t]{\textwidth}
                    \centering
        \includegraphics[width=0.3\textwidth]{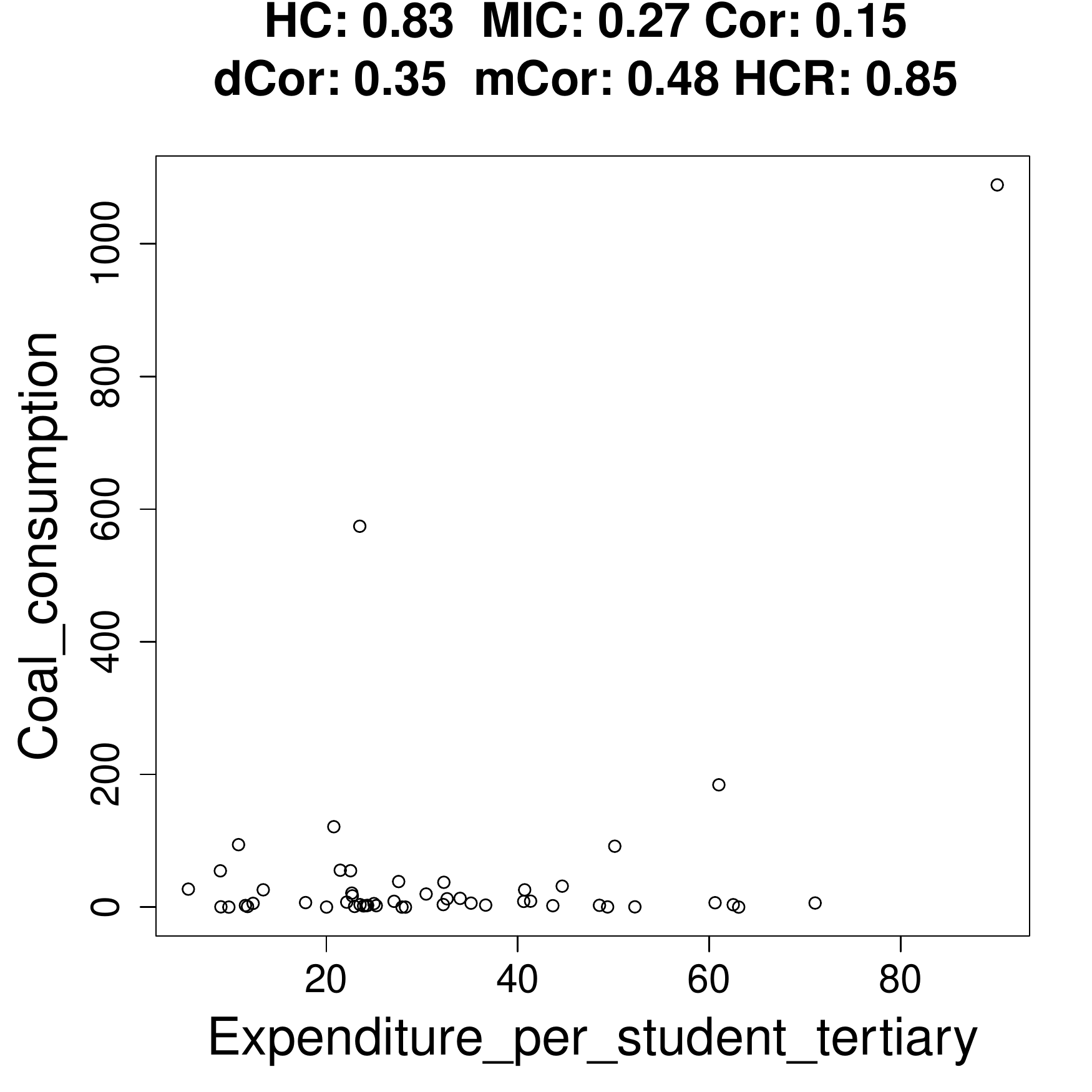} 
        \includegraphics[width=0.3\textwidth]{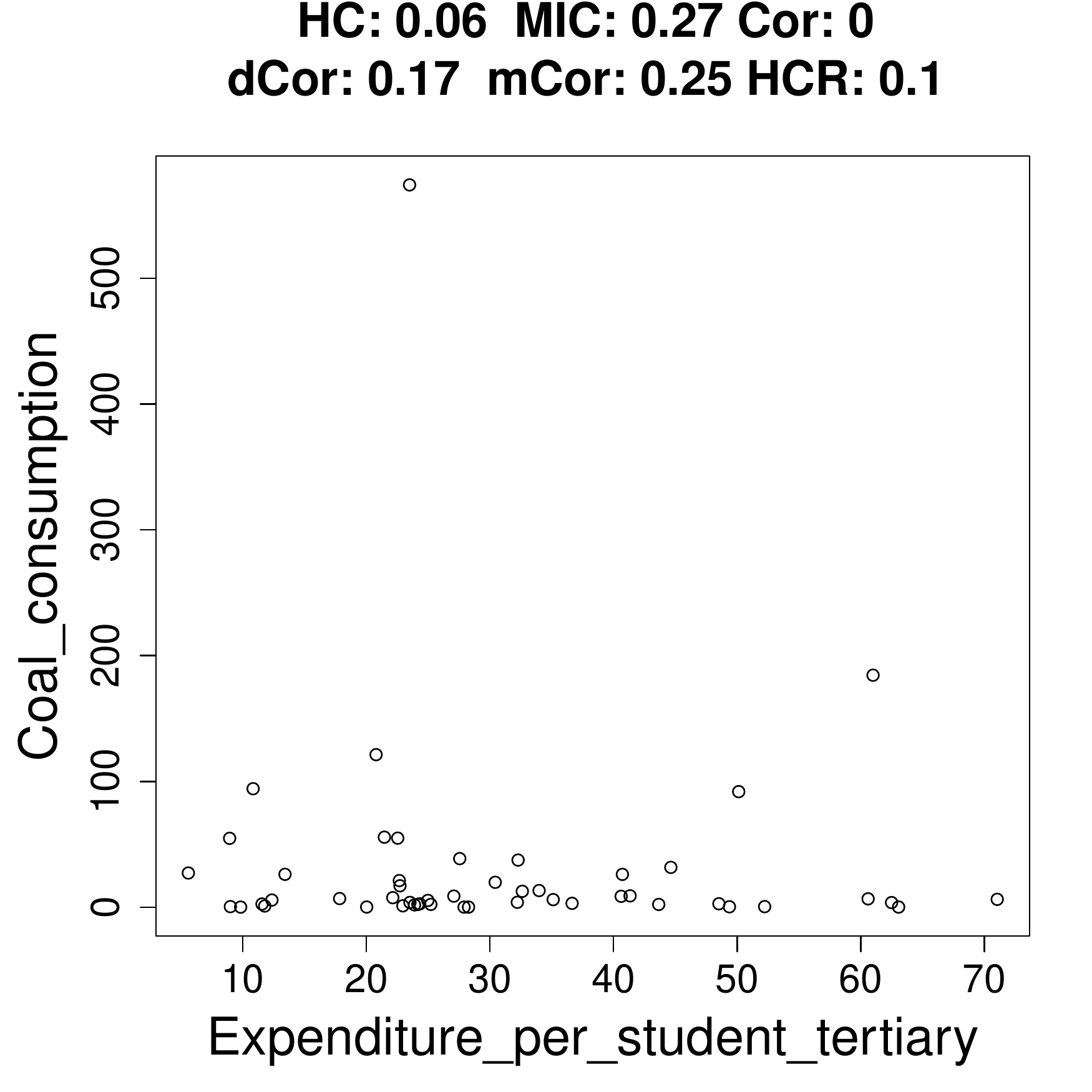} 
        \caption{}
    \end{subfigure} 
        \begin{subfigure}[t]{\textwidth}
        \centering
        \includegraphics[width=0.3\textwidth]{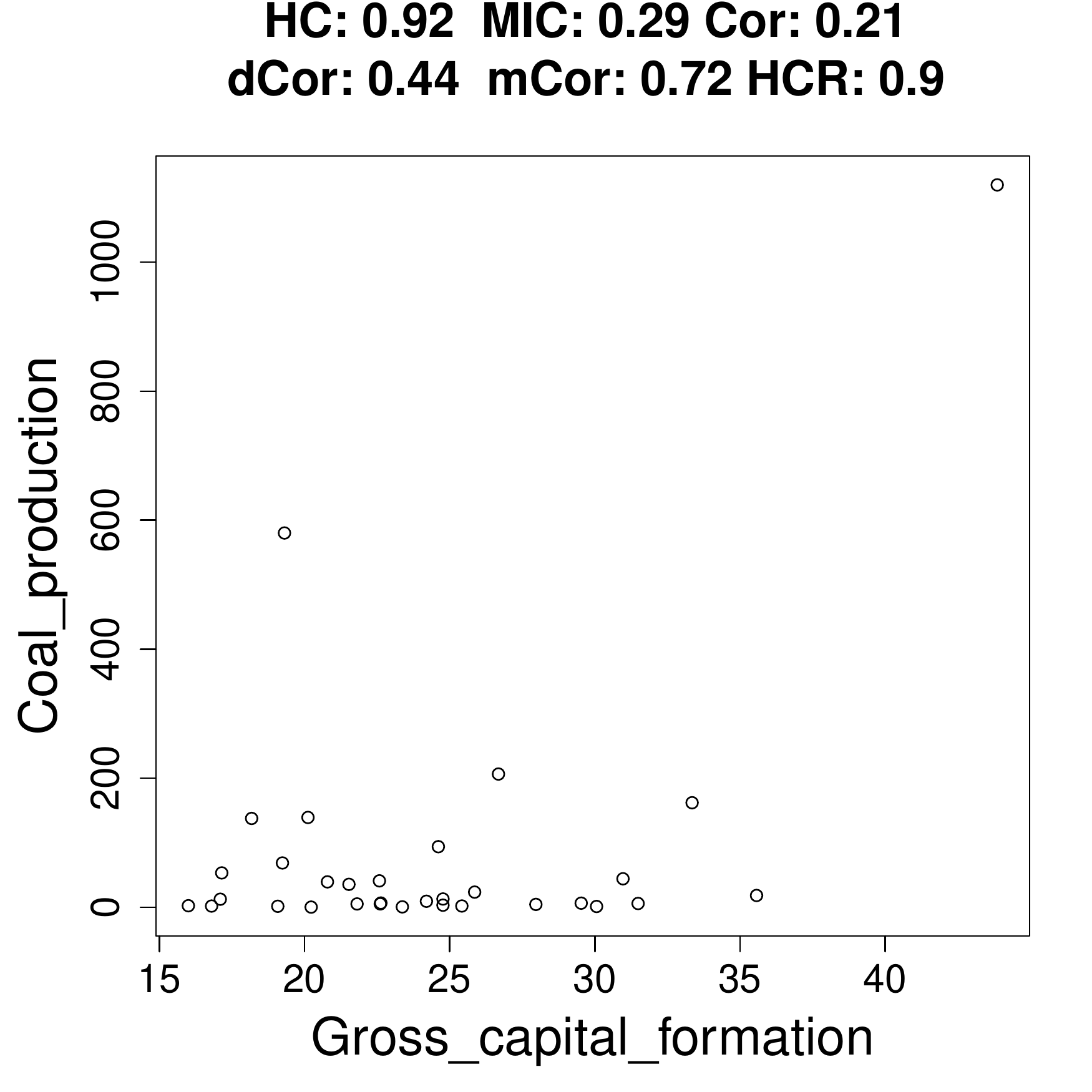}
        \includegraphics[width=0.3\textwidth]{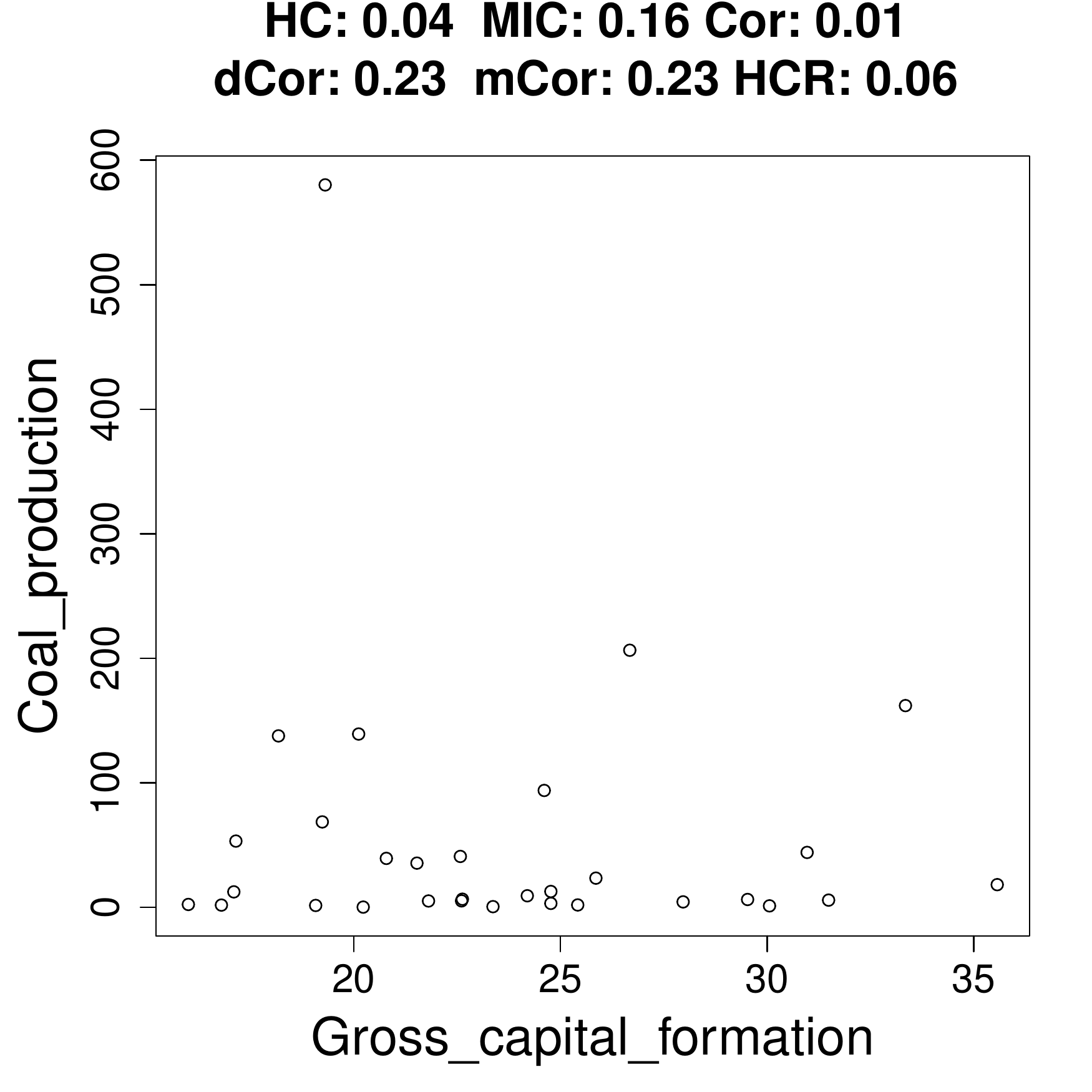}
        \caption{}
        \end{subfigure}
        \caption{ Hypercontractivity and other correlation measures become smaller as we remove an outlier.}\label{hc-outlier}
\end{figure}

To further study how hypercontractivity estimator is affected by outliers, we ran simulations on synthetic data. We generated three sets of synthetic data shown in Figure~\ref{outlier} and computed hypercontractivity coefficients. In Figure~\ref{outlier} (left), an outlier is located far from the rest of samples, and~the estimated hypercontractivity coefficient is 0.99. In Figure~\ref{outlier} (middle), an outlier is located close to the rest of samples, and  the estimated hypercontractivity coefficient is 0.04. In Figure~\ref{outlier}~(right), $X$~and $Y$ are potentially correlated, and the hypercontractivity estimate is 0.17. As can bee seen from this simulation and experimental results on WHO dataset, our hypercontractivity estimator is sensitive to outliers. If one wants to filter out the effect of outliers, one can combine methods for robust estimation, such as trimming and winsorizing~\cite{hastings1947,McBean1998,Rustum2007}, along with the hypercontractivity estimator. This is an~interesting future research direction.

 \begin{figure}[!ht]    
        \centering
        \includegraphics[width=0.32\textwidth]{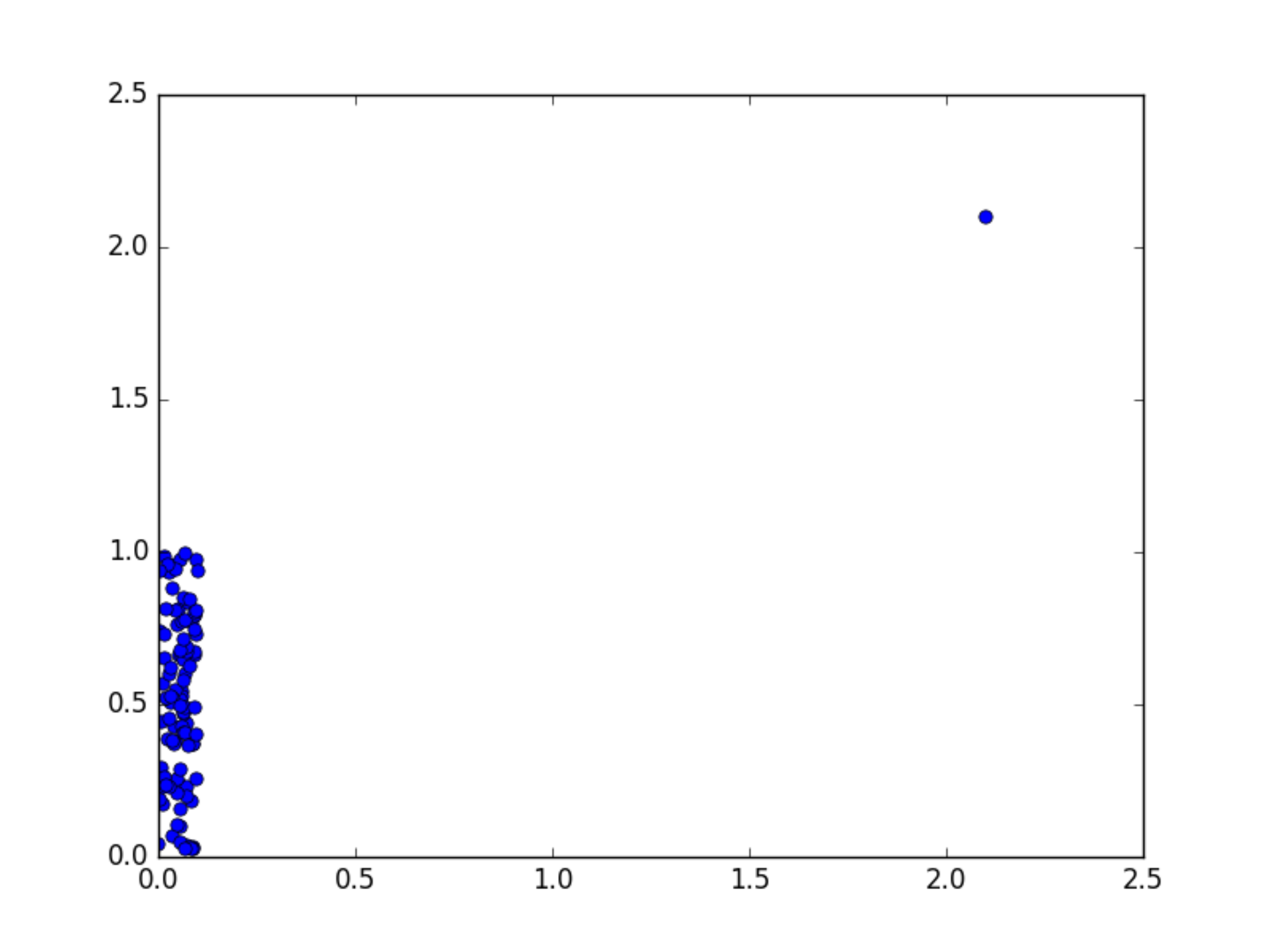}
        \includegraphics[width=0.32\textwidth]{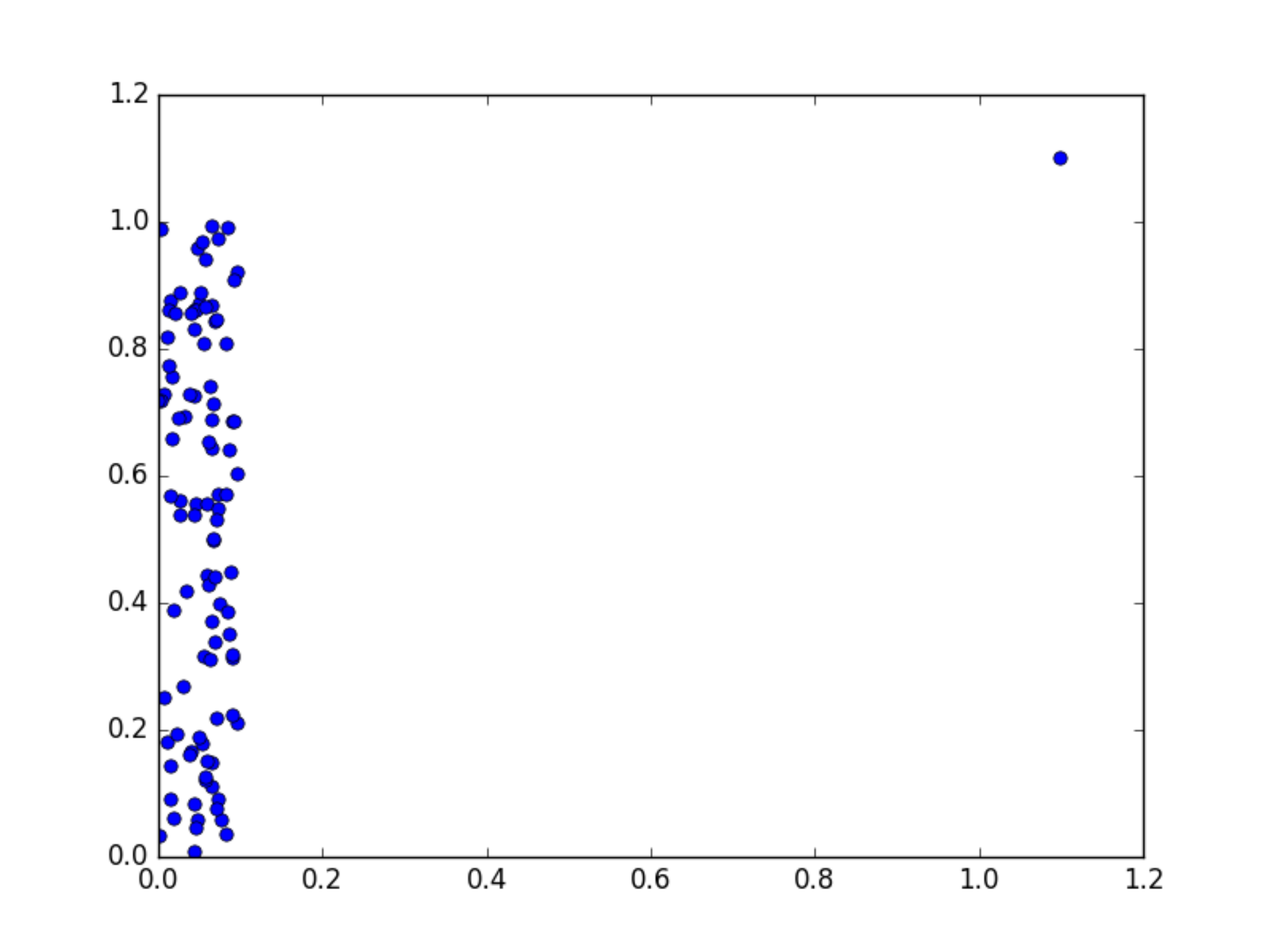}
        \includegraphics[width=0.32\textwidth]{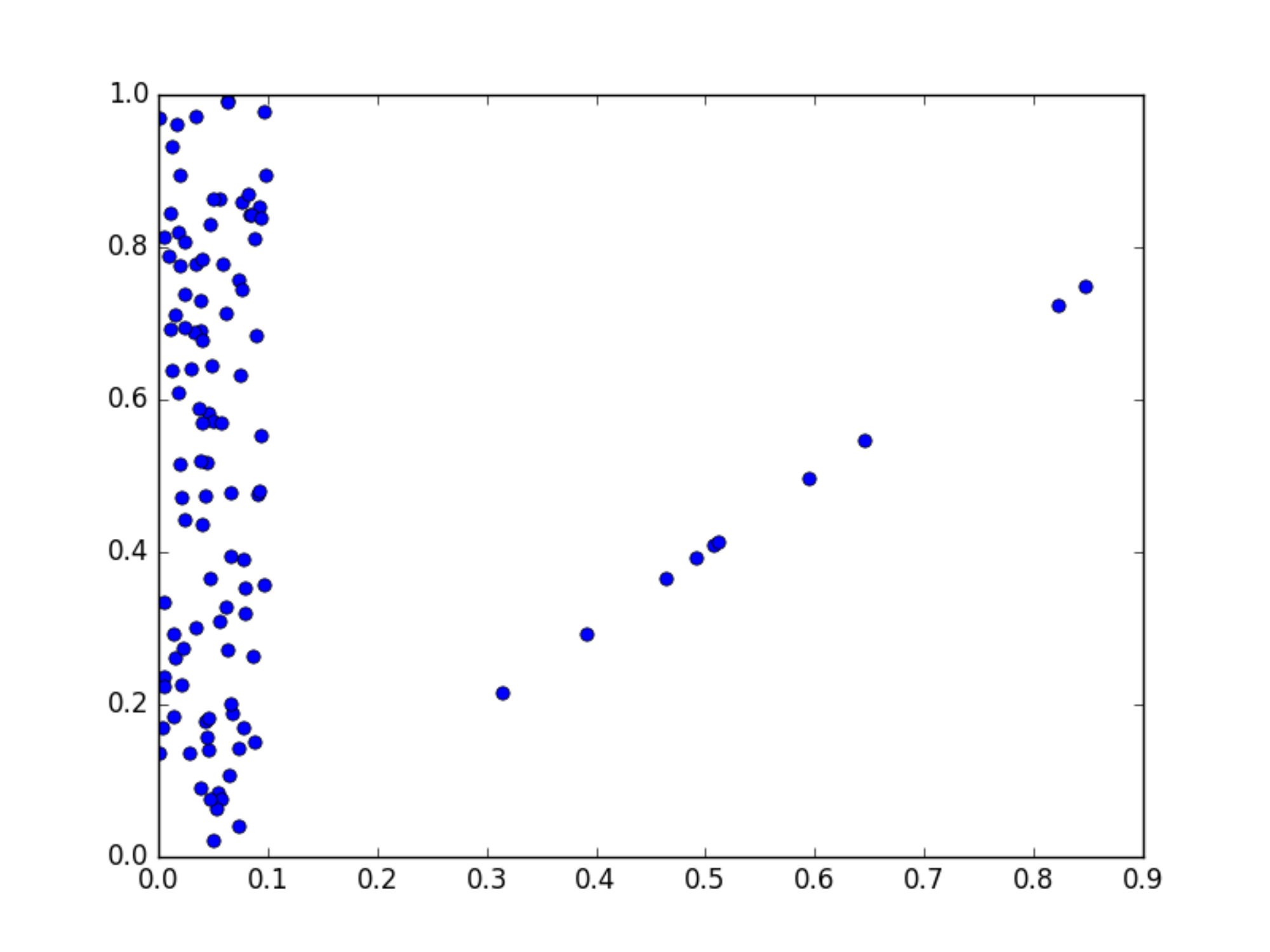}
        \put(-218,-5){\small X}
	\put(-283,45){\small Y}
        \put(-365,-5){\small X}
	\put(-425,45){\small Y}
        \put(-73,-5){\small X}
	\put(-138,45){\small Y}
             
                \caption{{Synthetic data}: (Left) an outlier is located far from other samples; (Middle) an outlier is located close to the rest of samples; (Right) potential correlation exists.} \label{outlier}
    \end{figure}

\subsection{Gene pathway recovery from single cell data}
\label{sec:dremi}

We replicate the genetic pathway detection experiment from~\cite{Krishnaswamy14}, 
and show that  hypercontractivity correctly discovers the genetic pathways from smaller number of samples. 
A genetic pathway  is a series of genes interacting with each other as a chain. 
Consider the following setup where four genes whose expression values in a single cell are modeled by random processes $X_t$, $Y_t$, $Z_t$ and $W_t$ respectively. 
These 4 genes interact with each other following a pathway $X_t \rightarrow Y_t \rightarrow Z_t \rightarrow W_t$; 
 it is biologically known that $X_t$ causes $Y_t$ with a negligible delay, and later at time $t'$, $Y_{t'}$ causes $Z_{t'}$, and so on. Our goal is to recover this known gene pathway from sampled datapoints. 
For a sequence  of time points $\{t_i\}_{i=0}^m$, 
we observe $n_i$ i.i.d. samples $\{X_{t_i}^{(j)}, Y_{t_i}^{(j)}, Z_{t_i}^{(j)}, W_{t_i}^{(j)}\}_{j=1}^{n_i}$ generated from the random process $P(X_{t_i}, Y_{t_i}, Z_{t_i}, W_{t_i})$. 
We use the real data obtained by the single-cell mass flow cytometry technique~\cite{Krishnaswamy14}.

Given these samples from  time series, the goal of \cite{Krishnaswamy14} is to recover the direction of the interaction along the known pathway using correlation measures as follows, where they proposed a new measure called DREMI. 
 The DREMI correlation measure is evaluated on each pairs on the pathway,  
 $\tau(X_{t_i}, Y_{t_i})$, $\tau(Y_{t_i}, Z_{t_i})$ and $\tau(Z_{t_i}, W_{t_i})$, at each time points $t_i$. 
 It is declared that a genetic pathway is correctly recovered if 
 the peak of correlation follows the expected trend: 
 $\arg\max_{t_i} \tau(X_{t_i}, Y_{t_i}) \le \arg\max_{t_i}\tau(Y_{t_i}, Z_{t_i})  \le \arg\max_{t_i} \tau(Z_{t_i}, W_{t_i})$. 
In~\cite{Gao16}, the same experiment has been done with $\tau$ evaluated by UMI and CMI estimators. In this paper, we evaluate $\tau$ using our proposed estimator of hypercontractivity.



The Figure~\ref{fig:fcs_data} shows the scatter plots pCD3$\zeta$-pSLP76-pERK-pS6 chain at different time points after TCR activation. The data comes from CD4+ na{\"i}ve T lymphocytes from B6 mice with CD3, CD28, and CD4 cross-linking. Each row represents a pair of data in the chain, and each column stands for a time point after TCR activation. Estimate of hypercontractivity is shown below the scatter plot for each pair of data and each time point and we highlight the time point where each pair of data is maximally correlated. We can see that the peak of the correlation of pCD3$\zeta$-pSLP76, pSLP76-pERK and pERK-pS6 appears at 0.5 min, 1 min and 2 min respectively, hence the pathway is correctly identified. In Figure~\ref{fig:fcs_effmem}, the similar plots was shown for T-cells exposed with an antigen. Similarly, hypercontractivity is able to capture the trend.

We subsample the raw data from~\cite{Krishnaswamy14} to evaluate the ability to find the trend from smaller samples. Precisely, given a resampling rate $\gamma \in (0,1]$, we randomly select a subset of indices $S_i \subseteq [n_i]$ with ${\rm card}(S_i) = \lceil \gamma n_i \rceil$, compute $\tau(X_{t_i}, Y_{t_i})$, $\tau(Y_{t_i}, Z_{t_i})$ and $\tau(Z_{t_i}, W_{t_i})$ from subsamples $\{X_{t_i}^{(j)}, Y_{t_i}^{(j)}, Z_{t_i}^{(j)}, W_{t_i}^{(j)}\}_{j \in S_i}$, and determine whether we can recover the trend successfully, i.e., whether $\arg\max_{t_i} \tau(X_{t_i}, Y_{t_i}) \le \arg\max_{t_i}\tau(Y_{t_i}, Z_{t_i})  \le \arg\max_{t_i} \tau(Z_{t_i}, W_{t_i})$.  We repeat the experiment several times with independent subsamples and compute the probability of successfully recovering the trend.
Figure~\ref{fig:fcs_experiment} illustrates that when the entire dataset is available, all methods are able to recover the trend correctly. When only fewer samples are available,  hypercontractivity improves upon other competing measures in recovering the hidden chronological order of interactions of the pathway. For completeness, we run datasets for both regular T-cells (shown in left figure) and T-cells exposed with an antigen (shown right figure), for which we expect distinct biological trends. Hypercontractivity method can capture the trend for both datasets correctly and sample-efficiently.

\begin{figure}[!ht]
\begin{center}
\includegraphics{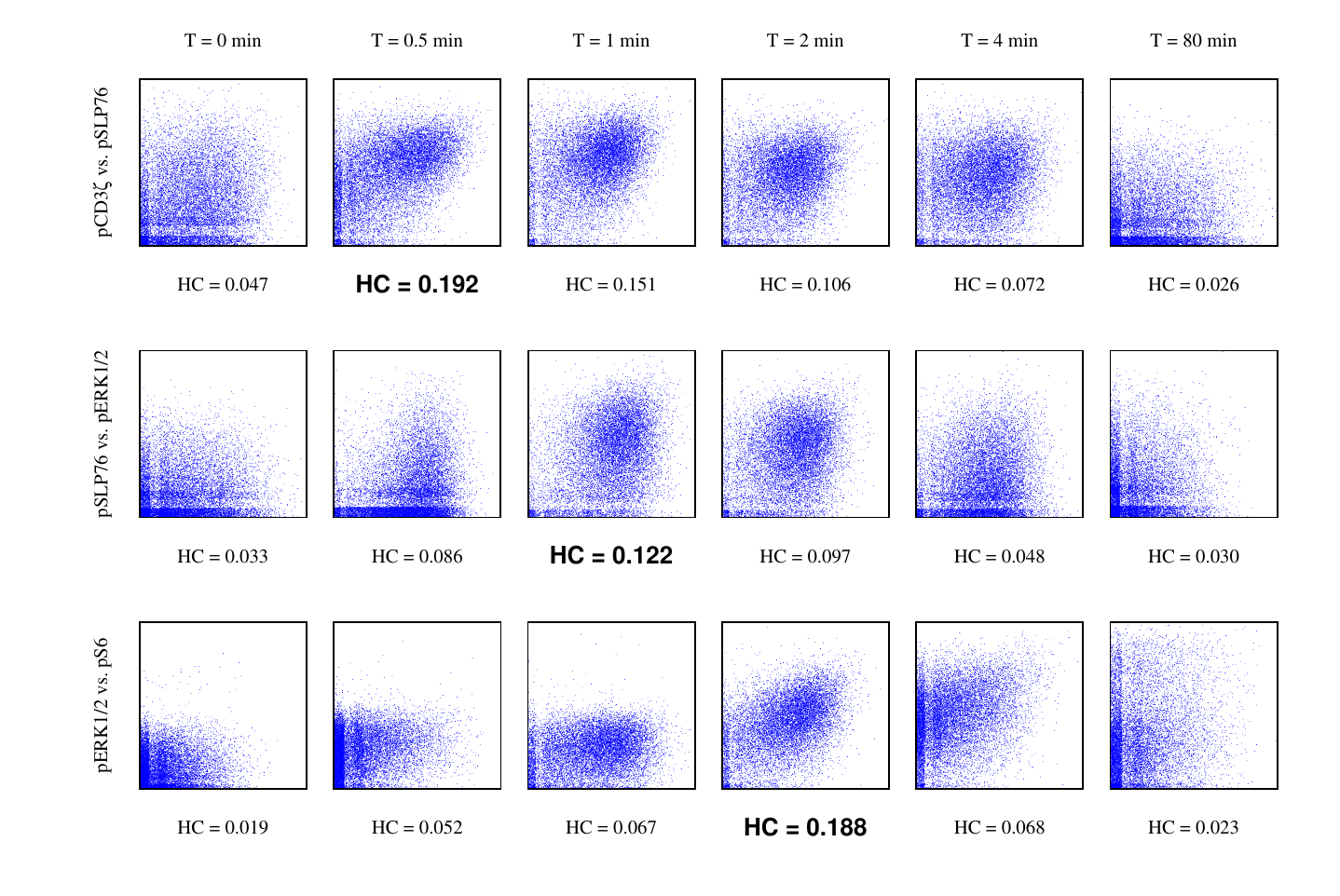}
\end{center}
\caption{Scatter plots of gene pathway data for various pair of data and various time points (regular T-cells).}\label{fig:fcs_data}
\vspace{-1em}
\end{figure}

\begin{figure}[H]
\begin{center}
\includegraphics{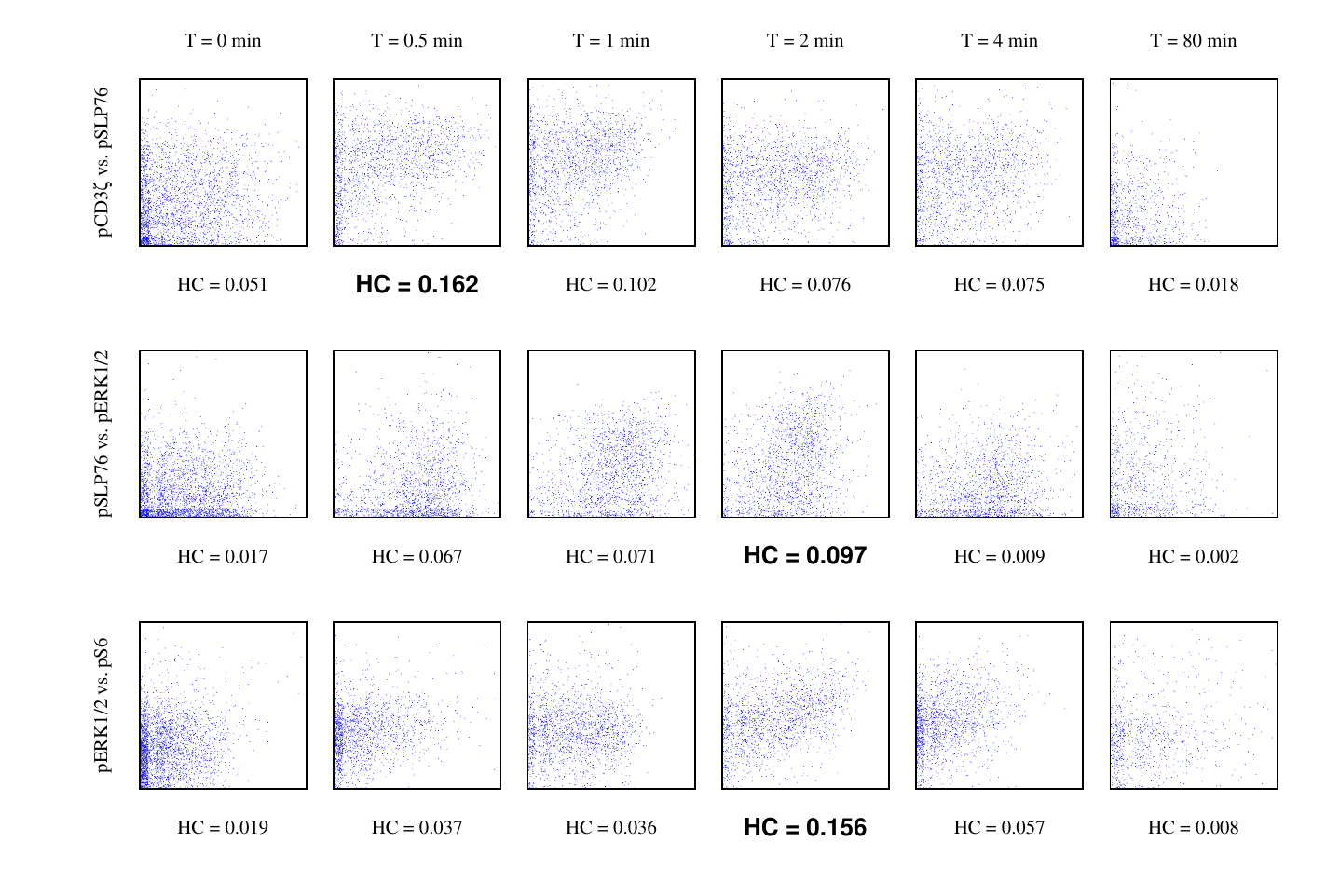}
\end{center}
\caption{Scatter plots of gene pathway data for various pair of data and various time points (T-cells exposed with an antigen).}\label{fig:fcs_effmem}
\vspace{-1em}
\end{figure}

\begin{figure}[!ht]
\begin{center}
\includegraphics[width=2.2in]{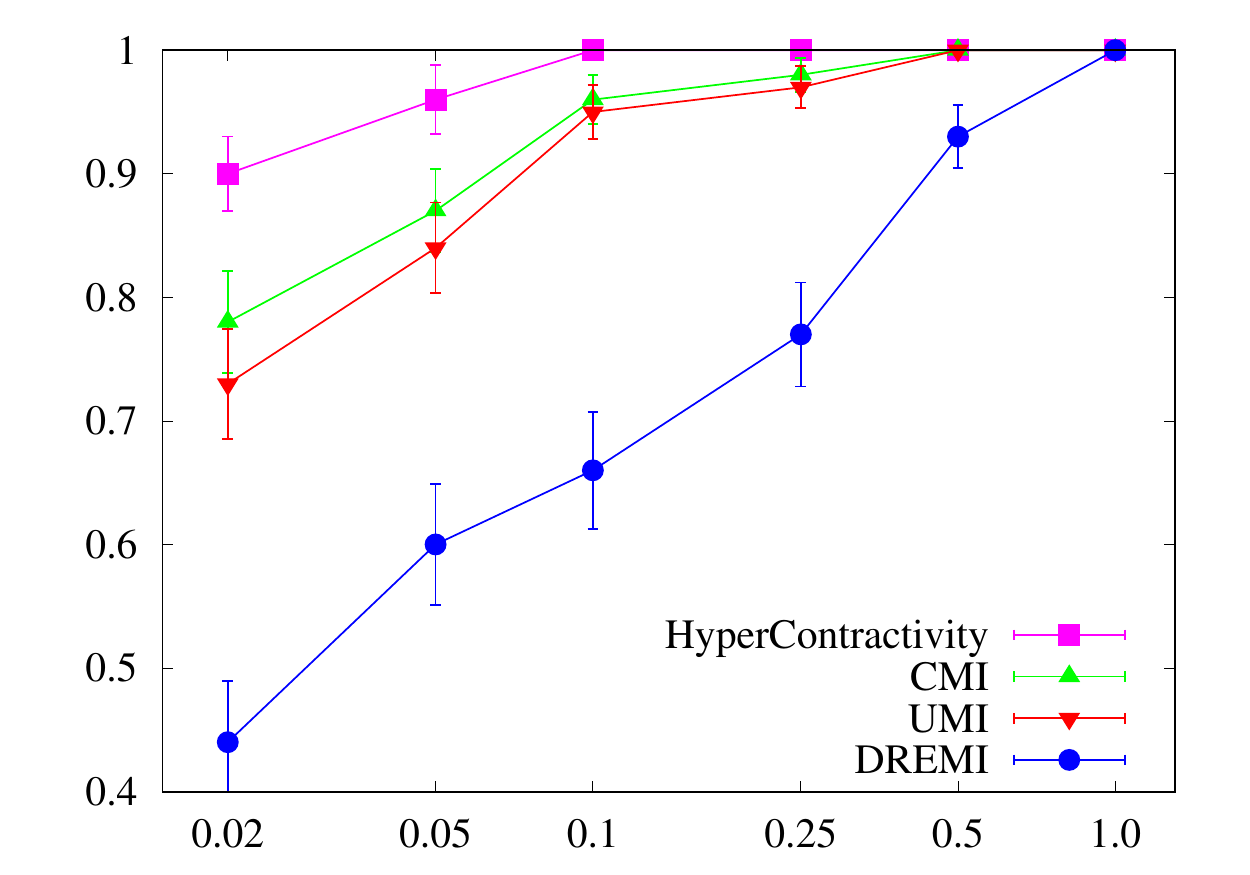}
\includegraphics[width=2.2in]{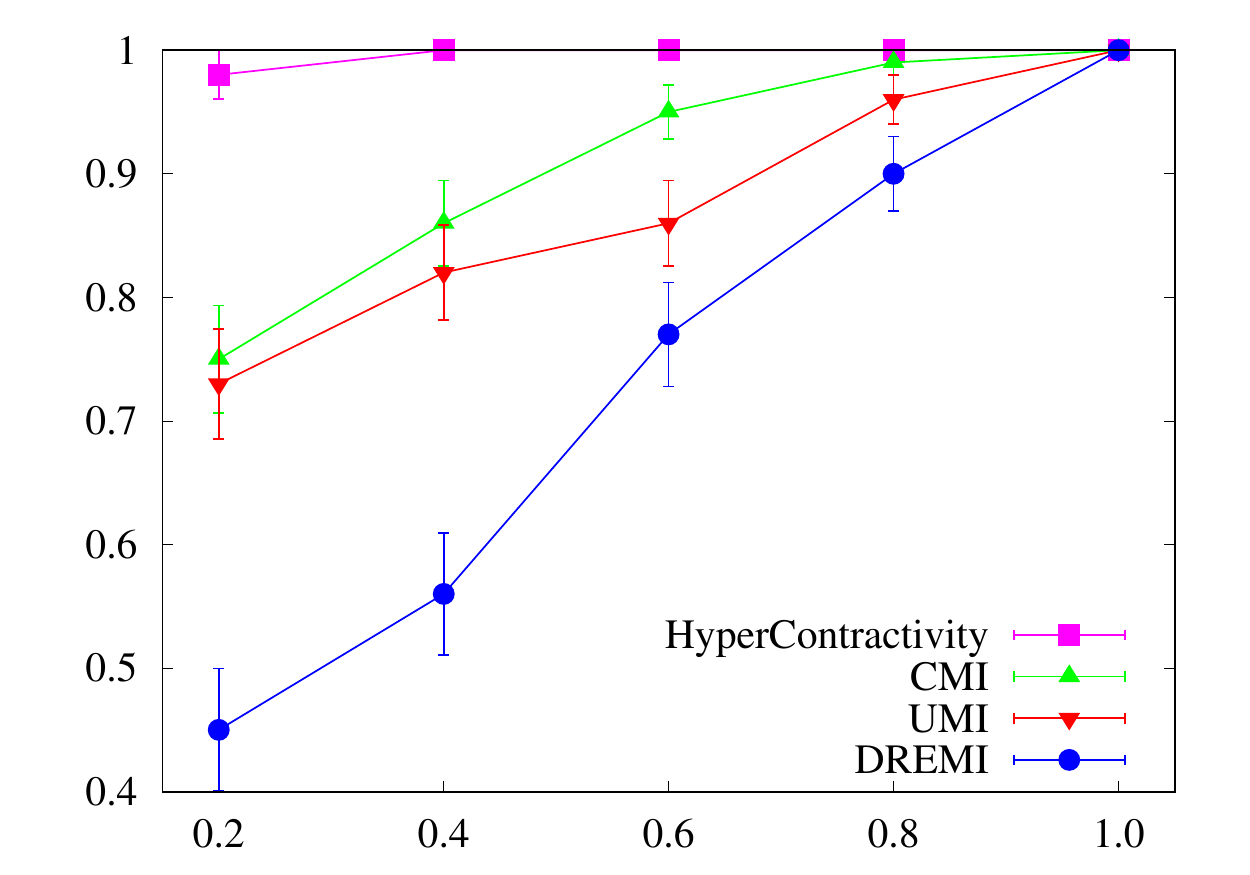}
\put(-110,-5){\small resampling rate}
\put(-270,-5){\small resampling rate}
\put(-325,15){\small \rotatebox{90}{probability of success}}
\end{center}
\caption{Accuracy vs. subsampling rate. Hypercontractivity method has higher probability to recover the trend when data size is smaller compared to other methods. Left: regular T-cells. Right: T-cells exposed with an antigen~\cite{Krishnaswamy14}.}\label{fig:fcs_experiment}
\end{figure}




\section{Proofs}
In this section, we provide proofs for our main results and technical lemmas.

\subsection{Proof of Proposition~\ref{pro:renyi}} 
\label{sec:renyi_proof}

Let $S_F(X,Y) = F(\sqrt{s(X;Y)}, \sqrt{s(Y;X)})$ for $F$ satisfying conditions in Proposition~\ref{pro:renyi}.  We show that $S_F(X,Y)$ satisfies all R{\'e}nyi's axioms, i.e., Axioms 1-5 and 6' and 7'. 

\begin{enumerate}
\item $S_F(X,Y)$ is defined for any pair of non-constant random variables $X, Y$ because $s(X;Y) \in [0,1]$ and $s(Y;X) \in [0,1]$ are defined for any random variables $X$, $Y$ by  Theorem~\ref{thm:axiom}. 
\item $S_F(X,Y) \in [0,1]$ because the output of a function $F$ is in $[0,1]$ by the condition on $F$. 

\item If $X$ and $Y$ are statistically independent, $s(X;Y) = s(Y;X) = 0$. By the condition on $F$, it follows that $S_F(X,Y) =0$.  If $S_F(X,Y)=0$, by the condition on $F$,   $s(X;Y)s(Y;X) = 0$, which implies that $X$ and $Y$ are statistically independent. 

\item $S_F(f(X),g(Y)) = S_F(X,Y)$ for any bijective Borel-measurable functions $f,g$ because
$\sqrt{s(f(X);g(Y))} = \sqrt{s(X;Y)}$ and $\sqrt{s(g(Y);f(X))}  = \sqrt{s(Y;X)}$ by Theorem~\ref{thm:axiom}.

\item For $(X,Y) \sim \mathcal{N}(\mu, \Sigma)$ with Pearson correlation $\rho$, $s(X;Y) = s(Y;X) =  \rho^2$. Hence, $S_F(X,Y) = F(|\rho|,|\rho|) = |\rho|$.

\item[6'] If $Y=f(X)$ for a non-constant function $f$, it follows that $I(f(X);f(X)) = I(f(X);X)$ because if $f(X)$ is discrete, $I(f(X);f(X)) = I(f(X);X) = H(f(X))$ and otherwise, $I(f(X);f(X)) = I(f(X);X) = \infty$. Hence  
\[
s(X;f(X)) = \sup_{U-X-f(X)} I(U;f(X))/I(U;X) = I(f(X);f(X))/I(f(X);X) = 1.
\]
Similarly, $s(f(X);X) = \sup_{U-f(X)-X}I(U;X)/I(U;f(X)) = 1$. Hence, $S_F(X;f(X)) = F(1,1) = 1$. Likewise, we can show that $S_F(X;Y) = 1$ if $X = g(Y)$.

\item[7'] $S_F(X,Y)=S_F(Y,X)$ because $F(x,y)=F(y,x)$. 
\end{enumerate}

\subsection{Proof of Theorem~\ref{thm:axiom}}
\label{sec:axiom_proof}

We show that $s(X;Y)$ satisfies Axioms 1-6 in Section~\ref{sec:axiom}.
\begin{enumerate}
\item For any non-constant random variable $X$, $\exists\ U$ s.t. $I(U;X) > 0$. Hence,
$s(X;Y)$ is defined for any pair of non-constant random variables $X$ and $Y$.
\item Since mutual information is non-negative, $s(X;Y) \ge 0$. By data processing inequality, for any $U-X-Y$, $I(U;X) \le I(U;Y)$. Hence, $s(X;Y) \le 1$. 
\item If $X$ and $Y$ are independent, for any $U$, $I(U;Y) \le I(X;Y) = 0$. Hence, $s(X;Y) = 0$. If $X$ and $Y$ are dependent, $I(X;Y)>0$, which implies that $s(X;Y) \ge I(X;Y)/H(X) > 0$.
 \item For any bijective functions $f,g$,
 \[
 I(U;g(Y)) = I(U;g(Y),Y) = I(U;Y) + I(U;g(Y)|Y) = I(U;Y).
 \]
 Similarly, $I(U;f(X)) = I(U;X)$. Hence,
 \begin{align*}
s(f(X); g(Y)) &= \sup_{U:U-f(X)-g(Y), I(U;f(X)) > 0} \frac{I(U;g(Y))}{I(U;f(X))} \\
&= \sup_{U:U-X-f(X)-g(Y)-Y, I(U;X) > 0} \frac{I(U;Y)}{I(U;X)} \\
&=  s(X;Y).
 \end{align*}
\item
By Theorem 3.1 in~\cite{Chechik--Amir--Naftali--Yair2005}, for $(X,Y)$ jointly Gaussian with correlation coefficient $\rho$,
\[
\min_{U\colon U-X-Y} \left(I(U;X) - \beta I(U;Y)  \right)  = 0
\]
for $\beta \le 1/\rho^2$. Equivalently,
\[
\max_{U\colon U-X-Y} \left( I(U;Y) - \rho^2 I(U;X) \right) = 0,
\]
which implies that $s(X;Y) \le \rho^2$.  To show that $s(X;Y) \ge \rho^2$, let $U_Z  = X + Z$ for $Z \sim (0,\sigma_1^2)$. Consider
\begin{align*}
s(X;Y) &\ge \lim_{\sigma_1^2 \to \infty} \frac{I(U_Z;Y)}{I(U_Z;X)} \\
		& = \lim_{\sigma_1^2 \to \infty} \frac{\log\left( \frac{(\sigma_X^2+\sigma_1^2)\sigma_Y^2}{(\sigma_X^2+\sigma_1^2)\sigma_Y^2 - \rho^2 \sigma_X^2 \sigma_Y^2}\right)}{ \log\left(1+\frac{\sigma_X^2}{\sigma_1^2}\right) }\\
		&= \lim_{\sigma_1^2 \to \infty} \frac{\rho^2 \sigma_X^2 \sigma_Y^2/\left((\sigma_X^2+\sigma_1^2)\sigma_Y^2 - \rho^2 \sigma_X^2 \sigma_Y^2\right)}{\sigma_X^2/\sigma_1^2} \\
		&= \rho^2.
\end{align*}
Hence, $s(X;Y) = \rho^2$. An alternative proof is provided in~\cite{nair2014gaussian}. 
\item To prove that $s(X;Y)$ satisfies Axiom 6, we first show the following lemma.
\begin{lemma}\label{LB-general}
\textnormal{Consider a pair of random variables $(X,Y) \in \Xcal \times \Ycal$. The hypercontractivity $s(X;Y)$ is lower bounded by
\begin{align}\label{LB-I}
s(X;Y) \ge \frac{I(U;Y|X \in \mathcal{X}_r)}{H(\alpha)/\alpha + I(U;X|X \in \mathcal{X}_r)}
\end{align}
for any $\Xcal_r$ such that $\Xcal_r \subseteq \Xcal$ for $\P \{X \in \Xcal_r\} =: \a > 0$.
}\end{lemma}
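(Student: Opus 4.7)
The plan is to explicitly construct an auxiliary random variable $\tilde{U}$ from the given $U$ that will act as a witness for the supremum in the hypercontractivity definition, and then carefully decompose $I(\tilde{U};X)$ and $I(\tilde{U};Y)$ via the indicator $E := \mathbbm{1}\{X \in \Xcal_r\}$.

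Given any Markov chain $U - X - Y$, I would define $\tilde{U}$ through the conditional law
\[
\tilde{U} \;=\; \begin{cases} (U, 1) & \text{if } X \in \Xcal_r, \\ (c, 0) & \text{if } X \notin \Xcal_r, \end{cases}
\]
for some fixed constant $c$. Since $E$ is a deterministic function of $X$ and $U - X - Y$ is Markov, the new chain $\tilde{U} - X - Y$ is also Markov, so $\tilde{U}$ is an admissible auxiliary in the definition of $s(X;Y)$. The key design feature is that, conditioned on $X \notin \Xcal_r$, $\tilde{U}$ is a constant and contributes nothing to either $I(\tilde{U};X)$ or $I(\tilde{U};Y)$, while conditioned on $X \in \Xcal_r$, $\tilde{U}$ carries exactly the information that $U$ would.

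The next step is to compute the two mutual informations using the chain rule relative to $E$. Since $E$ is a function of $X$, we have $I(\tilde{U};X) = I(E;X) + I(\tilde{U};X|E) = H(\alpha) + I(\tilde{U};X|E)$, and because $\tilde{U}$ is constant on the event $\{X \notin \Xcal_r\}$, the conditional term collapses to $\alpha\, I(U;X|X \in \Xcal_r)$. This gives the exact identity $I(\tilde{U};X) = H(\alpha) + \alpha\, I(U;X|X \in \Xcal_r)$, which I will use as an upper bound in the denominator. For the numerator, chain rule again gives $I(\tilde{U};Y) = I(E;Y) + I(\tilde{U};Y|E) \geq I(\tilde{U};Y|E)$, and the same collapse argument yields $I(\tilde{U};Y|E) = \alpha\, I(U;Y|X \in \Xcal_r)$.

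Combining the two bounds,
\[
s(X;Y) \;\geq\; \frac{I(\tilde{U};Y)}{I(\tilde{U};X)} \;\geq\; \frac{\alpha\, I(U;Y|X\in\Xcal_r)}{H(\alpha) + \alpha\, I(U;X|X\in\Xcal_r)} \;=\; \frac{I(U;Y|X\in\Xcal_r)}{H(\alpha)/\alpha + I(U;X|X\in\Xcal_r)},
\]
which is the desired inequality. The only subtle point, and really the main thing to get right, is the Markov structure: I must make sure $\tilde{U}$ is constructed as a (possibly stochastic) function of $X$ alone, so that $\tilde{U} - X - Y$ remains Markov and $\tilde{U}$ is a legitimate candidate in the supremum defining $s(X;Y)$. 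Including $E$ as a coordinate of $\tilde{U}$ (rather than just replacing $U$ by a constant outside $\Xcal_r$) is what makes the two chain-rule decompositions clean and avoids having to control cross terms like $I(\tilde{U};E|Y)$. The boundary cases $\alpha = 1$ (in which $H(\alpha)/\alpha = 0$ and the bound reduces to the tautology $s(X;Y) \geq I(U;Y)/I(U;X)$) and $I(U;X|X\in\Xcal_r) = 0$ (in which the RHS is $0$) are handled by the same construction without modification.
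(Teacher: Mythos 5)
Your construction of $\tilde{U}$ (equal to $U$ on $\{X\in\Xcal_r\}$ and constant otherwise, with the indicator carried along) is exactly the auxiliary variable $U_s$ used in the paper's proof, and your two chain-rule decompositions relative to the indicator match the paper's computation of $I(U_s;X)$ and the lower bound on $I(U_s;Y)$ line for line. The proposal is correct and follows essentially the same route as the paper.
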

\begin{proof} 
Let 
\begin{align}\label{Us}
U_s = \begin{cases}
U \sim p(u|x) &\text{ if } X \in \mathcal{X}_r,\\
\emptyset &\text{ otherwise.}
\end{cases}
\end{align}
Let $S = \mathbb{I}_{\{U_s = \emptyset\}} = \mathbb{I}_{\{X \in \Xcal_r\}}$. Note that $S - U_s - X - Y$ holds, and that $S$ is a deterministic function of $X$. Hence, 
\begin{align*}
I(U_s;X) &= I(U_s,S;X)\\
       &= I(S;X) + I(U_s;X|S)\\
       &= H(\alpha) + \alpha I(U;X|X \in \mathcal{X}_r). \numberthis\label{IUX}
\end{align*}
Consider
\begin{align*}
I(U_s;Y) &= I(U_s,S;Y)\\
       &= I(S;Y) + I(U_s;Y|S)\\
       &\ge \alpha I(U;Y|X \in \mathcal{X}_r).  \numberthis\label{IUY}
\end{align*}
The proof is completed by combining~\eqref{IUX} and~\eqref{IUY}.
\end{proof}
Assume that $Y = f(X)$ for $X \in \mathcal{X}_r$. Considering $U = f(X)$ in~\eqref{Us} in Lemma~\ref{LB-general}, we obtain the following lower bound: 
\[
s(X;Y) \ge \frac{I(f(X);f(X)|X \in \mathcal{X}_r)}{H(\alpha)/\a + I(f(X);X|X \in \mathcal{X}_r)}.
\]
For any continuous random variable $X$ and a non-constant continuous function $f$, $I(f(X);f(X)|X \in \mathcal{X}_r) = I(f(X);X|X \in \mathcal{X}_r) = \infty$, which implies that $s(X;Y) = 1$. 
\end{enumerate}

\subsection{Proof of Theorem~\ref{prop-max}}
\label{sec:correlation_proof}

We first prove that ${\rm mCor}(X,Y) = \sqrt{\alpha} \; {\rm mCor}(X_r,Y)$ in \eqref{eq:mcor_bound}. 
Let $S = \mathbb{I}_{\{X \in \Xcal_r\}}$ be the indicator for whether $X \in \Xcal_r$ or not.
Consider
\begin{align*}
{\rm mCor}(X;Y) &= \max_{\substack{f,g\\ :\E[f(X)] = \E[g(Y)] = 0,\\ \E[f^2(X)] \le 1, \E[g^2(Y)] \le 1}} \E[f(X)g(Y)]\\
			&=\max_{\substack{f,g\\ :\E[f(X)] = \E[g(Y)] = 0,\\ \E[f^2(X)] \le 1, \E[g^2(Y)] \le 1}}  \E_{S}[\E[f(X)g(Y)|S]]\\
			&= \max_{\substack{f,g\\ :\E[f(X)] = \E[g(Y)] = 0,\\ \E[f^2(X)] \le 1, \E[g^2(Y)] \le 1}}  \left(\a \E[f(X)g(Y)|X \in \Xcal_r] + \ba \E[f(X)g(Y)|X \in \Xcal_d]\right)\\
			&= \max_{\substack{f,g\\ :\E[f(X)] = \E[g(Y)] = 0,\\ \E[f^2(X)] \le 1, \E[g^2(Y)] \le 1}}  \left(\a \E[f(X)g(Y)|X \in \Xcal_r] + \ba \E[f(X)|X \in \Xcal_d]\E[g(Y)|X \in \Xcal_d]\right)\\
			&\stackrel{(a)}{=} \a \max_{\substack{f,g\\ :\E[f(X)] = \E[g(Y)] = 0,\\ \E[f^2(X)] \le 1, \E[g^2(Y)] \le 1}} \E[f(X)g(Y)|X \in \Xcal_r] \\
			&\stackrel{(b)}{=} \sqrt{\a} \; {\rm mCor}(X_r,Y).
\end{align*}
Step $(a)$ holds since $\E[g(Y)|X \in \Xcal_r] = \E[g(Y)|X \in \Xcal_d]$ from the assumption that marginal distributions are equal, and that $\E[g(Y)] = \a \E[g(Y)|X \in \Xcal_r] + \ba \E[g(Y)|X \in \Xcal_d]$. To show step $(b)$, let $c = \E[f(X)|X \in \Xcal_d]$ and note that 
\begin{align*}
\a \E[f(X)|X \in \Xcal_r] &= - \ba c,\\
\a \E[f^2(X)|X \in \Xcal_r]  &=  \E[f^2(X)]  - \ba \E[f^2(X)|X \in \Xcal_d]  \\
				&\le 1 - \ba c^2,\\
\E[g(Y)|X \in \Xcal_r] &= 0.
\end{align*}
Hence, 
\begin{align*}
\max_{\substack{f,g\\ :\E[f(X)] = \E[g(Y)] = 0,\\ \E[f^2(X)] \le 1, \E[g^2(Y)] \le 1}} \E[f(X)g(Y)|X \in \Xcal_r] &= \max_{\substack{f_r,g\\ :\E[f_r(X)] = -\ba c/\a, \E[g(Y)] = 0,\\ \E[f_r^2(X)] \le (1 - \ba c^2)/\a, \E[g^2(Y)] \le 1}} \E[f_r(X)g(Y)]\\
&= \max_{\substack{f_{rc},g\\ :\E[f_{rc}(X)] = 0, \E[g(Y)] = 0,\\ \E[f_{rc}^2(X)] \le (\a - \ba c^2)/\a^2, \E[g^2(Y)] \le 1}} \E[(f_{rc}(X)g(Y)]\\
&= \max_{\substack{f_{rc},g\\ :\E[f_{rc}(X)] = 0, \E[g(Y)] = 0,\\ \E[f_{rc}^2(X)] \le 1/\a, \E[g^2(Y)] \le 1}} \E[f_{rc}(X)g(Y)]\\
&= \max_{\substack{f_{rca},g\\ :\E[f_{rca}(X)] = 0, \E[g(Y)] = 0,\\ \E[f_{rca}^2(X)] \le 1, \E[g^2(Y)] \le 1}} \frac{1}{\sqrt{\a}}\E[f_{rca}(X)g(Y)]\\
&= \frac{{\rm mCor}(X_r,Y)}{\sqrt{\a}},
\end{align*}
where $f_r(X)$, $f_{rc}(X) = f_r(X)+\ba c/\a$, and $f_{rca}(X) = \sqrt{\a}f_{rc}(X)$ are functions defined only for $X \in \Xcal_r$.

We next show  ${\rm dCor}(X,Y) = \alpha \; {\rm dCor}(X_r,Y)$ in \eqref{eq:dcor_bound}. 
Let
\[
h_X(s) = \E[e^{isX}],\  h_Y(t) = \E[e^{itY}],\ h_{XY}(s,t) = \E[e^{i(sX+tY)}].
\]
Note that
\begin{align*}
h_{XY}(s,t) &= \E[e^{i(sX+tY)}]\\
&= \alpha \E[e^{i(sX+tY)}|X \in \Xcal_r] + \bar{\alpha} \E[e^{isX}|X \in \Xcal_d] \E[e^{itY}|X \in \Xcal_d]\\
&= \alpha \E[e^{i(sX+tY)}|X \in \Xcal_r] + \bar{\alpha} \E[e^{isX}|X \in \Xcal_d] \E[e^{itY}],\numberthis\label{dist1}
\end{align*}
and
\begin{align}\label{dist2}
h_X(s) = \E[e^{isX}] = \alpha \E[e^{isX}|X \in \Xcal_r] + \bar{\alpha} \E[e^{isX}|X \in \Xcal_d].
\end{align}
By combining~\eqref{dist1} and~\eqref{dist2},
\begin{align*}
h_{XY}(s,t) - h_{X}(s)h_{Y}(t) &= \alpha \E[e^{i(sX+tY)}|X \in \Xcal_r] - \alpha \E[e^{isX}|X \in \Xcal_r] \E[e^{itY}]\\
					&= \alpha \E[e^{i(sX+tY)}|X \in \Xcal_r] - \alpha \E[e^{isX}|X \in \Xcal_r] \E[e^{itY}|X \in \Xcal_r]\\
					&=\alpha \; \mathrm{dCor}(X_r,Y).
\end{align*}

Finally, we show that ${\rm MIC}(X,Y) \leq \alpha\ {\rm MIC}(X_r,Y)$ in \eqref{eq:mic_bound}. 

Let $X_Q(X) \in \Xcal_Q(X)$ and $Y_Q(Y) \in \Ycal_Q(Y)$ denote a quantization of $X$ and $Y$, respectively.  Consider
\begin{align*}
\mathrm{MIC}(X,Y) &= \max_{X_Q(X), Y_Q(Y)}\frac{I(X_Q; Y_Q)}{\log \min\{|\Xcal_Q|,|\Ycal_Q|\}} \\
&\le \max_{X_Q(X), Y_Q(Y)}\frac{I(\mathbb{I}_{X \in \Xcal_r}, X_Q; Y_Q)}{\log \min\{|\Xcal_Q|,|\Ycal_Q|\}}\\
&\stackrel{(a)}{=} \alpha \max_{X_Q(X), Y_Q(Y)}\frac{I(X_Q;Y_Q|X \in \Xcal_r)}{\log \min\{|\Xcal_Q|, |\Ycal_Q|\}} \\
& \le \alpha \max_{X_Q(X_r), Y_Q(Y)}\frac{I(X_Q;Y_Q|X \in \Xcal_r)}{\log \min\{|\Xcal_Q(X_r)|, |\Ycal_Q|\}}\\
& = \alpha \; \mathrm{MIC}(X_r,Y),
\end{align*}
where step $(a)$ holds because $\mathbb{I}_{X \in \Xcal_r} \indep Y$ implies $\mathbb{I}_{X \in \Xcal_r} \indep Y_Q$ and $X \indep Y$ in $X \in \mathcal{X}_d$ implies $X_Q \indep Y_Q$ in $X \in \mathcal{X}_d$.

\subsection{Proof of Proposition~\ref{YX-domrare}}
\label{sec:inverse_proof}
The inverse hypercontractivity $s(Y;X)$ is defined as
\[
s(Y;X) = \sup_{U-Y-X}\frac{I(U;X)}{I(U;Y)}.
\]
Let $\mathbb{I}_r = \mathbb{I}_{\{X \in \mathcal{X}_r\}}$. Since the marginal distribution of $Y$ given $\{X \in \mathcal{X}_r\}$ and the one given $\{X \notin \mathcal{X}_r\}$ are equivalent, $Y$ and $\mathbb{I}_r$ are independent, i.e., $I(Y;\mathbb{I}_r) = 0$. For any $U$ such that Markov chain $U - Y - X$ holds, the Markov chain $U - Y - X - \mathbb{I}_r$ holds. Hence, $I(U;\mathbb{I}_r) = 0$. Hence, for any $U-Y-X$, consider  
\begin{align*}
    I(U;X)  &= I(U;X, \mathbb{I}_r)\\
            &= I(U;X|\mathbb{I}_r)\\
            &=(1-\alpha) I(U;X|\mathbb{I}_r = 0) + \alpha I(U;X|\mathbb{I}_r = 1)\\
            &\stackrel{(a)}{=}\alpha I(U;X|\mathbb{I}_r = 1)
\end{align*}
Step $(a)$ holds because $Y \indep\ X$ given $\mathbb{I}_r = 0$. Consider 
\begin{align*}
    I(U;Y) &\stackrel{(a)}{=} I(U;Y,\mathbb{I}_r)\\
    	  &= I(U;Y|\mathbb{I}_r) + I(U;\mathbb{I}_r)\\
	  & \stackrel{(b)}{=} I(U;Y|\mathbb{I}_r) \\
	  & = \alpha I(U;Y|\mathbb{I}_r = 1) + (1-\alpha) I(U;Y|\mathbb{I}_r = 0)\\
	  & \stackrel{(c)}{=} I(U;Y|\mathbb{I}_r = 1),
\end{align*}
where step $(a)$ folllows since $U-Y-\mathbb{I}_r$. Step $(b)$ follows from $I(U;\mathbb{I}_r) = 0$. Step $(c)$ holds since $H(U|\mathbb{I}_r = 1)=H(U|\mathbb{I}_r = 0)$ and $U-Y-\mathbb{I}_r$. Therefore, for any $U-Y-X$, it follows that
\[
s(Y;X) = \sup_{U-Y-X} \frac{\alpha I(U;X|I_r=1)}{I(U;Y|I_r=1)} = \alpha\ s(Y;X_r).
\]

\subsection{Noisy potential correlation in Example~\ref{example-noisy}}
\label{append-nr}
Let
\[
U = X + Z,\ \  Z \sim \mathcal{N}(0, \sigma_1^2).
\]
Consider
\[
\defMI \ge \sup_{\sigma_1^2 \ge 0} \frac{\log \frac{(1+\sigma_1^2)}{(1+\sigma_1^2)-\rho^2}}{H(\a)/\a + \log(1+1/\sigma_1^2)}.
\]

The inequality~\eqref{ex-noisy} follows by choosing $\sigma_1^2 = (1-\rho^2)/\rho^2$.

\subsection{Noisy discrete potential correlation in Example~\ref{example-discrete-noisy}}
\label{append-ndr}
The inequality~\eqref{noise-s} follows by choosing $r(x) = \mathbb{I}_{\{X=1\}}$ in~\eqref{defKL-dicrete}.
To show~\eqref{noise-rho}, we show that
\begin{align}\label{rho-r}
{\rm mCor}(X_r,Y) = 1-\frac{k}{k-1}\e.
\end{align}
The rest follows because ${\rm mCor}(X;Y) = \sqrt{\a}\ {\rm mCor}(X_r,Y)$ by Proposition~\ref{prop-max}. To show~\eqref{rho-r}, we use the fact that maximal correlation is the second eigenvalue of $Q = P_X^{-1/2} P_{XY} P_Y^{-1/2}$ (see~\cite{Kumar2010} for a detailed proof). We can easily show that
\[
Q = \left(1-\frac{k}{k-1}\e\right) I + \frac{\e}{k-1} 1 1^T.
\]
First singular vector of $Q$ is $P_X^{1/2} = 1/\sqrt{k}$. Second singular vector $u_2$ is orthogonal to $1/\sqrt{k}$. The equation~\eqref{rho-r} follows because ${\rm mCor}(X_r;Y) = u_2^T Q u_2 = u_2^T (1-k\epsilon/(k-1))u_2$.

\subsection{Proof of Proposition~\ref{propo:multi}}
\label{sec:multi-proof}

We first prove the second part of proposition: the hypercontractivity coefficient $\sqrt{s(X;Y)}$ satisfies Axioms 1-4, 5', and 6. It follows immediately from Theorem~\ref{thm:axiom} that $\sqrt{s(X;Y)}$ satisfies Axioms 1-4 and 6 because in the proof of Theorem~\ref{thm:axiom} -- 1-4 and 6, the same argument holds for random vectors $X$ and $Y$. We can show that that $\sqrt{s(X;Y)}$ satisfies Axiom 5' using results from~\cite{Chechik--Amir--Naftali--Yair2005}. In~\cite{Chechik--Amir--Naftali--Yair2005}, it is shown that that as we increase $\beta$ starting from zero, $\min\{I(U;X) - \beta I(U;Y)\}$ departs form zero at $\beta = 1/\| \Sigma_X^{-1/2} \Sigma_{XY}\Sigma_Y^{-1/2} \|^2$ for jointly Gaussian random vectors $X$ and $Y$. This result implies that $\sqrt{s(X;Y)} = \| \Sigma_X^{-1/2} \Sigma_{XY}\Sigma_Y^{-1/2} \|$.

To show that maximal correlation of two random vectors satisfies Axioms 1-4, 6', and 7', we can follow the same arguments for showing that maximal correlation for two random variables satisfies Axioms 1-4, 6', and 7' by~\cite{Renyi1959}.  To show that maximal correlation satisfies Axiom 5', note that maximal correlation is upper bounded by hypercontractivity as shown in Remark~\ref{rem:connect} in Section~\ref{sec2.3}: hence mCor$(X;Y) \le \| \Sigma_X^{-1/2} \Sigma_{XY}\Sigma_Y^{-1/2} \|$ for a jointly Gaussian $X,Y$. Equality holds because mCor$(X,Y)$ is lower bounded by its canonical correlation, which is $\| \Sigma_X^{-1/2} \Sigma_{XY}\Sigma_Y^{-1/2} \|$ for jointly Gaussian random vectors $(X,Y)$~\cite{Chechik--Amir--Naftali--Yair2005}.

\subsection{Proof of Theorem~\ref{thm:convergence}}
\label{sec:proof-consistency}

We begin with the following assumptions:
\begin{itemize}
    \item[$(a)$] There exist finite constants $C_1 < C'_1 < C'_2 < C_2$ such that the ratio of the optimal $r_x^*$ and the true $p_x$ satisfies $r^*_x(x)/p_x(x) \in [C'_1, C'_2]$ for every $x \in \mathcal{X}$.
    \item[$(b)$] There exist finite constants $C'_0 > C_0 > 0$ such that the KL divergence $D(r_x^*||p_x) > C'_0$.
\end{itemize}

With a little abuse of notations, we define $s(r_x) = D(r_y||p_y)/D(r_x||p_x)$ and $\widehat{s}({\bf w}) = {\bf w}^T {\bf A} \log ({\bf A}^T {\bf w}) / {\bf w}^T \log {\bf w}$. Therefore, $s(X;Y) = \max_{r_x \in R} s(r_x)$ and $\widehat{s}_{\Delta}(X;Y) = \max_{{\bf w} \in T_{\Delta}} \widehat{s}({\bf w})$. Here $R$ is the probability simplex over all $r_x$. We want to bound the error $|\widehat{s}_{\Delta}(X;Y) - s(X;Y)|$. 
First, consider the quantity:
\begin{eqnarray}
s_{\Delta}(X;Y) &\equiv& \max_{r_x \in T_{\Delta}(R)} s(r_x) \;,
\end{eqnarray}
where the constraint set $T_{\Delta}(R)$ is defined as:
\begin{eqnarray}
T_{\Delta}(R) = \{r_x \in \mathbb{R}^{|\mathcal{X}|}: [ ( r_x(x)/p_x(x)) ] \in T_{\Delta} \textrm{ and }\sum_{x \in \mathcal{X}} r_x(x) \in [1- {|\mathcal{X}|}\Delta,1+{|\mathcal{X}|}\Delta]\}
\end{eqnarray}
Now we rewrite the error term as
\begin{eqnarray}
\Big|\, \widehat{s}_{\Delta}(X;Y) - s(X;Y) \,\Big| &\leq& |s_{\Delta}(X;Y) - s(X;Y)|  + |\widehat{s}_{\Delta}(X;Y) - s_{\Delta}(X;Y)|\;. \label{eq:ub_hc}
\end{eqnarray}

The first error comes from quantization. Let $r^*$ be the maximizer of $s(X;Y)$. By assumption, $r^*(x)/p_x(x) \in [C_1, C_2]$, for all $x$. Since $T_{\Delta}(R)$ is a quantization of the simplex $R$, so there exists an $r_0 \in T_{\Delta}(R)$ such that $|r_0(x) - r^*(x)| < \Delta $ for all $x \in \mathcal{X}$. Now we will bound the difference between $s(r_0)$ and $s(r^*)$ by the following lemma:

\begin{lemma}
    \label{lem:hc_quantization}
    If $r(x)/p(x) \in [C_1,C_2] $ and $r'(x)/p(x) \in [C_1,C_2]$ for all $x \in \mathcal{X}$, and $D(r_x||p_x) > C_0$ and $D(r'_x||p_x) > C_0$, then
    \begin{eqnarray}
    \Big|\, s(r) - s(r') \,\Big| \leq L \max_{x \in \mathcal{X}} |r(x)-r'(x)|\;,
    \end{eqnarray}
    for some positive constant $L$.
\end{lemma}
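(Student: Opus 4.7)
\textbf{Proof proposal for Lemma~\ref{lem:hc_quantization}.} The plan is to show that both the numerator $D(r_y\|p_y)$ and the denominator $D(r_x\|p_x)$ of $s(r)$ are bounded Lipschitz functions of $r$ on the constraint set, and then combine them using the standard quotient bound
\[
\Big|\tfrac{A}{B}-\tfrac{A'}{B'}\Big| \;\leq\; \tfrac{|A|\,|B-B'|}{B B'}+\tfrac{|A-A'|}{B'}.
\]

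First, I would transfer the boundedness assumption from $r_x$ to the induced $r_y$. Writing $r_y(y)=\sum_x p_{y\mid x}(y\mid x)\,r_x(x)$, one sees that $r_y(y)/p_y(y)$ is a convex combination of values $r_x(x)/p_x(x)$ with weights $p_{y\mid x}(y\mid x)p_x(x)/p_y(y)$, hence also lies in $[C_1,C_2]$; the same applies to $r'_y$. Moreover,
\[
|r_y(y)-r'_y(y)|\;\leq\;\sum_x p_{y\mid x}(y\mid x)\,|r(x)-r'(x)|\;\leq\;|\mathcal{X}|\,\max_x|r(x)-r'(x)|.
\]

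Next, set $f(t)=t\log t$ on $[C_1,C_2]$. Since $f'(t)=1+\log t$ is bounded on this interval, $f$ is Lipschitz with some constant $L_f=L_f(C_1,C_2)$. Writing $D(r_x\|p_x)=\sum_x p_x(x)\,f(r_x(x)/p_x(x))$, I would deduce
\[
|D(r_x\|p_x)-D(r'_x\|p_x)|\;\leq\;L_f\sum_x|r(x)-r'(x)|\;\leq\;L_f|\mathcal{X}|\,\max_x|r(x)-r'(x)|,
\]
and, using the step on $r_y$ above,
\[
|D(r_y\|p_y)-D(r'_y\|p_y)|\;\leq\;L_f\sum_y|r_y(y)-r'_y(y)|\;\leq\;L_f|\mathcal{X}||\mathcal{Y}|\,\max_x|r(x)-r'(x)|.
\]
The same boundedness of $f$ also gives a uniform upper bound $M=\max(|f(C_1)|,|f(C_2)|)$ on both $D(r_y\|p_y)$ and $D(r'_y\|p_y)$.

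Finally, plugging $A=D(r_y\|p_y)$, $B=D(r_x\|p_x)$ (and primed versions) into the quotient inequality above and using the hypothesis $B,B'\geq C_0$ gives
\[
|s(r)-s(r')|\;\leq\;\Big(\tfrac{M L_f|\mathcal{X}||\mathcal{Y}|}{C_0^2}+\tfrac{L_f|\mathcal{X}|}{C_0}\Big)\max_x|r(x)-r'(x)|,
\]
which is the claimed Lipschitz bound with $L$ depending only on $C_0,C_1,C_2,|\mathcal{X}|,|\mathcal{Y}|$. The only delicate point is verifying that $r_y/p_y\in[C_1,C_2]$ (so that the $t\log t$ Lipschitz argument transfers to the $y$-side) and that both denominators are bounded below by $C_0$; both are direct consequences of the hypotheses, so no substantial obstacle is expected.
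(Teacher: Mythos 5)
Your proposal is correct, and it takes a mildly different route from the paper's. The paper differentiates $s(r)$ directly: it applies the quotient rule to $D(r_y\|p_y)/D(r_x\|p_x)$, bounds $\partial D(r_x\|p_x)/\partial r_x(x)=\log(r_x(x)/p_x(x))+1$ and $\partial D(r_y\|p_y)/\partial r_x(x)=\int p_{y|x}(y|x)\bigl(\log(r_y(y)/p_y(y))+1\bigr)\,dy$ by $\max\{|\log C_1|,|\log C_2|\}+1$, and then invokes a first-order expansion $|s(r)-s(r')|\le\sum_x|\partial s(r)/\partial r_x(x)|\,|r(x)-r'(x)|$. You instead bound the increments of numerator and denominator separately via the Lipschitzness of $t\mapsto t\log t$ on $[C_1,C_2]$ and combine them with the algebraic quotient-difference identity; the underlying estimates are the same in both proofs (everything reduces to $\max\{|\log C_1|,|\log C_2|\}+1$, an upper bound on $D(r_y\|p_y)$, and the lower bound $C_0$ on the denominator). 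Your route has a small technical advantage: for the paper's first-order bound to be rigorous, the gradient estimate must hold along the whole segment joining $r$ and $r'$, and while the ratio constraint $r/p\in[C_1,C_2]$ is preserved under convex combination, the lower bound $D(\cdot\|p_x)>C_0$ is not automatically preserved there (convexity of KL divergence gives upper, not lower, bounds on the segment); your argument uses the hypotheses only at the two endpoints. Two minor fixes to your write-up: (i) the $|\mathcal{Y}|$ factor is unnecessary and would be problematic if $\mathcal{Y}$ is continuous (as the paper's integral notation suggests) --- sum over $y$ first to get $\sum_y|r_y(y)-r'_y(y)|\le\sum_x|r(x)-r'(x)|$ with no $|\mathcal{Y}|$ dependence; (ii) $\max_{t\in[C_1,C_2]}|t\log t|$ need not be attained at an endpoint (the function has an interior extremum at $t=1/e$), so define $M$ as the maximum over the whole interval rather than $\max(|f(C_1)|,|f(C_2)|)$. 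Neither issue affects the conclusion.
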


Next we have:
\begin{eqnarray}
s(X;Y) &=& s(r^*) \leq s(r_0) +  L \max_{x \in \mathcal{X}} |r_0(x)-r^*(x)| \,\notag\\
&\leq& \max_{r \in T_{\Delta}(R)} s(r) + L \Delta = s_{\Delta}(X;Y) + L \Delta.
\end{eqnarray}

Similarly, let $r^{**}$ be the maximizer of $s_{\Delta}(X;Y)$, we can also find a $r_1 \in R$ such that $|r_1(x) - r^{**}(x)| < \Delta $ for all $x \in \mathcal{X}$. Using Lemma~\ref{lem:hc_quantization} again, we will obtain $s_{\Delta}(X;Y) \leq s(X;Y) + L \Delta$. Therefore, the quantization error is bounded by $O(\Delta)$ with probability 1.
\\

Now consider the second term. Upper bound on the second term relies on the convergence of estimation of $s$. We claim that for given $r_x$, the estimator is convergent in probability, i.e.,

\begin{lemma}
    \label{lem:hc_convergent}
    \begin{eqnarray}
    \lim_{N \to \infty} \Pr \left(\, \Big|\, \widehat{s}({\bf w}_r) - s(r_x) \,\Big| > \varepsilon \,\right) =  0\;.
    \end{eqnarray}
\end{lemma}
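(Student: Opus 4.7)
\subsection*{Proof proposal for Lemma~\ref{lem:hc_convergent}}

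The plan is to decompose $\widehat{s}({\bf w}_r) - s(r_x)$ into contributions from the numerator and the denominator and show that each converges in probability; since the denominator $D(r_x\|p_x)$ is bounded away from zero by assumption (b), the continuous mapping theorem then delivers convergence of the ratio. Write $w_i = r_x(X_i)/p_x(X_i)$ and $v(y) = r_y(y)/p_y(y)$, and set $\widehat{v}_j = (1/n)\sum_{i=1}^n A_{ji} w_i$ with $A_{ji} = p_{xy}(X_i,Y_j)/(p_x(X_i)p_y(Y_j))$. Then the estimator (after absorbing the $1/n$ normalizations) takes the form
\begin{equation*}
\widehat{s}({\bf w}_r) \;=\; \frac{\tfrac{1}{n}\sum_j \widehat{v}_j \log \widehat{v}_j}{\tfrac{1}{n}\sum_i w_i \log w_i},
\qquad
s(r_x) \;=\; \frac{\E_{Y\sim p_y}[\,v(Y)\log v(Y)\,]}{\E_{X\sim p_x}[\,w(X)\log w(X)\,]}.
\end{equation*}

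The denominator is the easy piece: $w_i \log w_i$ are i.i.d.\ bounded random variables (since $w_i \in [C'_1, C'_2] \subset (0,\infty)$ by assumption (a), and $t\log t$ is bounded on this interval), so the weak law of large numbers gives $(1/n)\sum_i w_i \log w_i \xrightarrow{\,p\,} D(r_x\|p_x)$, which is at least $C'_0 > 0$ by assumption (b).

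The numerator is the main obstacle and requires a two-stage argument. First, I would show that $\widehat{v}_j$ concentrates around $v(Y_j)$ conditionally on $Y_j$. Writing $\widehat{v}_j = (1/n)A_{jj}w_j + \tfrac{n-1}{n}\bar{v}_j$ with $\bar{v}_j = (1/(n-1))\sum_{i\ne j} A_{ji} w_i$, the diagonal term is $O(1/n)$ and negligible, while for $i \ne j$ the pairs $(X_i,Y_i)$ are independent of $Y_j$, so $\E[A_{ji}w_i\mid Y_j] = v(Y_j)$ by the same calculation that motivates the estimator in Section~\ref{sec:estimate}. Provided $A_{ji}w_i$ is bounded (which follows from boundedness of $w_i$ together with a mild boundedness condition on $p_{xy}/(p_xp_y)$), Hoeffding's inequality yields $\Pr(|\bar{v}_j - v(Y_j)| > \varepsilon \mid Y_j) \le 2\exp(-c n \varepsilon^2)$, and by a union bound over the $n$ indices $j$ we obtain $\max_j |\widehat{v}_j - v(Y_j)| \xrightarrow{\,p\,} 0$. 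Second, the map $t \mapsto t\log t$ is Lipschitz on any compact subset of $(0,\infty)$, and $v(Y_j)$ lies in a bounded interval bounded away from zero as a consequence of assumption (a) (since $v(y) = \int p_{y|x}(y|x)(r_x(x)/p_x(x))\,dx \in [C'_1, C'_2]$). Combining these gives
\begin{equation*}
\Big|\,\tfrac{1}{n}\sum_j \widehat{v}_j\log\widehat{v}_j - \tfrac{1}{n}\sum_j v(Y_j)\log v(Y_j)\,\Big| \;\le\; L\,\max_j |\widehat{v}_j - v(Y_j)| \;\xrightarrow{\,p\,}\; 0,
\end{equation*}
while $(1/n)\sum_j v(Y_j)\log v(Y_j) \xrightarrow{\,p\,} D(r_y\|p_y)$ is another instance of the weak law on bounded i.i.d.\ variables.

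Putting the two pieces together, numerator and denominator both converge in probability to their population counterparts. Since the denominator limit is bounded away from zero, Slutsky's theorem (equivalently, the continuous mapping theorem applied to the bivariate limit) yields $\widehat{s}({\bf w}_r) \xrightarrow{\,p\,} s(r_x)$, which is the statement of the lemma. The hard step is the uniform-in-$j$ concentration of $\widehat{v}_j$ around $v(Y_j)$, because the various $\widehat{v}_j$ share the same pool of sample indices $i$; the union bound above suffices under boundedness, but a more refined argument using Bernstein-type inequalities or a $U$-statistic decomposition would be needed if one wanted explicit rates or to relax the boundedness assumption.
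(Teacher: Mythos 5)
Your proposal is correct, and while its skeleton matches the paper's proof (split into $\widehat{D}(r_y\|p_y)$ and $\widehat{D}(r_x\|p_x)$, weak law of large numbers for the denominator, show $\widehat{v}_j \approx v(Y_j)$, then pass through $t\mapsto t\log t$ and take the ratio), your treatment of the numerator is genuinely stronger than what the paper does. The paper merely invokes the law of large numbers to claim $\widehat{v}_j \to v(Y_j)$ in probability for each fixed $j$, and then asserts that $\frac{1}{n}\sum_j \widehat{v}_j\log\widehat{v}_j$ converges to $\frac{1}{n}\sum_j v(Y_j)\log v(Y_j)$ — but this interchange is not automatic, since the number of summands grows with $n$ and each $\widehat{v}_j$ depends on the entire sample (including $Y_j$ itself). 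Your route repairs exactly this: you isolate the $O(1/n)$ diagonal term $\tfrac{1}{n}A_{jj}w_j$, condition on $Y_j$ so that the off-diagonal terms are i.i.d.\ with conditional mean $v(Y_j)$, apply Hoeffding plus a union bound over the $n$ indices to get $\max_j|\widehat{v}_j - v(Y_j)| \xrightarrow{p} 0$, and then transfer via Lipschitzness of $t\log t$ on a compact interval bounded away from zero — using that $v(y)\in[C_1',C_2']$. What this buys is a rigorous and quantitative argument (exponential rates are available) at the cost of an explicit boundedness hypothesis on $A_{ji}w_i$; under the theorem's stated simplifying setting (discrete finite $\mathcal{X}$, known matrix ${\bf A}$, $w_i$ bounded by assumption (a)) that hypothesis is benign. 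One minor slip: your displayed identity $v(y)=\int p_{y|x}(y|x)\bigl(r_x(x)/p_x(x)\bigr)\,dx$ is missing the normalization by $p_y(y)$; the correct representation is $v(y)=\int p_{x|y}(x|y)\bigl(r_x(x)/p_x(x)\bigr)\,dx$, a conditional expectation of $r_x/p_x$, which is what actually justifies $v(y)\in[C_1',C_2']$ — your conclusion stands once this is fixed.
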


Here ${\bf w}_r(x) = r_x(x)/p_x(x)$. Since the set $T_{\Delta}(R)$ is finite, by union bound, we have:
\begin{eqnarray}
&&\lim_{N \to \infty} \Pr \big(\, \forall r \in T_{\Delta}(R), \Big|\, \widehat{s}({\bf w}_r) - s(r_x) \,\Big| \leq \varepsilon \,\big) \,\notag\\
&\geq& 1 - |T_{\Delta}(R)| \lim_{N \to \infty} \Pr \left(\, \Big|\, \widehat{s}({\bf w}_r) - s(r_x) \,\Big| \leq \varepsilon \,\right) = 1. \label{eq:hc_cond1}
\end{eqnarray}
Also, by the strong law of large numbers, we have that
\begin{eqnarray}
\lim_{N \to \infty} \Pr \left(\,\forall x \in \mathcal{X}, |p_x(x) - \frac{n_x}{n}| <  \frac{\Delta}{C_2|\mathcal{X}|} \,\right) = 1. \label{eq:hc_cond2}
\end{eqnarray}
where $n_x = {\rm card}\{i \in [n]: x_i = x\}$. We claim that if the events inside the probability in~\eqref{eq:hc_cond1} and~\eqref{eq:hc_cond2} happen simultaneously, then $|\widehat{s}_{\Delta}(X;Y) - s_{\Delta}(X;Y)| < \varepsilon + O(\Delta)$, which implies the desired claim.
\\

Let ${\bf w}^* = \arg\max_{{\bf w} \in T_{\Delta}} \widehat{s}({\bf w})$. Define $r_2(x) = {\bf w}^*(x) p_x(x)$. Since $[r_2(x)/p_x(x)] \in T_{\Delta}$ for all $x$ and
\begin{eqnarray}
\Big|\, \sum_{x \in \mathcal{X}} r_2(x) - 1 \,\Big| &=& \Big|\, \sum_{x \in \mathcal{X}} {\bf w}^*_x (p_x(x) - \frac{n_x}{n})\,\Big| + \frac{\Delta|\mathcal{X}|}{2} \,\notag\\
	&\leq&  \,|\mathcal{X}| \,  \left(\, \frac{\Delta}{2} + C_2 \max_{x \in \mathcal{X}} \Big|\, p_x(x) - \frac{n_x}{n} \,\Big| \,\right) \,\notag\\
	&\leq& ({|\mathcal{X}|}/2+1)  \Delta \;.
	\label{eq:q2}
\end{eqnarray}
Therefore, $r_2 \in T_{\Delta}(R)$, so
\begin{eqnarray}
\widehat{s}_{\Delta}(X;Y) &=& \widehat{s}({\bf w}^*) \leq s(r_2) + \varepsilon \leq s_{\Delta}(X;Y) + \varepsilon.
\end{eqnarray}

On the other hand, consider $r^{**} = \arg\max_{r_x \in T_{\Delta}(R)} s(r_x)$ again, and define ${\bf w}_0(x) = r^{**}(x)/p_x(x)$. We know that ${\bf w}_0 \in T_{\Delta}^{|\mathcal{X}|}$ but not necessarily $\sum_{i=1}^n {\bf w}_0(x_i) = n$. But we claim that the sum is closed to $n$ as follows: 
\begin{eqnarray}
\Big|\, \sum_{i=1}^n {\bf w}_0(x_i) - n \,\Big| &=& \Big|\, \sum_{x \in \mathcal{X}} \frac{n_x r^{**}(x)}{p_x(x)} - n| \,\notag\\
&\leq& n \max_{x \in \mathcal{X}} \big\{\, \frac{r^{**}(x)}{p_x(x)} \Big|\, \frac{n_x}{n} - p_x(x) \,\Big| \,\big\}\,\notag\\
&\leq& n C_2 \frac{\Delta}{C_2|\mathcal{X}|} < n\Delta
\end{eqnarray}
so we can find a ${\bf w}_1 \in T_{\Delta}(R)$ such that $|{\bf w}_1(x) - {\bf w}_0(x)| \leq \Delta$ for all $x$. Let $r_4(x) = {\bf w}_1(x) p_x(x)$, similar as~\eqref{eq:q2}, we know that $r_4 \in T_{\Delta}(R)$. Moreover, $\Big|\, r_4(x) - r^{**}(x) \,\Big| \leq p_x(x) \Big|\, {\bf w}_1(x) - {\bf w}_0(x) \,\Big| \leq \Delta$ for all $x$.
Then we have
\begin{eqnarray}
s_{\Delta}(X;Y) &=& s(r^{**}) \leq s(r_4) + L \max_{x \in \mathcal{X}} |r^{**}(x) - r_4(x)| \,\notag\\
&\leq& \widehat{s}({\bf w}_1) + \varepsilon + L \Delta = \widehat{s}_{\Delta}(X;Y) + \varepsilon + L \Delta.
\end{eqnarray}
We conclude that $|\widehat{s}_{\Delta}(X;Y)- s_{\Delta}(X;Y)| < \varepsilon + O(\Delta)$; thus our proof is complete.

\subsubsection{Proof of Lemma~\ref{lem:hc_quantization}}
We will show that for any $x \in \mathcal{X}$, we have $|\partial s(r_x) / \partial r_x(x)|\leq L/|\mathcal{X}|$ for some $L$. Therefore,
\begin{eqnarray}
\Big|\, s(r) - s(r') \,\Big| &\leq& \sum_{x \in \mathcal{X}} |\frac{\partial s(r)}{\partial r_x(x)}| \, |r_x(x) - r'_x(x)| \,\notag\\
&\leq& L \max_{x \in \mathcal{X}} |r_x(x) - r'_x(x)| \;.
\end{eqnarray}

The gradient can be written as 
\begin{eqnarray}
\frac{\partial s(r)}{\partial r_x(x)} &=& \frac{\partial}{\partial r_x(x)} \frac{D(r_y||p_y)}{D(r_x||p_x)} \,\notag\\
&=& \frac{1}{D^2(r_x||p_x)} \left(\, \frac{\partial D(r_y||p_y)}{\partial r_x(x)} D(r_x||p_x) - \frac{\partial D(r_x||p_x)}{\partial r_x(x)} D(r_y||p_y)\,\right) \;.
\end{eqnarray}
Since
\begin{eqnarray}
\frac{\partial D(r_x||p_x)}{\partial r_x(x)} &=& \log \frac{r_x(x)}{p_x(x)} + 1 \leq \max\{|\log C_1|, |\log C_2|\} + 1\,\notag\\
\frac{\partial D(r_y||p_y)}{\partial r_x(x)} &=& \int \frac{\partial r_y(y)}{\partial r_x(x)} \frac{\partial D(r_y||p_y)}{\partial r_y(y)} dy \,\notag\\
&=& \int p_{y|x}(y|x) (\log \frac{r_y(y)}{p_y(y)} + 1) dy \leq \max\{|\log C_1|, |\log C_2|\} + 1
\end{eqnarray}
Therefore, we have
\begin{eqnarray}
|\frac{\partial s(r)}{\partial r_x(x)}| &\leq& (\max\{|\log C_1|, |\log C_2|\} + 1) \frac{D(p_x||r_x) +D(r_y||p_y)}{D^2(r_x||p_x)} \,\notag\\
&\leq& \frac{2(\max\{|\log C_1|, |\log C_2|\} + 1)}{D(r_x||p_x)} \,\notag\\
&\leq& \frac{2(\max\{|\log C_1|, |\log C_2|\} + 1)}{C_0}
\end{eqnarray}
Since $C_0, C_1, C_2$ are constants and $| \mathcal{X}|$ is finite, our proof is complete by letting $L = 2 |\mathcal{X}| (\max\{|\log C_1|, |\log C_2|\} + 1)/C_0$.

\subsubsection{Proof of Lemma~\ref{lem:hc_convergent}}

Note that $\widehat{s}({\bf w}_r) = {\bf w}^T {\bf A} \log ({\bf A}^T {\bf w}) / {\bf w}^T \log {\bf w}$. Define $\widehat{D}(r_y||p_y) = {\bf w}^T {\bf A} \log ({\bf A}^T {\bf w})$ and $\widehat{D}(r_x||p_x) = {\bf w}^T \log {\bf w}$. We will prove that both $\widehat{D}(r_y||p_y)$ converges to $D(r_y||p_y)$ and $\widehat{D}(r_x||p_x)$ converges to $D(r_x||p_x)$ in probability. Since $D(r_x||p_x) > 0$ and $\widehat{D}(r_x||p_x) > 0$ with probability 1, we obtain that $\widehat{s}({\bf w}_r)$ converges to $D(r_y||p_y)/D(r_x||p_x) = s(r_x)$ in probability.

The convergence $\widehat{D}(r_x||p_x)$ comes from law of large number. Since $\widehat{D}(r_x||p_x) = \frac{1}{n} \sum_{i=1}^n \frac{r_x(X_i)}{p_x(X_i)} \log \frac{r_x(X_i)}{p_x(X_i)}$ and $D(r_x||p_x) = \mathbb{E}_{X \sim p_x} \left[ \frac{r_x(X)}{p_x(X)} \log \frac{r_x(X)}{p_x(X)} \right]$, the weak law of large number shows the convergence in probability.

For the convergence of $\widehat{D}(r_y||p_y)$. Consider the vector ${\bf v} = {\bf A}^T {\bf w}$, we have
\begin{eqnarray*}
    v_j &=&  \frac{1}{n} \sum_{i=1}^n \frac{p_{xy}(X_i,Y_j)}{p_x(X_i)p_y(Y_j)} w_i = \frac{1}{n} \sum_{i=1}^n \frac{p_{y|x}(Y_j|X_i)}{p_y(Y_j)} \frac{r_x(X_i)}{p_x(X_i)}.
\end{eqnarray*}
On the other hand, for fixed $Y_j = y$, we have
\begin{eqnarray*}
    \frac{r_y(y)}{p_y(y)} = \frac{\mathbb{E}_{X \sim p_x} \left[\, p_{y|x}(y|X) \frac{r_x(X)}{p_x(X)}\,\right]}{p_y(y)} = \mathbb{E}_{X \sim p_x} \left[\, \frac{p_{y|x}(y|X)}{p_y(y)} \frac{r_x(X)}{p_x(X)}\,\right] \;.
\end{eqnarray*}
Therefore, by law of large number, we conclude that $v_j$ converges to $\frac{r_y(Y_j)}{p_y(Y_j)}$ in probability. Hence, $\widehat{D}(r_y||p_y) = \frac{1}{n} \sum_{j=1}^n v_j \log v_j$ converges to $\frac{1}{n} \sum_{j=1}^n \frac{r_y(Y_j)}{p_y(Y_j)} \log \frac{r_y(Y_j)}{p_y(Y_j)}$ in probability. Furthermore, $\frac{1}{n} \sum_{j=1}^n \frac{r_y(Y_j)}{p_y(Y_j)} \log \frac{r_y(Y_j)}{p_y(Y_j)}$ converges to $D(r_y||p_y) = \mathbb{E}_{Y \sim p_y} \left[ \frac{r_y(Y)}{p_y(Y)} \log \frac{r_y(Y)}{p_y(Y)} \right]$ in probability, by law of large number again. Therefore, we conclude that $\widehat{D}(r_y||p_y)$ converges to $D(r_y||p_y)$ in probability.  



\bibliographystyle{IEEEtran}
{\small
\bibliography{MLbib}}

\newcommand{\noopsort}[1]{}
\begin{thebibliography}{10}
\providecommand{\url}[1]{#1}
\csname url@samestyle\endcsname
\providecommand{\newblock}{\relax}
\providecommand{\bibinfo}[2]{#2}
\providecommand{\BIBentrySTDinterwordspacing}{\spaceskip=0pt\relax}
\providecommand{\BIBentryALTinterwordstretchfactor}{4}
\providecommand{\BIBentryALTinterwordspacing}{\spaceskip=\fontdimen2\font plus
\BIBentryALTinterwordstretchfactor\fontdimen3\font minus
  \fontdimen4\font\relax}
\providecommand{\BIBforeignlanguage}[2]{{%
\expandafter\ifx\csname l@#1\endcsname\relax
\typeout{** WARNING: IEEEtran.bst: No hyphenation pattern has been}%
\typeout{** loaded for the language `#1'. Using the pattern for}%
\typeout{** the default language instead.}%
\else
\language=\csname l@#1\endcsname
\fi
#2}}
\providecommand{\BIBdecl}{\relax}
\BIBdecl

\bibitem{Pearson1895}
K.~Pearson, ``Note on regression and inheritance in the case of two parents,''
  \emph{Proceedings of the Royal Society of London}, vol.~58, pp. 240--242,
  1895.

\bibitem{Hirschfeld1935}
H.~Hirschfeld, ``A connection between correlation and contingency,''
  \emph{Mathematical Proceedings of the Cambridge Philosophical Society}, pp.
  31(4), pp. 520--524., 1935.

\bibitem{gebelein1941statistische}
H.~Gebelein, ``Das statistische problem der korrelation als variations-und
  eigenwertproblem und sein zusammenhang mit der ausgleichsrechnung,''
  \emph{ZAMM-Journal of Applied Mathematics and Mechanics/Zeitschrift f{\"u}r
  Angewandte Mathematik und Mechanik}, vol.~21, no.~6, pp. 364--379, 1941.

\bibitem{Renyi1959}
A.~R{\'e}nyi, ``On measures of dependence,'' \emph{Acta mathematica hungarica},
  vol.~10, no. 3-4, pp. 441--451, 1959.

\bibitem{ReshefEtAl2011}
D.~N. Reshef, Y.~A. Reshef, H.~K. Finucane, S.~R. Grossman, G.~McVean, P.~J.
  Turnbaugh, E.~S. Lander, M.~Mitzenmacher, and P.~C. Sabeti, ``Detecting novel
  associations in large data sets,'' \emph{Science}, vol. 334, no. 6062, pp.
  1518--1524, 2011.

\bibitem{Szekely--Rizzo--Bakirov2007}
G.~J. Sz\'{e}kely, M.~L. Rizzo, and N.~K. Bakirov, ``Measuring and testing
  dependence by correlation of distances,'' \emph{Ann. Statist.}, vol.~35,
  no.~6, pp. 2769--2794, 12 2007.

\bibitem{Krishnaswamy14}
S.~Krishnaswamy, M.~H. Spitzer, M.~Mingueneau, S.~C. Bendall, O.~Litvin,
  E.~Stone, D.~Pe{\textquoteright}er, and G.~P. Nolan, ``Conditional
  density-based analysis of {T} cell signaling in single-cell data,''
  \emph{Science}, 2014.

\bibitem{Tishby99theinformation}
N.~Tishby, F.~C. Pereira, and W.~Bialek, ``The information bottleneck method,''
  in \emph{Proc. 37th Ann. Allerton Conf. Comm. Control Comput.}, 1999, pp.
  368--377.

\bibitem{dhillon2003adivisive}
I.~S. Dhillon, S.~Mallela, and R.~Kumar, ``A divisive information-theoretic
  feature clustering algorithm for text classification,'' \emph{Journal of
  Machine Learning Research (JMLR)}, vol.~3, 2003.

\bibitem{Bekkerman2003}
R.~Bekkerman, R.~El-Yaniv, N.~Tishby, and Y.~Winter, ``Distributional word
  clusters vs. words for text categorization,'' \emph{J. Mach. Learn. Res.},
  vol.~3, pp. 1183--1208, 2003.

\bibitem{Anantharam--Gohari--Kamath--Nair2013}
V.~Anantharam, A.~A. Gohari, S.~Kamath, and C.~Nair, ``On maximal correlation,
  hypercontractivity, and the data processing inequality studied by erkip and
  cover,'' \emph{CoRR}, vol. abs/1304.6133, 2013.

\bibitem{quantum}
E.~R. Dilvies, I.~L.~G. B, and B.~Simone, ``Hypercontractivity: A bibliographic
  review,'' \emph{Ideas and methods in quantum and statistical physics (Oslo,
  1988)}, pp. 370--89, 1992.

\bibitem{nelson1973construction}
E.~Nelson, ``Construction of quantum fields from markoff fields,''
  \emph{Journal of Functional Analysis}, vol.~12, no.~1, pp. 97--112, 1973.

\bibitem{KKL}
J.~Kahn, G.~Kalai, and N.~Linial, ``The influence of variables on boolean
  functions,'' in \emph{Proceedings of the 29th Annual Symposium on Foundations
  of Computer Science}, ser. SFCS '88.\hskip 1em plus 0.5em minus 0.4em\relax
  IEEE Computer Society, 1988, pp. 68--80.

\bibitem{o2014analysis}
R.~O'Donnell, \emph{Analysis of boolean functions}.\hskip 1em plus 0.5em minus
  0.4em\relax Cambridge University Press, 2014.

\bibitem{Bonami1970}
A.~Bonami, ``\BIBforeignlanguage{fre}{{\'E}tude des coefficients de fourier des
  fonctions de ${L}^p({G})$},'' \emph{\BIBforeignlanguage{fre}{Annales de
  l'institut Fourier}}, vol.~20, no.~2, pp. 335--402, 1970.

\bibitem{beckner1975inequalities}
W.~Beckner, ``Inequalities in fourier analysis,'' \emph{Annals of Mathematics},
  pp. 159--182, 1975.

\bibitem{gross1975}
L.~Gross, ``Hypercontractivity and logarithmic sobolev inequalities for the
  clifford-dirichlet form,'' \emph{Duke Math. J.}, vol.~42, no.~3, pp.
  383--396, 09 1975.

\bibitem{ahlswede1976spreading}
R.~Ahlswede and P.~G{\'a}cs, ``Spreading of sets in product spaces and
  hypercontraction of the markov operator,'' \emph{The annals of probability},
  pp. 925--939, 1976.

\bibitem{mossel2013reverse}
E.~Mossel, K.~Oleszkiewicz, and A.~Sen, ``On reverse hypercontractivity,''
  \emph{Geometric and Functional Analysis}, vol.~23, no.~3, pp. 1062--1097,
  2013.

\bibitem{private_correspondence}
C.~Nair and S.~Kamath, \emph{Personal communication}, 2016.

\bibitem{deepVIB}
J.~V. D. K.~M. Alexander A.~Alemi, Ian~Fischer, ``Deep variational information
  bottleneck,'' \emph{ICLR}, 2017.

\bibitem{dropout2016}
A.~Achille and S.~Soatto, ``Information dropout: Learning optimal
  representations through noisy computation,'' \emph{ArXiv e-prints.
  1611.01353}, 2016.

\bibitem{nair2014gaussian}
C.~Nair, ``An extremal inequality related to hypercontractivity of {G}aussian
  random variables,'' in \emph{Information Theory and Applications Workshop},
  2014.

\bibitem{Gao16}
W.~Gao, S.~Kannan, S.~Oh, and P.~Viswanath, ``Conditional dependence via
  shannon capacity: Axioms, estimators and applications,'' in \emph{Proceedings
  of The 33rd International Conference on Machine Learning}, 2016, pp.
  2780--2789.

\bibitem{ngiam2011multimodal}
J.~Ngiam, A.~Khosla, M.~Kim, J.~Nam, H.~Lee, and A.~Y. Ng, ``Multimodal deep
  learning,'' in \emph{Proceedings of the 28th international conference on
  machine learning (ICML-11)}, 2011, pp. 689--696.

\bibitem{srivastava2012multimodal}
N.~Srivastava and R.~R. Salakhutdinov, ``Multimodal learning with deep
  boltzmann machines,'' in \emph{Advances in neural information processing
  systems}, 2012, pp. 2222--2230.

\bibitem{andrew2013deep}
G.~Andrew, R.~Arora, J.~Bilmes, and K.~Livescu, ``Deep canonical correlation
  analysis,'' in \emph{International Conference on Machine Learning}, 2013, pp.
  1247--1255.

\bibitem{Makur--Zheng2015}
A.~Makur and L.~Zheng, ``Linear bounds between contraction coefficients for
  f-divergences,'' \emph{CoRR}, vol. abs/1510.01844, 2017.

\bibitem{Witsenhausen}
H.~S. Witsenhausen, ``On sequences of pairs of dependent random variables,''
  \emph{SIAM Journal on Applied Mathematics}, vol.~28, no.~1, pp. 100--113,
  1975.

\bibitem{bell1962mutual}
C.~Bell, ``Mutual information and maximal correlation as measures of
  dependence,'' \emph{The Annals of Mathematical Statistics}, vol.~33, no.~2,
  pp. 587--595, 1962.

\bibitem{nair2014equivalent}
C.~Nair, ``Equivalent formulations of hypercontractivity using information
  measures,'' in \emph{2014 International Zurich Seminar on Communications},
  2014.

\bibitem{Chechik--Amir--Naftali--Yair2005}
G.~Chechik, A.~Globerson, N.~Tishby, and Y.~Weiss, ``Information bottleneck for
  {G}aussian variables,'' \emph{J. Mach. Learn. Res.}, vol.~6, pp. 165--188,
  Dec. 2005.

\bibitem{NCCA}
T.~Michaeli, W.~Wang, and K.~Livescu, ``Nonparametric canonical correlation
  analysis,'' in \emph{Proceedings of the 33rd International Conference on
  International Conference on Machine Learning - Volume 48}, ser. ICML'16,
  2016, pp. 1967--1976.

\bibitem{SimonTibshirani}
N.~{Simon} and R.~{Tibshirani}, ``{Comment on ``Detecting Novel Associations In
  Large Data Sets'' by Reshef Et Al, Science Dec 16, 2011},'' \emph{ArXiv
  e-prints}, Jan. 2014.

\bibitem{GHH12}
M.~Gorfine, R.~Heller, and Y.~Heller, ``Comment on detecting novel associations
  in large data sets,'' \emph{Unpublished (available at
  http://emotion.technion.ac.il/\~gorfinm/filesscience6.pdf on 11 Nov. 2012)},
  2012.

\bibitem{hastings1947}
C.~Hastings, F.~Mosteller, J.~W. Tukey, and C.~P. Winsor, ``Low moments for
  small samples: A comparative study of order statistics,'' \emph{The Annals of
  Mathematical Statistics}, vol.~18, no.~3, pp. 413--426, 09 1947.

\bibitem{McBean1998}
E.~McBean and F.~Rovers, \emph{Statistical Procedures for Analysis of
  Environmental Monitoring Data and Risk Assessment}, ser. Prentice Hall
  Environmental Series.\hskip 1em plus 0.5em minus 0.4em\relax Prentice Hall
  PTR, 1998.

\bibitem{Rustum2007}
R.~Rustum and A.~Adeloye, ``Replacing outliers and missing values from
  activated sludge data using kohonen self-organizing map,'' \emph{Journal of
  Environmental Engineering-asce - J ENVIRON ENG-ASCE}, vol. 133, 09 2007.

\bibitem{Kumar2010}
\BIBentryALTinterwordspacing
G.~Kumar, ``Binary r\'{e}nyi correlation: A simpler proof of witsenhausen's
  result and a tight lower bound,'' \emph{On website}, July 2010. [Online].
  Available:
  \url{http://www.gowthamiitm.com/research/binary_renyi_correlation.pdf}
\BIBentrySTDinterwordspacing

\end{thebibliography}

\end{document}